\newcommand{\cdf}[1]{{#1}}
\DeclarePairedDelimiter\abs{\lvert}{\rvert}%
\DeclarePairedDelimiter\norm{\lVert}{\rVert}%
\newcommand{\nys}{Nystr{\"o}m }
\newcommand{\Var}{\text{Var}}
\newcommand{\ksig}{\kappa_{\rho}}
\newcommand{\RSsig}{R_{S,\rho}}
\newcommand{\cdcd}{(\cdot,\cdot)}
\newcommand{\dist}[1]{\text{dist}(#1)}
\let\oldabs\abs
\def\abs{\@ifstar{\oldabs}{\oldabs*}}
\let\oldnorm\norm
\def\norm{\@ifstar{\oldnorm}{\oldnorm*}}
\title{Posterior Covariance Structures in Gaussian Processes}
\author{Difeng Cai\thanks{Department of Mathematics, Southern Methodist University, Dallas, TX 75205 (\email{ddcai@smu.edu})}
\and Edmond Chow\thanks{School of Computational Science and Engineering, Georgia Institute of Technology, Atlanta, GA 30332 (\email{echow@cc.gatech.edu}) The research of E. Chow is
supported by NSF award OAC 2003683.}
\and Yuanzhe Xi\thanks{Department of Mathematics, Emory University, Atlanta, GA 30322 (\email{yxi26@emory.edu}). The research of Y. Xi is supported by
NSF awards DMS-2038118 and DMS-2338904.}
}
\date{}
\begin{document}
\maketitle
\begin{abstract}
In this paper, we present a comprehensive analysis of the posterior covariance function in Gaussian processes, 
with applications to the posterior covariance matrix.
Our geometric analysis reveals how the Gaussian covariance's bandwidth parameter and the spatial distribution of the observations influence the posterior covariance as well as the corresponding covariance matrix, enabling straightforward identification of areas with high or low covariance {in magnitude}. 
Drawing inspiration from the a posteriori error estimation techniques in adaptive finite element methods, we also propose several indicators to efficiently measure the absolute posterior covariance function, which can be used for efficient covariance matrix approximation and preconditioning. We conduct a wide range of experiments to illustrate our theoretical findings and their practical applications.


\end{abstract}

\begin{keywords}
Covariance matrix, Gaussian process, preconditioning, machine learning, a posteriori error analysis, uncertainty quantification
\end{keywords}

\begin{MSCcodes}
65F08, 65F55, 62G05, 68T01
\end{MSCcodes}


\section{Introduction} 
\label{sec:intro}
Let $\Omega$ be a domain in $\mathbb{R}^d$
and $f:\Omega\to \mathbb{R}$ be an unknown function. A fundamental problem in statistical machine learning is to learn $f$ from possibly noisy observations 
$y_i = f(x_i)+\epsilon_i$,
where $\epsilon_i\sim \mathcal{N}(0,\tau^2)$ is a Gaussian random noise with noise level $\tau\geq 0$ independent of $i$ and
$\tau=0$ corresponds to the noise-free observations. 
Gaussian processes (GPs) are widely utilized for prediction tasks that also require quantifying uncertainty. 
In this framework, GPs model the observations at any locations $S=\{x_i\}_{i=1}^N\subseteq \Omega$ and the unknown function values over any finite set $X_*\subseteq\Omega$ as a (prior) joint Gaussian distribution:
\begin{equation}
\label{eq:GPprior}
    \begin{bmatrix}
        y\\
        f_*
    \end{bmatrix}
    \;\sim \;
    \mathcal{N}\left( 0, \begin{bmatrix}
       K_{SS}+\tau^2 I & K_{SX_*} \\
        K_{X_* S} & K_{X_* X_*} 
    \end{bmatrix}\right),
\end{equation}
where $y\in\mathbb{R}^N$ is the vector of observations $y_i$ $(i=1,\dots,N)$ and $f_*$ is the vector of {predicted values} at $X_*$. This probabilistic approach allows for an effective handling of uncertainty in predictions.
Here 
$$K_{UV} := [\kappa(u,v)]_{\substack{u\in U\\ v\in V}}$$ 
is the (prior) covariance matrix whose rows correspond to $U\subseteq\Omega$ and columns correspond to $V\subseteq\Omega$.
The definition of $K_{UV}$ also allows $U$ or $V$ to be a single point in $\Omega$ and we write $K_{uv} = \kappa(u,v)$ to denote the prior covariance function evaluated at $u,v\in\Omega$.
The (prior) covariance function usually contains a positive parameter that represents the ``strength'' of spatial correlation.
For example, 
in the Gaussian covariance
\cdf{
\begin{equation}
\label{eq:cov}
    \ksig(u,v) = \sigma^2\exp\left(-\frac{\norm{u-v}^2}{2\rho^2}\right),
\end{equation}
the hyperparameter $\rho>0$ is known as the \emph{length scale} or the \emph{bandwidth} parameter, and the parameter $\sigma^2$ denotes the prior variance.
In radial basis function literature \cite{kansa1992,rbf2013fornberg,rbf2015fornberg,RBFbook},
$(\sqrt{2}\rho)^{-1}$ is called the shape parameter.
Note that $\sigma^2$ can be simply viewed as a multiplicative constant in the covariance $\ksig$, while $\rho$ has a more sophisticated effect on $\ksig$ and the posterior distribution of GP.
Since we are interested in studying the impact of $\rho$, unless otherwise stated, we assume in the following that $\sigma=1$.
}
The parameter $\rho$ plays a fundamental role in GPs,
as the performance of GP regression highly hinges on $\rho$ and the best value of $\rho$ is found through training. In this paper, we examine the impact of $\rho$ and the observation set $S$ on several computationally demanding operations involved in performing GPs. Our aim is to develop more efficient and robust matrix operations guided by these rigorous analyses. We primarily focus on the noise-free scenario with $\tau=0$ throughout our analysis. Additionally, we provide a brief discussion on scenarios where $\tau>0$.
\cdf{The Gaussian covariance in \eqref{eq:cov} is a special example of a Mat{\'e}rn covariance \cite{matern1960thesis} and the study in this paper constitutes the first step in analyzing the more general Mat{\'e}rn covariance family \cite{stein1993bayes,stein2012book} commonly used in GP models \cite{gp2006book}.}

Given the observations $(S,y)$ and the prior joint distribution in \eqref{eq:GPprior} with prior covariance in \eqref{eq:cov},
a GP model computes the posterior distribution of $f_*$ as follows:
\begin{equation}
    \label{eq:GPpost}
    f_*|X_*,S,y \;\sim\; \mathcal{N}\left(K_{X_* S}K_{SS}^{-1}y,\; \RSsig(X_*,X_*) \right),
\end{equation}
where the \emph{posterior covariance function} $\RSsig\cdcd$ conditioned on $S$ is defined as
\begin{equation}
   \label{eq:R}
    \RSsig\cdcd:\mathbb{R}^d\times\mathbb{R}^d\;\to\; \mathbb{R},\quad (u,v)\;\to\; \ksig(u,v)-K_{uS}K_{SS}^{-1}K_{Sv}. 
\end{equation}
\cdf{The \emph{posterior variance} at a point $x$ is 
\begin{equation}
\label{eq:Var}
    \Var(x) = \RSsig(x,x) = \ksig(x,x)-K_{xS}K_{SS}^{-1}K_{Sx}.
\end{equation}
}

The study of the posterior covariance function $\RSsig\cdcd$ is crucial to developing scalable covariance matrix calculations in GPs. For example, when discretized at finitely many locations $X_*\subseteq \Omega$, $\RSsig\cdcd$ gives rise to the posterior covariance matrix:
$$\RSsig(X_*,X_*)= K_{X_* X_*} - K_{X_*S}K_{SS}^{-1}K_{SX_*}.$$
If the set $X_*$ contains $S$, $\RSsig(X_*,X_*)$ can be viewed as the error matrix for the skeleton approximation \cite{Tyrtyshnikov1996,pseudo1997} or the \nys approximation \cite{nys2001} $K_{X_*S}K_{SS}^{-1}K_{SX_*}$ to $K_{X_*X_*}$, where $S$ is treated as the set of landmark points. 
A good understanding of the magnitude in $\RSsig(X_*,X_*)$ helps to construct a sparse correction to the low-rank format for better accuracy in approximating $K_{X_*X_*}$.
This is especially the case when the bandwidth is relatively small and low-rank methods lose their efficiency since $K_{X_*X_*}$ does \emph{not} have fast decaying singular values. Moreover, the study of the \emph{continuous} $\abs{\RSsig\cdcd}$ can be used to analyze the approximation property of the \emph{rectangular} matrix $K_{Y_1 Y_2}$, which often appears in the prediction stage of GPs when $Y_1$ and $Y_2$ correspond to the testing and training data, respectively. A discrete version of the posterior covariance function $\RSsig\cdcd$ also appears in preconditioning techniques such as factorized sparse approximate inverse (FSAI) preconditioners for solving Gaussian linear systems (cf. \cite{fsai1993,fsai2000,AFN,XU2022102956,additive,huang2023highp}).
For preconditioning based on FSAI, of fundamental importance is the choice of a nonzero pattern in the sparse factor.
A more straightforward understanding of $\abs{\RSsig(x,y)}$  can be crucial for accurately specifying the nonzero pattern. Finally,
the posterior variance at $x\in\Omega$, namely $\RSsig(x,x)$, is widely used in Bayesian optimization and optimal experimental design \cite{krause2008,bayes2010CG,bayes1995review,bayesbook2023,10.1145/3589335.3651456} where the study of the structure of $\abs{\RSsig(x,y)}$ will be instrumental in accelerating optimal design algorithms (in sensor placements, for example)
when there is limited storage for the full covariance matrix (prior or posterior). 

The primary challenge in analyzing the structures of the posterior covariance function stems from the wide range of possible hyperparameters. The optimal hyperparameter value is usually unknown initially and must be estimated via maximum likelihood estimation over training data \cite{bishopbook,gp2006book}:
\begin{equation}
\label{eq:MLE}
    \cdf{\max_{\rho,\sigma>0} -\frac{1}{2}\ln \vert K_{SS}\vert - \frac{1}{2}y^TK_{SS}^{-1}y-\frac{N}{2}\ln(2\pi).}
\end{equation}
\cdf{In Gaussian processes, the optimal values for $\rho$ and $\sigma$ are learned by the maximum likelihood estimation above.}
Frequent updates to the hyperparameter are required throughout the iterative optimization process, which causes the structures of $\ksig(x,y)$ and $\RSsig(x,y)$ to change correspondingly. This dynamic nature significantly complicates the analysis. 
A comprehensive understanding of how different factors, such as $\rho$ and $S$, affect the posterior distribution is crucial for ensuring the efficiency of numerical algorithms, such as approximating the dense matrix $K_{SS}$ or $K_{SS}^{-1}y$, throughout the entire optimization process.

Our theoretical analysis aims to reveal the relationships between $\abs{\RSsig(x,y)}$ and the parameters $S$, $\rho$, $x$, and $y$, focusing on obtaining straightforward geometric characterizations of $\abs{\RSsig(x,y)}$ that do not require calculation of $K_{SS}$ or matrix inversion. In particular, our analysis can be used to identify regions within $\Omega\times\Omega$ where high and low values of $\abs{\RSsig(x,y)}$ occur and to estimate the distribution of the absolute posterior covariance function $\abs{\RSsig\cdcd}$, based on the prior Gaussian covariance function $\ksig\cdcd$. This analysis draws inspiration from the concept of \emph{a posteriori error estimation} used in the adaptive finite element method for solving Partial Differential Equations (PDEs), as seen in references such as \cite{verf1994,ladeveze1983,ZZ1987,bernardi2000,verf2013book,ainsworth2011confusion,localL2,cai2020equi}, \cdf{where the basic idea is to design a computable quantity (called an error \emph{indicator}) to indicate the numerical approximation error without knowing the true solution and can be used to help identify where the error is relatively large or small in the domain.}  
Based on the theoretical insights, we develop practical indicators to assess the distribution of $\abs{\RSsig\cdcd}$, enhancing our ability to predict and understand this complex function.  The rest of the paper is organized as follows. Section \ref{sec:prelim} provides a few illustrating examples to show quite different posterior covariance patterns for different bandwidth values and observation data.
Section \ref{sec:theory} presents the theoretical analysis to understand the phenomenon as well as posterior covariance indicators to efficiently predict the posterior covariance.
Extensive numerical experiments are provided in Section \ref{sec:numerical} to discuss the diverse posterior covariance function patterns using the theory developed, and to illustrate several applications. Finally, concluding remarks are drawn in Section \ref{sec:conclusion}.

\paragraph{Notation}
We use $\norm{\cdot}$ to denote the $l_2$-norm of vectors or the spectral norm of matrices.
$\dist{x,Y}$ denotes the $l_2$ distance from $x$ to $Y$.
In case $Y$ is a set, $\dist{x,Y}:=\min_{t\in Y} \dist{x,t}$.

\section{Preliminary Observations}
\label{sec:prelim}
In this section, we provide illustrative examples of the posterior covariance function $\RSsig\cdcd$ to demonstrate the effects of the distribution of the dataset $S$ and the bandwidth parameter $\rho$ on the function $\abs{\RSsig\cdcd}$. These examples are crucial for highlighting the significant role played by these parameters in determining the intrinsic characteristics.

To illustrate the impact of the observation locations $S$ and bandwidth $\rho$ on $\abs{\RSsig\cdcd}$ over $\Omega\times\Omega$, we consider the following setup with $\Omega=[0,1]$.
The observation data $S$ in $\Omega$ is constructed to be either uniformly distributed or non-uniformly distributed as below:
$$\text{uniform:}\; S=\{0.02, 0.26, 0.5, 0.74, 0.98\}, \quad \text{non-uniform:}\; S=\{0.02,0.12,0.22,0.6,0.98\}.$$
\cdf{In this example, we choose $\rho$ to be either $0.1$ or $0.4$.
As shown in Figure \ref{fig:prelim}, these two values are able to demonstrate the substantially different structures of $\abs{\RSsig\cdcd}$ in the two regimes $\rho\to 0$ and $\rho\to\infty$, respectively.}

Given the setup above, the posterior covariance (in magnitude) $\abs{\RSsig\cdcd}$ can display dramatically different patterns for different $S$ and $\rho$. For example, 
Figure \ref{fig:prelim} illustrates the three cases below:
(a) uniform $S$ and $\rho=0.1$;
(b) uniform $S$ and $\rho=0.4$;
(c) non-uniform $S$ and $\rho=0.1$.
The five blue dots in each heat map plot of $\abs{\RSsig}$ are $(x_i,x_i)$ for $x_i\in S$.
Figure \ref{fig:prelim3} shows two curves corresponding to the cross sections at $y=0.2$ and $y=0.4$ in $\abs{\RSsig(x,y)}$.

It can be seen that both $S$ and $\rho$ have a substantial impact on the posterior covariance $\abs{\RSsig}$, in particular, the regions in $\Omega\times\Omega$ where larger values or smaller values of $\abs{\RSsig}$ occur.
Moreover, the value of $\RSsig(x,y)$ is also sensitive to the relative location between $x$ and $y$, and the mechanism is not clear yet, given the different patterns in Figure \ref{fig:prelim1} and Figure \ref{fig:prelim2}. 
In the following sections, we aim to delve into these phenomena through theoretical investigation and develop indicators to predict the posterior covariance distribution $\abs{\RSsig}\cdcd$ \emph{without} having to compute the function explicitly.

\begin{figure}[htbp] 
\centering 
\subfloat[$\rho=0.1$, uniform $S$]{ 
\label{fig:prelim1}
\includegraphics[scale=.4]{./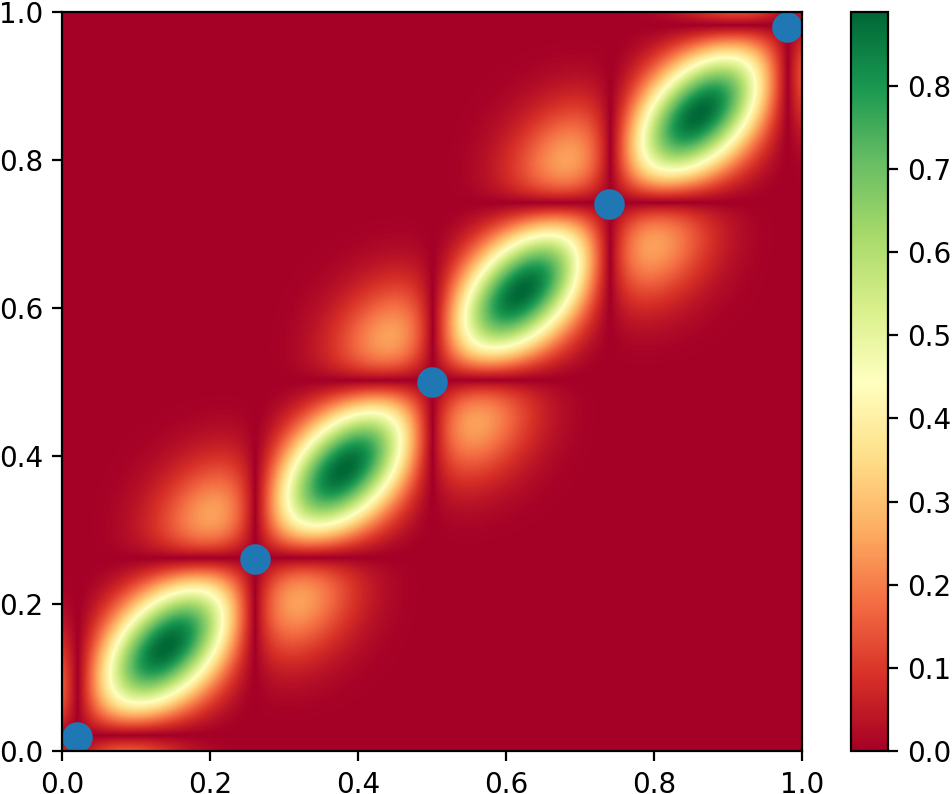}}
\hspace{30pt}
\subfloat[$\rho=0.4$, uniform $S$]{
\label{fig:prelim2}
\includegraphics[scale=.4]{./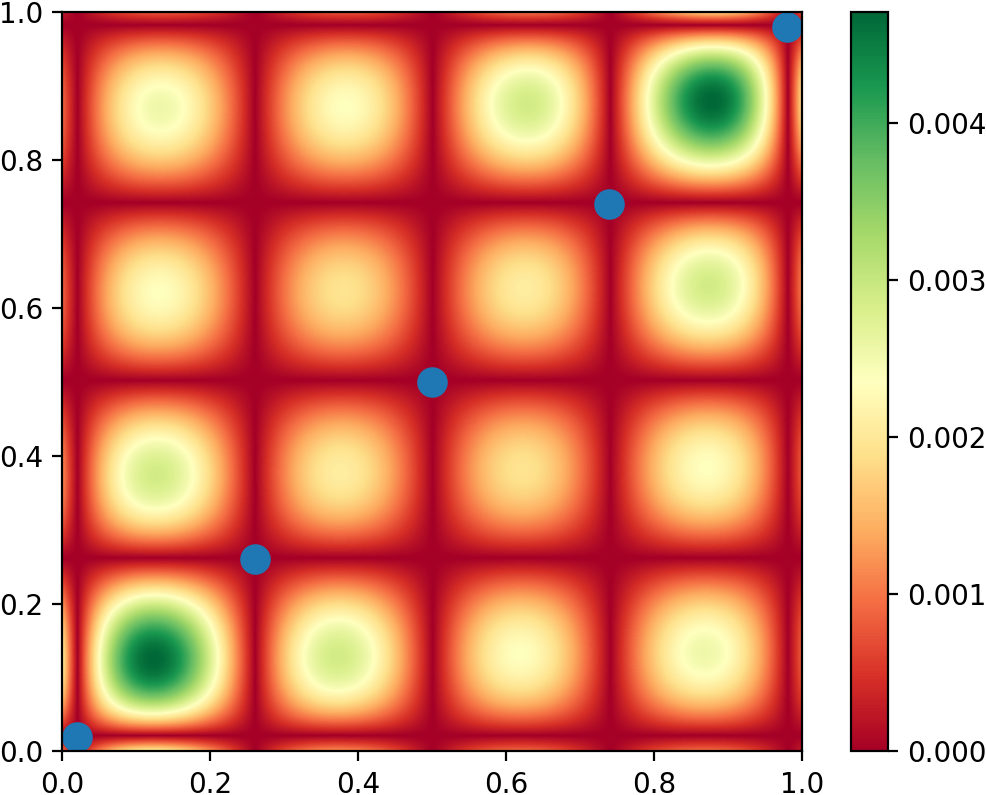}}
\hspace{30pt}
\subfloat[$\rho=0.1$, non-uniform $S=\{0.02,0.12,0.22,0.6,0.98\}$; Two curves show cross sections $|\RSsig(x,0.2)|$ and $|\RSsig(x,0.4)|$]{
\label{fig:prelim3}
\includegraphics[scale=.4]{./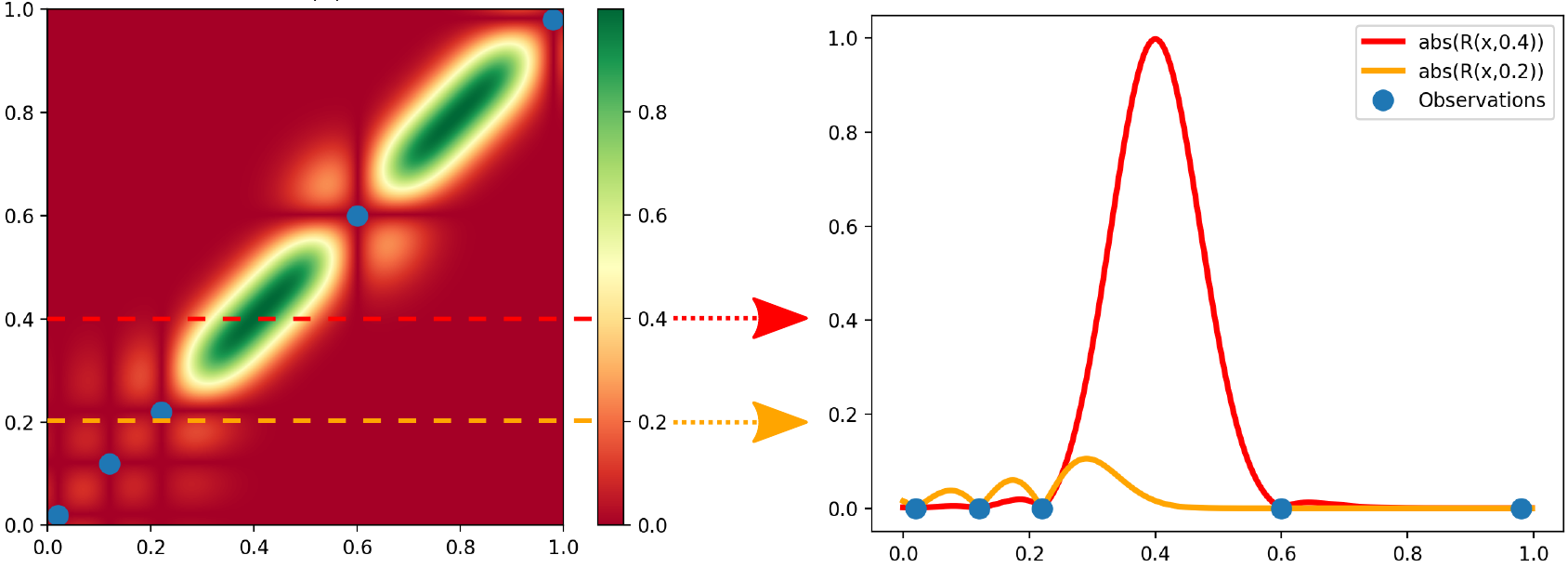}
}
\caption{$|\RSsig(x,y)|$ over $[0,1]\times [0,1]$: different $\rho$ and different $S$ (5 blue dots).} 
\label{fig:prelim}
\end{figure}

\section{Theoretical Analysis and Posterior Covariance Indicators} 
\label{sec:theory}
In this section, we analyze the magnitude of $\abs{\RSsig(x,y)}$ throughout the domain $(x,y) \in \Omega \times \Omega$. Our goal is to identify regions where this magnitude exhibits notably large or small values and to develop efficient indicators for its quantification.
Since the parameters $S$ and $\rho$ significantly affect the variations in $\RSsig$, as demonstrated in Figure \ref{fig:prelim}, our theoretical framework will emphasize the roles of $\rho$ and $S$. We will see that $\rho$ is a critical factor that affects the structure of $\RSsig(x,y)$ and we divide the discussion into two cases: \cdf{$\rho\to 0$ and $\rho\to\infty$, which, for simplicity, will be referred as ``small'' $\rho$ and ``large'' $\rho$ regimes, respectively.
The two regimes give representative structures of $\RSsig$ that will be crucial to understanding the distinct patterns as $\rho$ varies in the whole interval of $(0,\infty)$.
The usage of terminology (``large/small'') follows the widely adopted convention in mathematics, including the well-known ``law of large numbers'' in probability theory (where \emph{large} indicates the asymptotic regime of the number of samples approaching infinity) and ``small-scale parameter'' in numerical PDEs such as the study of convection-reaction-diffusion equations (where the diffusion coefficient can be arbitrarily \emph{small}, i.e. approaching $0$).
A value of $\rho$ will be categorized as ``large'' if it yields the typical structure found in the regime $\rho\to\infty$.
Analysis in this section will help to understand the two regimes quantitatively.
} 
\cdf{The two regimes are discussed in Section \ref{sub:Case I} and Section \ref{sub:Case II}, respectively. An illustration of different scenarios is presented in Section \ref{sub:plotbounds}.}
Firstly, it is easy to see where $\RSsig(x,y)$ must vanish, as stated in the theorem below.
\begin{theorem}
\label{thm:vanish}
For any finite subset $S\subseteq\mathbb{R}^d$,
define
\begin{equation}
\label{eq:RxyDef}
    \RSsig(x,y) := \ksig(x,y) - K_{xS}K_{SS}^{-1}K_{Sy},\quad x,y\in\mathbb{R}^d.
\end{equation}
Then for any $x,y$,
\begin{equation*}
   \RSsig(s,y) = \RSsig(x,s)=0\quad \forall s\in S. 
\end{equation*}
\end{theorem}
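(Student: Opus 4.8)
The plan is to exploit the elementary observation that when one argument of $\RSsig$ coincides with a landmark point $s\in S$, the associated kernel vector is precisely a row (or column) of the Gram matrix $K_{SS}$, so that multiplication by $K_{SS}^{-1}$ collapses it to a standard basis vector and the subtracted term exactly cancels $\ksig$.

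First I would enumerate $S=\{s_1,\dots,s_N\}$ and fix $s=s_k$. For the identity $\RSsig(s,y)=0$, the key step is to note that the row vector $K_{sS}$ has $j$-th entry $\ksig(s_k,s_j)$, which is exactly the $(k,j)$ entry of $K_{SS}$; equivalently $K_{sS}=e_k^T K_{SS}$, where $e_k$ is the $k$-th standard basis vector in $\mathbb{R}^N$. Consequently $K_{sS}K_{SS}^{-1}=e_k^T$, so that $K_{sS}K_{SS}^{-1}K_{Sy}=e_k^T K_{Sy}$ is simply the $k$-th entry of $K_{Sy}$, namely $\ksig(s_k,y)=\ksig(s,y)$. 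Substituting into \eqref{eq:RxyDef} gives $\RSsig(s,y)=\ksig(s,y)-\ksig(s,y)=0$.

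The second identity $\RSsig(x,s)=0$ follows by the symmetric argument: with $s=s_k$, the column vector $K_{Ss}$ is the $k$-th column of $K_{SS}$, i.e.\ $K_{Ss}=K_{SS}e_k$, so $K_{SS}^{-1}K_{Ss}=e_k$ and $K_{xS}K_{SS}^{-1}K_{Ss}=K_{xS}e_k=\ksig(x,s_k)=\ksig(x,s)$, again cancelling the first term.

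I do not expect a genuine obstacle here; the only point requiring care is confirming that $K_{SS}$ is invertible so that the expression \eqref{eq:RxyDef} is well-defined. For the Gaussian kernel \eqref{eq:cov} this holds whenever the points of $S$ are distinct, since that kernel is strictly positive definite, making $K_{SS}$ symmetric positive definite and hence nonsingular.
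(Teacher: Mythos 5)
Your proof is correct and is essentially the same argument as the paper's: you observe that $K_{sS}$ is a row of $K_{SS}$ so that multiplication by $K_{SS}^{-1}$ collapses it, whereas the paper packages all $s\in S$ at once by computing the vector identity $\RSsig(x,S)=K_{xS}-K_{xS}K_{SS}^{-1}K_{SS}=0$; the two are the same cancellation viewed entrywise versus vectorwise. Your added remark on the invertibility of $K_{SS}$ (strict positive definiteness of the Gaussian kernel for distinct points) is a worthwhile point the paper leaves implicit.
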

\begin{proof}
    For any $x$, consider the vector $\RSsig(x,S)$.
    We compute that
    $$\RSsig(x,S)=\ksig(x,S)-K_{xS}K_{SS}^{-1}K_{SS}=K_{xS}-K_{xS}=0.$$
    Similarly, it is easy to show that $\RSsig(S,x)=0$ for any $x\in\mathbb{R}^d$.
    This proves the theorem.
\end{proof}
From an analytical point of view, Theorem \ref{thm:vanish} states the interpolation property of the finite-rank approximation $f_S(x,y):=K_{xS}K_{SS}^{-1}K_{Sy}$ to the kernel $\ksig(x,y)$.
Namely, $f_S(x,y)$ coincides with $\ksig(x,y)$ whenever $x\in S$ or $y\in S$.
Furthermore, note that $\RSsig(x,y)$ is globally smooth due to the smoothness of the Gaussian kernel.
Therefore, Theorem \ref{thm:vanish} implies that $\RSsig(x,y)\approx 0$ if $\dist{x,S}\approx 0$ or $\dist{y,S}\approx 0$. More refined analysis will be presented in Section \ref{sub:Case I} for the small $\rho$ case and Section \ref{sub:Case II} for the large $\rho$ case. The analysis will rely on the Lipschitz constant for the Gaussian kernel. 
An estimate of the Lipschitz constant is included in the lemma below for completeness.

\begin{lemma}
\label{prop:Lip}
Consider $\ksig(x,y)=\exp(-\frac{\norm{x-y}^2}{2\rho^2})$ as a function of the first variable $x$. 
Namely, $f(x):=\ksig(x,y)$ where $y$ is viewed as a constant.
The Lipschitz constant $L$ satisfies the following estimate:
\begin{equation}
\label{eq:Lip}
   L:=\sup_{u\neq v}\frac{|f(u)-f(v)|}{\norm{u-v}}\leq \sup_{x\in \mathbb{R}^d} \norm{\nabla_x \ksig}\leq \frac{1}{\rho\sqrt{e}}.
\end{equation}
\end{lemma}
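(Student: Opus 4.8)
The plan is to establish the two inequalities in \eqref{eq:Lip} separately. The first inequality, $L \leq \sup_{x}\norm{\nabla_x \ksig}$, is just the standard mean value bound for a differentiable function. Since $f$ is smooth (being a composition of smooth functions), for any $u \neq v$ I would write $f(u) - f(v) = \int_0^1 \nabla f\bigl(v + t(u-v)\bigr) \cdot (u-v)\,dt$ along the segment joining $v$ to $u$, and then apply the Cauchy--Schwarz inequality inside the integral to obtain $|f(u)-f(v)| \leq \left(\sup_{x}\norm{\nabla_x \ksig}\right)\norm{u-v}$. Dividing by $\norm{u-v}$ and taking the supremum over $u \neq v$ yields the first inequality.

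For the second inequality I would compute the gradient explicitly via the chain rule. Setting $r = \norm{x-y}$, one finds $\nabla_x \ksig = -\frac{x-y}{\sigma^2}\exp\!\bigl(-\frac{r^2}{2\sigma^2}\bigr)$, so that $\norm{\nabla_x \ksig} = \frac{r}{\sigma^2}\exp\!\bigl(-\frac{r^2}{2\sigma^2}\bigr)$ depends on $x$ and $y$ only through the scalar $r \geq 0$. The task therefore reduces to the one-dimensional optimization of $g(r) := \frac{r}{\sigma^2}\exp\!\bigl(-\frac{r^2}{2\sigma^2}\bigr)$ over $r \in [0,\infty)$.

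Finally I would carry out this single-variable maximization: differentiating gives $g'(r) = \frac{1}{\sigma^2}\exp\!\bigl(-\frac{r^2}{2\sigma^2}\bigr)\bigl(1 - \frac{r^2}{\sigma^2}\bigr)$, whose only positive root is $r = \sigma$. Since $g(0) = 0$ and $g(r) \to 0$ as $r \to \infty$, this interior critical point is the global maximum, and evaluating there gives $g(\sigma) = \frac{1}{\sigma}e^{-1/2} = \frac{1}{\sigma\sqrt{e}}$, which is exactly the claimed upper bound for $\sup_x \norm{\nabla_x \ksig}$.

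As for the main obstacle: there is essentially none, since this is a completeness lemma and every step is routine. The only point requiring mild care is confirming that the critical point $r = \sigma$ is the \emph{global} maximum of $g$ on $[0,\infty)$ rather than merely a local extremum, which is settled by the boundary behavior $g(0)=0$ and $\lim_{r\to\infty} g(r) = 0$ noted above.
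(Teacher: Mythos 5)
Your proposal is correct and follows essentially the same route as the paper: compute the gradient explicitly, reduce to a one-variable maximization in $r=\norm{x-y}$, and find the critical point at $r=\sigma$ yielding the value $\frac{1}{\sigma\sqrt{e}}$ (the paper maximizes the squared norm $\frac{1}{\sigma^4}r^2e^{-r^2/\sigma^2}$ instead of the norm itself, an immaterial difference). Your explicit justification of the first inequality via the integral of the gradient along the segment is slightly more detailed than the paper, which simply asserts $L\leq\sup\norm{\nabla f}$, but the substance is identical.
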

\begin{proof}
It can be computed that
\begin{equation*}
   \norm{\nabla f}^2 = \norm{\nabla_x \ksig}^2 = \frac{1}{\rho^4}\norm{x-y}^2\exp(-\norm{x-y}^2/\rho^2).
\end{equation*}
To bound the above quantity for all $x,y\in\mathbb{R}^d$,
we compute using elementary calculus that
\begin{equation*}
    \max_{r\geq 0} \frac{1}{\rho^4}r^2e^{-\frac{r^2}{\rho^2}} = \frac{1}{\rho^2 e}.
\end{equation*}
Therefore,
   $L\leq \sup\limits_{x\in \mathbb{R}^d} \norm{\nabla f(x)} =  \sup\limits_{x\in \mathbb{R}^d} \norm{\nabla_x \ksig} \leq \frac{1}{\rho\sqrt{e}}$,
which completes the proof.
\end{proof}

\subsection{Small Bandwidth Case}
\label{sub:Case I}

\cdf{In this section, we focus on the regime $\rho\to 0$ and derive estimates for the pointwise value}
$$\RSsig(x,y) := \ksig(x,y) - K_{xS}K_{SS}^{-1}K_{Sy}.$$
For a set $S$ with $r$ points, exact evaluation of $\RSsig(x,y)$ at each pair $(x,y)$ costs $O(r^3)$ for factorizing $K_{SS}$ and $O(r^2)$ for computing $K_{SS}^{-1}K_{Sy}$ or $K_{xS}K_{SS}^{-1}$ based on the computed factors. The estimates derived in this section offer an intuitive geometric characterization of $\abs{\RSsig(x,y)}$ and can be used to efficiently identify the locations in $\Omega\times\Omega$ where $\abs{\RSsig(x,y)}$ achieves smaller values (see Theorem \ref{thm:error1}) or larger values (see Theorem \ref{thm:lowerbound}) without the exact evaluation of $\RSsig(x,y)$.
Illustrations of the estimates compared to the true pattern of $\abs{\RSsig(x,y)}$ are shown in Section \ref{sub:plotbounds}.


\begin{theorem}
\label{thm:error1}
    Let $\ksig(x,y)=\exp(-\frac{\norm{x-y}^2}{2\rho^2})$ be the Gaussian kernel over $\mathbb{R}^d\times\mathbb{R}^d$.
For any subset $S=\{s_1,\dots,s_r\}\subseteq\mathbb{R}^d$ with $r\geq 1$, 
let $\RSsig(x,y)$ be the posterior covariance conditioned on $S$, as defined in \eqref{eq:RxyDef}.
{If $\dist{x,S}\geq \sqrt{2}\hat{\omega}\rho$ or $\dist{y,S}\geq \sqrt{2}\hat{\omega}\rho$} for some $\hat{\omega}>0$ \emph{AND} $\dist{x,y}\geq \sqrt{2}\omega\rho$ for some $\omega>0$,
then
\begin{equation}
\label{eq:RxyCase1far}
    \abs{\RSsig(x,y)}\leq e^{-\omega^2} + e^{-\hat{\omega}^2}\sqrt{r}\norm{K_{SS}^{-1}K_{Sy}}_2.
\end{equation}
Consequently, 
if we define the quantity
\begin{equation}
\label{eq:Gamma}
\Gamma_p := \max\limits_{y\in\mathbb{R}^d} \norm{K_{SS}^{-1}K_{Sy}}_p,
\end{equation}
where $\norm{\cdot}_p$ denotes the $p$-norm of a vector,
then
\begin{equation}
\label{eq:RxyCase1Gamma}
    \abs{\RSsig(x,y)}\leq e^{-\omega^2} + e^{-\hat{\omega}^2}\sqrt{r}\Gamma_2.
\end{equation}
\end{theorem}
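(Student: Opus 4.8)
The plan is to bound the two summands in the definition $\RSsig(x,y) = \ksig(x,y) - K_{xS}K_{SS}^{-1}K_{Sy}$ separately, exploiting the fact that the Gaussian kernel decays like $e^{-r^2/(2\sigma^2)}$ and hence becomes exponentially small once two points are separated by several multiples of $\sigma$. The hypothesis $\dist{x,y}\geq\sqrt{2}\rho\sigma$ is tailored precisely so that $\ksig(x,y)=\exp(-\norm{x-y}^2/(2\sigma^2))\leq\exp(-(\sqrt{2}\rho\sigma)^2/(2\sigma^2))=e^{-\rho^2}$, which handles the first term $\abs{\ksig(x,y)}\leq e^{-\rho^2}$ immediately.

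For the second term $\abs{K_{xS}K_{SS}^{-1}K_{Sy}}$, I would apply Cauchy--Schwarz to peel off the vector $K_{SS}^{-1}K_{Sy}$, giving $\abs{K_{xS}K_{SS}^{-1}K_{Sy}}\leq\norm{K_{xS}}_2\,\norm{K_{SS}^{-1}K_{Sy}}_2$. The factor $\norm{K_{xS}}_2$ is the Euclidean norm of the row vector $(\ksig(x,s_1),\dots,\ksig(x,s_r))$. The disjunctive hypothesis must now be used carefully: if $\dist{x,S}\geq\sqrt{2}\hat\rho\sigma$, then \emph{every} entry satisfies $\ksig(x,s_i)=\exp(-\norm{x-s_i}^2/(2\sigma^2))\leq e^{-\hat\rho^2}$, so $\norm{K_{xS}}_2\leq\sqrt{r}\,e^{-\hat\rho^2}$, and combining with Cauchy--Schwarz yields exactly $e^{-\hat\rho^2}\sqrt{r}\,\norm{K_{SS}^{-1}K_{Sy}}_2$. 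If instead it is $\dist{y,S}$ that is large, I would symmetrize by writing the second term as $K_{yS}K_{SS}^{-1}K_{Sx}$ (using symmetry of $K_{SS}$ and of the kernel) and bound $\norm{K_{yS}}_2\leq\sqrt{r}\,e^{-\hat\rho^2}$ instead, keeping $\norm{K_{SS}^{-1}K_{Sx}}_2$; here one should note the bound \eqref{eq:RxyCase1far} as stated is formulated with $\norm{K_{SS}^{-1}K_{Sy}}_2$, so in the $y$-branch I would pass immediately to the uniform quantity $\Gamma_2$ rather than insisting on the exact form. Adding the two summand bounds gives \eqref{eq:RxyCase1far}.

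The passage from \eqref{eq:RxyCase1far} to \eqref{eq:RxyCase1Gamma} is then purely a matter of replacing the data-dependent quantity $\norm{K_{SS}^{-1}K_{Sy}}_2$ by its worst-case value $\Gamma_2=\max_{y}\norm{K_{SS}^{-1}K_{Sy}}_2$ from \eqref{eq:Gamma}, which is a uniform upper bound by definition and makes the estimate independent of the particular $y$. This step is immediate once the supremum is recognized as an upper bound on the per-point factor.

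The main obstacle is not any single calculation but ensuring the logic of the \emph{disjunctive} geometric hypothesis is respected: the bound $e^{-\hat\rho^2}$ on the kernel row only follows when the point whose distance to $S$ is controlled is the one appearing in the row vector being normed. I would therefore treat the two cases ($\dist{x,S}$ large versus $\dist{y,S}$ large) explicitly, verifying in each branch that Cauchy--Schwarz is applied with the large-distance point supplying the $\sqrt{r}\,e^{-\hat\rho^2}$ factor. The symmetry $K_{xS}K_{SS}^{-1}K_{Sy}=K_{yS}K_{SS}^{-1}K_{Sx}$, which holds because $K_{SS}^{-1}$ is symmetric and $\ksig$ is a symmetric kernel, is the device that lets the $y$-branch be reduced to the $x$-branch with the roles of $x$ and $y$ swapped, and I would flag this as the one place where care is needed to match the final bound as written against the uniform form involving $\Gamma_2$.
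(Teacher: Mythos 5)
Your argument is essentially the paper's own proof: bound $\ksig(x,y)\leq e^{-\rho^2}$ from $\dist{x,y}\geq\sqrt{2}\rho\sigma$, apply Cauchy--Schwarz to get $\abs{K_{xS}K_{SS}^{-1}K_{Sy}}\leq\norm{K_{xS}}\norm{K_{SS}^{-1}K_{Sy}}\leq\sqrt{r}e^{-\hat\rho^2}\norm{K_{SS}^{-1}K_{Sy}}$, and sum; your explicit handling of the $\dist{y,S}$ branch via the symmetry $K_{xS}K_{SS}^{-1}K_{Sy}=K_{yS}K_{SS}^{-1}K_{Sx}$ is in fact more careful than the paper, which simply declares that case ``similar.'' The one step you skip that the paper does not is verifying that $\Gamma_p$ in \eqref{eq:Gamma} is well-defined: a maximum over all of $\mathbb{R}^d$ need not be attained or even finite a priori, and the paper closes this by noting that $\norm{K_{SS}^{-1}K_{Sy}}_p=1$ for $y\in S$ while $\norm{K_{Sy}}_p\to 0$ as $\norm{y}\to\infty$, so the supremum of this continuous function is attained on a sufficiently large closed ball. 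Adding that observation would make your proof complete.
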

\begin{proof}
We first prove the estimate in \eqref{eq:RxyCase1far} under the condition that $\dist{x,S}\geq \sqrt{2}\hat{\omega}\rho$ and $\dist{x,y}\geq \sqrt{2}\omega\rho$.
The case of $\dist{y,S}\geq \sqrt{2}\hat{\omega}\rho$ is similar.
Since $\dist{x,S}\geq \sqrt{2}\hat{\omega}\rho$, it follows immediately that, for any $s\in S$,
$e^{-\frac{\norm{x-s}^2}{2\rho^2}}\leq e^{-\hat{\omega}^2}$.
As a result, $\norm{K_{xS}}\leq \sqrt{r}e^{-\hat{\omega}^2}$.
Now we deduce that 
\begin{equation}
\label{eq:RxyCase1Naive}
\begin{aligned}
    \abs{\RSsig(x,y)} &= \abs{\ksig(x,y) - K_{xS}K_{SS}^{-1}K_{Sy}}\\
    &\leq \abs{\ksig(x,y)} + \abs{K_{xS}K_{SS}^{-1}K_{Sy}}\\
    &\leq e^{-\frac{\norm{x-y}^2}{2\rho^2}} + \norm{K_{xS}} \norm{K_{SS}^{-1}K_{Sy}}\\
     &\leq e^{-\omega^2} + \sqrt{r}e^{-\hat{\omega}^2}\norm{K_{SS}^{-1}K_{Sy}},
\end{aligned}
\end{equation}
which proves \eqref{eq:RxyCase1far}.

The inequality \eqref{eq:RxyCase1Gamma} follows immediately from \eqref{eq:RxyCase1far} since $\norm{K_{SS}^{-1}K_{Sy}}\leq \Gamma_2$.
However, we still need to prove that $\Gamma_p$ is well-defined.
It suffices to show that the maximum can be achieved in a closed ball.
Note that 
\begin{equation}
\label{eq:GammaS}
\Gamma_p\geq \norm{K_{SS}^{-1}K_{Sy}}_p = 1,\quad \text{if}\; y\in S.
\end{equation}
On the other hand, 
it is easy to see that $\norm{K_{Sy}}_p\to 0$ as $\norm{y}_p\to\infty$ since for any $s\in S$, the corresponding entry in $K_{Sy}$, $\exp(-\frac{\norm{s-y}^2}{2\rho^2})\to 0$ as $\norm{y}_p\to\infty$.
Thus we can choose a closed ball $B$ centered at the origin with a sufficiently large radius such that $S\subseteq B$ and
\begin{equation*}
\sup\limits_{y\notin B}\norm{K_{SS}^{-1}K_{Sy}}_p < 0.5.  
\end{equation*}
Note that \eqref{eq:GammaS} implies
$$
\max\limits_{y\in B}\norm{K_{SS}^{-1}K_{Sy}}_p \geq 1.
$$
It follows that $\sup\limits_{y\in\mathbb{R}^d}\norm{K_{SS}^{-1}K_{Sy}}_p$ must be achieved at some $y\in B$.
Therefore,
$$\Gamma_p = \max\limits_{y\in B}\norm{K_{SS}^{-1}K_{Sy}}_p\ < \infty.$$
The proof of the theorem is now complete.
\end{proof}

Next, we show the limits of $\RSsig(x,y)$ defined in \eqref{eq:RxyCase1far} and $\Gamma_p$ defined in \eqref{eq:Gamma} as the bandwidth $\rho$ goes to $0$.

\begin{proposition}
\label{prop:limitCase1}
Under the assumption in Theorem \ref{thm:error1} about $\ksig$, $S$, $x$, $y$, we have
$$\RSsig(x,y)\to 0\quad\text{and}\quad \Gamma_p\to 1\quad\text{as}\quad \rho\to 0.$$
\end{proposition}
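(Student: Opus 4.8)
The plan is to reduce both limits to the single basic fact that, for a set of $r$ \emph{distinct} points, $K_{SS}\to I$ as $\sigma\to 0$. Writing $\delta:=\min_{i\neq j}\norm{s_i-s_j}>0$ for the minimal separation of $S$, every off-diagonal entry of $K_{SS}$ is bounded by $e^{-\delta^2/(2\sigma^2)}$ while the diagonal entries equal $1$, so $\norm{K_{SS}-I}\to 0$. Since matrix inversion is continuous on the invertible matrices, this gives $K_{SS}^{-1}\to I$ and in particular $\norm{K_{SS}^{-1}}$ bounded (indeed $\to 1$) as $\sigma\to 0$. I would first establish $\Gamma_p\to 1$ and then feed the resulting bound on $\Gamma_2$ into the estimate \eqref{eq:RxyCase1Gamma} already proved in Theorem \ref{thm:error1} to conclude $\RSsig(x,y)\to 0$.

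For $\Gamma_p\to 1$, the lower bound $\Gamma_p\geq 1$ is immediate from taking $y\in S$ (as in \eqref{eq:GammaS}), so the content is the matching upper bound $\limsup_{\sigma\to 0}\Gamma_p\leq 1$. The subtlety, and the main obstacle, is that the maximizer $y$ in the definition \eqref{eq:Gamma} of $\Gamma_p$ may drift with $\sigma$, so pointwise limits of $\norm{K_{SS}^{-1}K_{Sy}}_p$ do not suffice and I need an estimate uniform in $y$. I would obtain this from the separation $\delta$: for any $y$, at most one point of $S$ can lie within distance $\delta/2$, so at most one entry of $K_{Sy}$ is close to $1$ while the remaining $r-1$ entries are each bounded by $e^{-\delta^2/(8\sigma^2)}$. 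A short case split (whether the nearest point is within $\delta/2$ or not) then yields $\sup_y\norm{K_{Sy}}_p^p\leq 1+(r-1)e^{-p\delta^2/(8\sigma^2)}$, and hence $\sup_y\norm{K_{Sy}}_p\to 1$. Finally, decomposing $K_{SS}^{-1}K_{Sy}=K_{Sy}+(K_{SS}^{-1}-I)K_{Sy}$ and using equivalence of norms on $\mathbb{R}^r$, the second term is bounded by a fixed constant times $\norm{K_{SS}^{-1}-I}\,\sup_y\norm{K_{Sy}}_2\to 0$ uniformly in $y$, while the first contributes $\sup_y\norm{K_{Sy}}_p\to 1$. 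Together these give $\limsup_{\sigma\to0}\Gamma_p\leq 1$, which combined with $\Gamma_p\geq 1$ yields $\Gamma_p\to 1$.

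For the first limit I fix $x,y$ satisfying the hypotheses of Theorem \ref{thm:error1}, namely $x\neq y$ with at least one of $x,y$ outside $S$, so that $\dist{x,y}>0$ and, say, $\dist{x,S}>0$. For each $\sigma$ I choose $\rho=\dist{x,y}/(\sqrt{2}\sigma)$ and $\hat{\rho}=\dist{x,S}/(\sqrt{2}\sigma)$, which are admissible in \eqref{eq:RxyCase1Gamma}. Then $e^{-\rho^2}=e^{-\dist{x,y}^2/(2\sigma^2)}\to 0$ and $e^{-\hat{\rho}^2}=e^{-\dist{x,S}^2/(2\sigma^2)}\to 0$, while $\sqrt{r}\,\Gamma_2$ stays bounded because $\Gamma_2\to 1$ by the previous step. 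Passing to the limit in $\abs{\RSsig(x,y)}\leq e^{-\rho^2}+e^{-\hat{\rho}^2}\sqrt{r}\,\Gamma_2$ then gives $\RSsig(x,y)\to 0$, which completes the proof; if instead it is $y$ that is separated from $S$, the identical argument applies with the roles of $x$ and $y$ interchanged.
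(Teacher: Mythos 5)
Your proposal is correct and follows the same overall structure as the paper's proof: establish $\Gamma_p\to 1$ first, then feed the boundedness of $\Gamma_2$ into the bound \eqref{eq:RxyCase1Gamma} with $\rho,\hat{\rho}\to\infty$ (the paper picks $\rho=\hat{\rho}=\sigma^{-1/2}$; you pick the largest admissible values $\dist{x,y}/(\sqrt{2}\sigma)$ and $\dist{x,S}/(\sqrt{2}\sigma)$ --- an immaterial difference). Where you genuinely add something is in the $\Gamma_p\to 1$ step: the paper argues only pointwise in $y$ (the norm tends to $1$ for $y\in S$ and to $0$ for $y\notin S$) and then passes to the maximum without justification, even though the maximizer in \eqref{eq:Gamma} can move with $\sigma$, so the limit of the suprema is not automatically the supremum of the limits. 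Your uniform-in-$y$ estimate via the minimal separation $\delta$ (at most one entry of $K_{Sy}$ near $1$, the rest bounded by $e^{-\delta^2/(8\sigma^2)}$, combined with $\norm{K_{SS}^{-1}-I}\to 0$) supplies exactly the missing $\limsup_{\sigma\to 0}\Gamma_p\leq 1$ direction and makes the argument airtight. In short: same route, but your version repairs a small rigor gap in the paper's own treatment of the $\Gamma_p$ limit.
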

\begin{proof}
We first show that $\Gamma_p\to 1$.
Note that $\norm{K_{SS}^{-1}K_{Sy}}_p=1$ whenever $y\in S$.
If $y\notin S$,
as $\rho\to 0$,
$\norm{K_{SS}^{-1}K_{Sy}}_p\to 0$
since $K_{SS}$ approaches the identity matrix and $K_{Sy}$ approaches the zero matrix.
Thus we have
$\Gamma_p = \max\limits_{y\in\mathbb{R}^d} \norm{K_{SS}^{-1}K_{Sy}}_p \to 1\;\,\text{as}\;\, \rho\to 0.$

To show $\RSsig(x,y)\to 0$, we use \eqref{eq:RxyCase1Gamma}:
$$\abs{\RSsig(x,y)}\leq e^{-\omega^2} + \sqrt{r}e^{-\hat{\omega}^2}\Gamma_2.
$$
Note that both $\omega$ and $\hat{\omega}$ can be chosen to be arbitrarily large in the limit $\rho\to 0$, for example $\omega=\hat{\omega}=\rho^{-0.5}$. This is because $x,y$ are fixed and $\dist{x,y}\geq \sqrt{2}\omega\rho$, $\dist{x,S}\geq \sqrt{2}\hat{\omega}\rho$ always hold as $\rho\to 0$, where the left-hand sides are positive (independent of $\rho$) and the right-hand sides approach $0$ in the limit.
This completes the proof.
\end{proof}

Theorem \ref{thm:error1} and Theorem \ref{thm:vanish} help identify where the \emph{small} values of $\abs{\RSsig(x,y)}$ occur in $\Omega\times\Omega$.
They indicate that,
if $\dist{x,S}/\rho$ or $\dist{y,S}/\rho$ is large,
then $\RSsig(x,y)$ will be insignificant for all $x,y$ such that $\norm{x-y}/\rho$ is large.
Note that Theorem \ref{thm:error1} assumes that $x$ or $y$ is far from $S$, and does not consider the case when $\dist{x,S}/\rho$ and $\dist{y,S}/\rho$ are small.
This case ($x,y$ close to $S$) is in fact discussed after Theorem \ref{thm:vanish} and we have
$\RSsig(x,y)\approx 0$ because $\RSsig(x,S)=\RSsig(S,y)=0$ and $\RSsig(x,y)\in C(\mathbb{R}^d\times \mathbb{R}^d)$.
Hence we see that:
\vspace{2mm}
\begin{center}
    \emph{For small bandwidth $\rho$, $\abs{\RSsig(x,y)}$ will be small as long as $\norm{x-y}/\rho$ is large.}
\end{center}
\vspace{2mm}
This is consistent with Figure \ref{fig:prelim1} (see also Figure \ref{fig:case1small}), where $\Omega=[0,1]$, $S$ is uniformly distributed in $\Omega$, $\rho=0.1$ is considered substantially smaller than the data spacing $0.24$ in $S$. In Figure \ref{fig:case1small}, the two blue triangles enclose the points $(x,y)$ where $\norm{x-y}$ is larger than $0.24$. It is easy to see that $\RSsig(x,y)$ is close to zero when $(x,y)$ lies within these two regions.
The larger $\norm{x-y}$ is, the smaller $\abs{\RSsig(x,y)}$ will be.

\begin{figure}[htbp] 
    \centering 
    \includegraphics[scale=.4]{./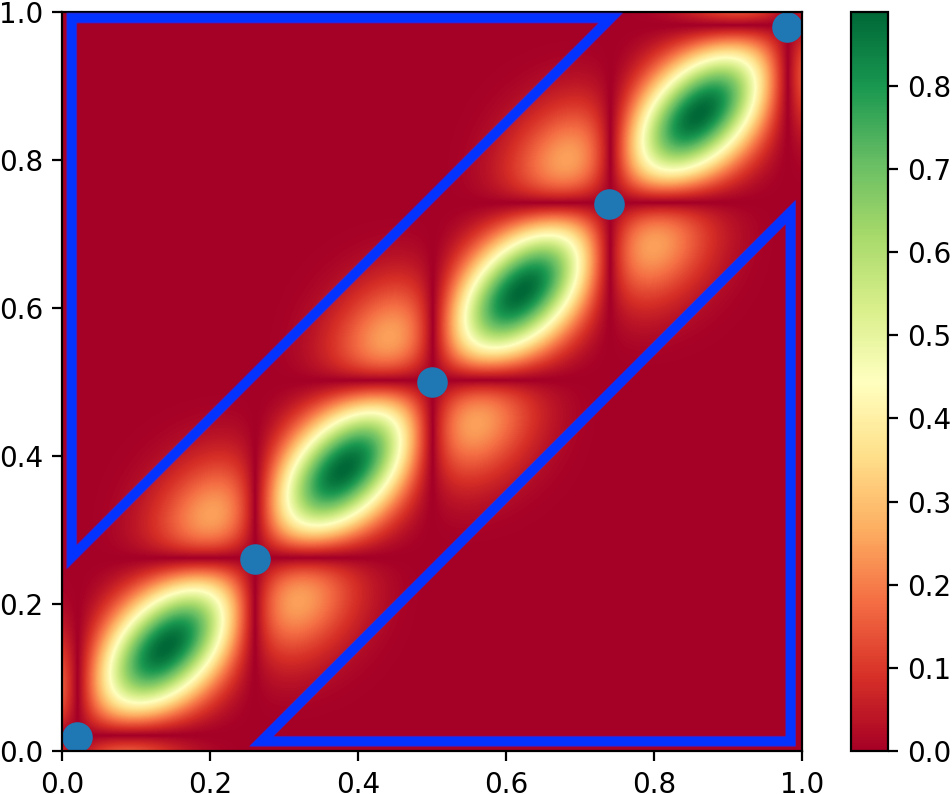} 
    \caption{$|\RSsig(x,y)|$ over $[0,1]\times [0,1]$ with $\rho=0.1$. Blue triangles enclose locations with $\lVert x-y\rVert/\rho\geq 1$.}
    \label{fig:case1small} 
\end{figure}

The next theorem helps identify the locations $(x,y)$ that yield \emph{large} values of $\abs{\RSsig(x,y)}$.
\begin{theorem}
\label{thm:lowerbound}
Let $\ksig(x,y)$, $S$, $r$, $\RSsig(x,y)$, $\Gamma_p$ be defined as in Theorem \ref{thm:error1}.
{If $\dist{x,S}\geq \sqrt{2}\hat{\omega}\rho$ or $\dist{y,S}\geq \sqrt{2}\hat{\omega}\rho$} for some $\hat{\omega}>0$, then 
\begin{equation}
\label{eq:RxyCase1near}
    \abs{\RSsig(x,y)}\geq \ksig(x,y) - \sqrt{r}e^{-\hat{\omega}^2}\norm{K_{SS}^{-1}K_{Sy}}.
\end{equation}
Furthermore, if there exists $c\geq 0$ such that $\dist{x,y}\leq \rho \sqrt{2\ln(1+c)}$, then
\begin{equation}
\label{eq:RxyLower}
    \abs{\RSsig(x,y)}\geq \frac{1}{1+c} - \sqrt{r}e^{-\hat{\omega}^2}\Gamma_2.
\end{equation}
\end{theorem}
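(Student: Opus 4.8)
The plan is to derive \eqref{eq:RxyCase1near} from the reverse triangle inequality and then obtain \eqref{eq:RxyLower} by lower-bounding the kernel term $\ksig(x,y)$ using the distance constraint on $\norm{x-y}$. First I would write $\RSsig(x,y)=\ksig(x,y)-K_{xS}K_{SS}^{-1}K_{Sy}$ and apply the reverse triangle inequality to get
\begin{equation*}
\abs{\RSsig(x,y)}\geq \abs{\ksig(x,y)}-\abs{K_{xS}K_{SS}^{-1}K_{Sy}}.
\end{equation*}
Because the Gaussian kernel is positive, $\abs{\ksig(x,y)}=\ksig(x,y)$, so only the cross term needs an upper bound. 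This mirrors the opening step of Theorem \ref{thm:error1}, except that now the kernel term is kept on the favorable side of the inequality.

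For the cross term I would reuse the estimate already established inside the proof of Theorem \ref{thm:error1}: if $\dist{x,S}\geq\sqrt{2}\hat{\rho}\sigma$, then each entry of $K_{xS}$ is at most $e^{-\hat{\rho}^2}$ in magnitude, hence $\norm{K_{xS}}\leq\sqrt{r}e^{-\hat{\rho}^2}$, and by submultiplicativity $\abs{K_{xS}K_{SS}^{-1}K_{Sy}}\leq\norm{K_{xS}}\norm{K_{SS}^{-1}K_{Sy}}\leq\sqrt{r}e^{-\hat{\rho}^2}\norm{K_{SS}^{-1}K_{Sy}}$, which is exactly \eqref{eq:RxyCase1near}. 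The one point requiring a little care is the symmetric alternative in the hypothesis, $\dist{y,S}\geq\sqrt{2}\hat{\rho}\sigma$: here I would exploit that the scalar $K_{xS}K_{SS}^{-1}K_{Sy}$ equals its own transpose $K_{yS}K_{SS}^{-1}K_{Sx}$ (since $K_{SS}^{-1}$ is symmetric), so the identical argument bounds it by $\sqrt{r}e^{-\hat{\rho}^2}\norm{K_{SS}^{-1}K_{Sx}}$. As both $\norm{K_{SS}^{-1}K_{Sy}}$ and $\norm{K_{SS}^{-1}K_{Sx}}$ are dominated by $\Gamma_2$, the two cases of the ``or'' merge seamlessly in the final bound.

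For \eqref{eq:RxyLower} I would lower-bound the kernel term using $\dist{x,y}\leq\sigma\sqrt{2\ln(1+c)}$. By monotonicity of $t\mapsto e^{-t}$,
\begin{equation*}
\ksig(x,y)=e^{-\norm{x-y}^2/(2\sigma^2)}\geq e^{-\ln(1+c)}=\frac{1}{1+c}.
\end{equation*}
Substituting this into \eqref{eq:RxyCase1near} and replacing $\norm{K_{SS}^{-1}K_{Sy}}$ by its global maximum $\Gamma_2$ from \eqref{eq:Gamma} (which is finite and well-defined by the argument already given in Theorem \ref{thm:error1}) produces \eqref{eq:RxyLower}.

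There is no genuine analytic obstacle here; the argument is essentially a mirror image of Theorem \ref{thm:error1} with the triangle inequality reversed, and every sub-estimate has already been assembled. The only thing one must not overlook is that the sign of the cross term is unconstrained, so the lower bound is informative only when $\ksig(x,y)$ dominates $\sqrt{r}e^{-\hat{\rho}^2}\Gamma_2$ — that is, when $x$ and $y$ are mutually close (small $c$) while at least one of them is far from $S$ (large $\hat{\rho}$). Verifying that this regime is nonempty, so that the right-hand side of \eqref{eq:RxyLower} is positive and the theorem actually certifies a \emph{large} value of $\abs{\RSsig(x,y)}$, is the real conceptual content, even though it requires no additional computation.
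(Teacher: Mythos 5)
Your proposal is correct and follows essentially the same route as the paper: reverse triangle inequality, the cross-term bound $\abs{K_{xS}K_{SS}^{-1}K_{Sy}}\leq\sqrt{r}e^{-\hat{\rho}^2}\norm{K_{SS}^{-1}K_{Sy}}$ recycled from the proof of Theorem \ref{thm:error1}, and the elementary kernel lower bound $\ksig(x,y)\geq 1/(1+c)$. Your explicit treatment of the symmetric case $\dist{y,S}\geq\sqrt{2}\hat{\rho}\sigma$ via transposition is slightly more careful than the paper's ``without loss of generality,'' and correctly notes that \eqref{eq:RxyCase1near} as written really yields $\norm{K_{SS}^{-1}K_{Sx}}$ in that case, with the two branches reconciled only at the $\Gamma_2$ level in \eqref{eq:RxyLower}.
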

\begin{proof}
Without loss of generality, we assume that 
$\dist{x,S}\geq \sqrt{2}\hat{\omega}\rho$ and estimate the residual
$$\RSsig(x,y) = \ksig(x,y) - K_{xS}K_{SS}^{-1}K_{Sy}.$$
According to \eqref{eq:RxyCase1Naive}, we know that 
$$\abs{K_{xS}K_{SS}^{-1}K_{Sy}} \leq \sqrt{r}e^{-\hat{\omega}^2} \norm{K_{SS}^{-1}K_{Sy}}.$$
Therefore, triangle inequality implies that 
$$\abs{\RSsig(x,y)}\geq \ksig(x,y)-\abs{K_{xS}K_{SS}^{-1}K_{Sy}} \geq \ksig(x,y)-\sqrt{r}e^{-\hat{\omega}^2} \norm{K_{SS}^{-1}K_{Sy}},$$
which proves \eqref{eq:RxyCase1near}.

The proof of \eqref{eq:RxyLower} is straightforward from \eqref{eq:RxyCase1near}.
The condition $\dist{x,y}\leq \rho \sqrt{2\ln(1+c)}$
implies that 
$$\ksig(x,y)=\exp(-\frac{\norm{x-y}^2}{2\rho^2})\geq \exp(-\ln(1+c))=\frac{1}{1+c}.$$
This completes the proof together with \eqref{eq:RxyCase1near} and the fact that $\norm{K_{SS}^{-1}K_{Sy}}\leq \Gamma_2$.
\end{proof}

Theorem \ref{thm:lowerbound} discusses conditions and estimates for \emph{large} values of $\abs{\RSsig(x,y)}$.
\cdf{It states that, when $x,y$ are close and either $\dist{x,S}/\rho$ or $\dist{y,S}/\rho$ is sufficiently large,
then $\abs{\RSsig(x,y)}$ will be away from $0$.}
In fact, this can be seen from the formula
$\RSsig(x,y) = \ksig(x,y) - K_{xS}K_{SS}^{-1}K_{Sy}$.
When $x,y$ are close, $\ksig(x,y)\approx 1$;
When $\dist{x,S}/\rho$ is large, $K_{xS}\approx 0$.
This leads to $\RSsig(x,y)\approx 1$.
Theorem \ref{thm:lowerbound} can help understand the preliminary observation in Figure \ref{fig:prelim1}.
See also Figure \ref{fig:case1} for convenience.
Recall the setting for $\RSsig(x,y)$ over $\Omega\times\Omega$:
$\Omega=[0,1]$ and evenly spaced observations
$S=\{0.02,0.26,0.5,0.74,0.98\},$ with spacing $0.24$.
The bandwidth is $\rho=0.1$, which is \emph{small} compared to 0.24, the data spacing in $S$.
The four green regions in Figure \ref{fig:case1} represent the largest values in $\abs{\RSsig(x,y)}$ over the domain.
The locations $(x,y)$ of these dominant regions imply that:
\vspace{2mm}
\begin{center}
\emph{For small bandwidth $\rho$, $\abs{\RSsig(x,y)}$ will be large when $\norm{x-y}/\rho$ is small \emph{and} $x$, $y$ are not close to $S$.}
\end{center}
\vspace{2mm}
This is consistent with Theorem \ref{thm:lowerbound}.
In fact,
Theorem \ref{thm:lowerbound} indicates that,
small $\norm{x-y}/\rho$ and large $\dist{x,S}/\rho$ allows
a small $c\approx 0$ and a large $\hat{\omega}$, which will result in $|\RSsig(x,y)|=O(1)$ according to the estimate in \eqref{eq:RxyLower}.
{ 
In the limit $\rho\to 0$, the condition in Theorem \ref{thm:lowerbound} on $\dist{x,S}/\rho$ will be satisfied by almost all $x\in\Omega$ with a large $\hat{\omega}$ since $\dist{x,S}/\rho\to\infty$.
The estimate \eqref{eq:RxyLower} implies that for almost all $x\in \Omega$,
$\abs{\RSsig(x,y)}=O(1)$ in the small band where $\norm{y-x}=O(\rho)$.
Figure \ref{fig:case1smallsigma} illustrates $\abs{\RSsig(x,y)}$ for the case of small $\rho$: $\rho=0.01$ and $\rho=0.05$.
We see that the largest entries concentrate near the diagonal and the diagonal bandwidth is approximately $\norm{x-y}=O(\rho)$.
The pattern for the limit case $\rho\to 0$ is nicely illustrated by the left plot in Figure \ref{fig:case1smallsigma}.
}

\begin{figure}[htbp]
\centering
\includegraphics[scale=.4]{./fig/uf-sigma01.png}
\caption{$|\RSsig(x,y)|$ over $[0,1]\times [0,1]$ with $\rho=0.1$.}
\label{fig:case1}
\end{figure}

\begin{figure}[htbp]
\centering
\includegraphics[scale=.4]{./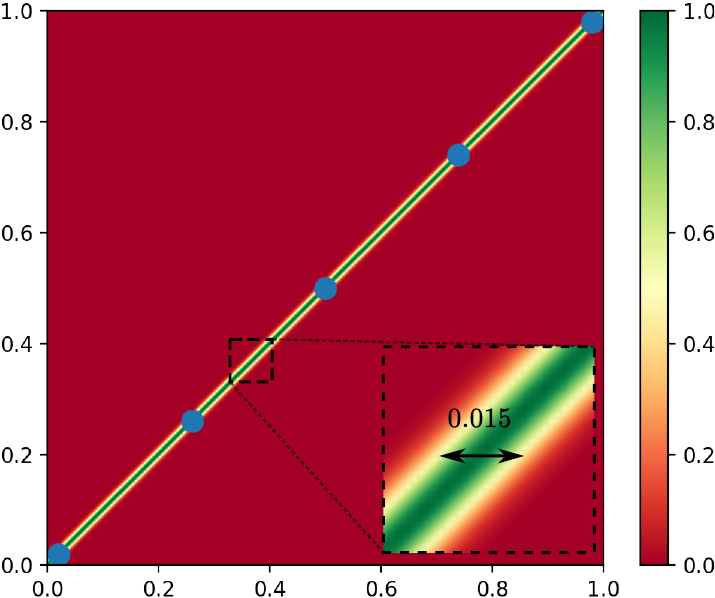}
\hspace{0.2in}
\includegraphics[scale=.4]{./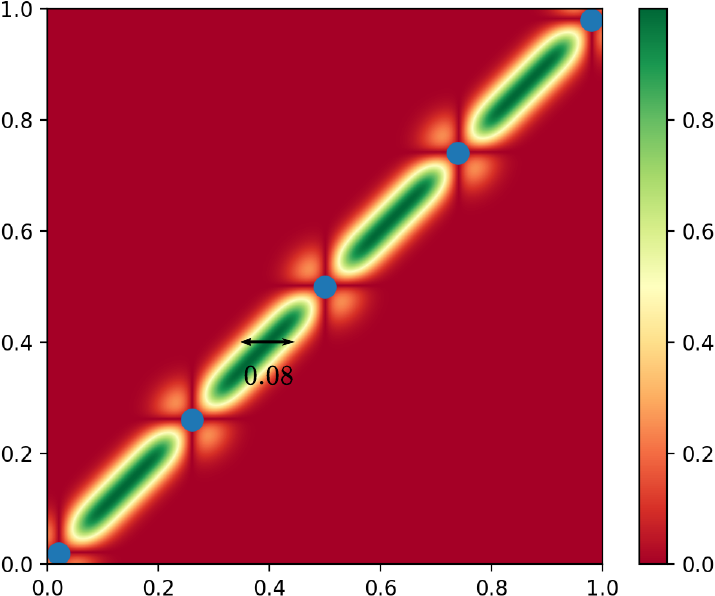}
\caption{$|\RSsig(x,y)|$ with $\rho=0.01$ (left), $\rho=0.05$ (right).}
\label{fig:case1smallsigma}
\end{figure}

A special case of Theorem \ref{thm:lowerbound} is when $x=y$.
We single out the special case as a corollary below, since it is useful to \cdf{characterize the behavior of the posterior variance $\Var(x)=\RSsig(x,x)$ in \eqref{eq:Var}.}
Corollary \ref{cor:variance} implies that if $x$ is relatively far from $S$ (in terms of $\rho$), then the posterior variance at $x$ will be large.
\begin{corollary}
\label{cor:variance}
Let $\ksig(x,y)$, $S$, $r$, $\RSsig(x,y)$, $\Gamma_p$ be defined as in Theorem \ref{thm:error1}.   
If $\dist{x,S}\geq \sqrt{2}\hat{\omega}\rho$ for some $\hat{\omega}>\left(\ln[\sqrt{r}\Gamma_2]\right)^{1/2}$,
then 
\begin{equation*}
    \Var(x)\geq 1-e^{-\hat{\omega}^2}\sqrt{r}\Gamma_2 > 0.
\end{equation*}
\end{corollary}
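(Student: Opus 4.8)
The plan is to obtain this as a direct specialization of Theorem \ref{thm:lowerbound} to the diagonal $x=y$, so that no new machinery is required. First I would set $y=x$ in the second estimate \eqref{eq:RxyLower}. Since $\dist{x,x}=0$, the hypothesis $\dist{x,y}\leq \sigma\sqrt{2\ln(1+c)}$ holds for \emph{every} $c\geq 0$, so I am free to select the value of $c$ that makes the bound tightest. The right-hand side $\frac{1}{1+c}$ is decreasing in $c$, hence the optimal choice is the boundary value $c=0$, giving $\frac{1}{1+c}=1$. Substituting $c=0$ and $y=x$ into \eqref{eq:RxyLower} then yields $\abs{\RSsig(x,x)}\geq 1-\sqrt{r}e^{-\hat{\rho}^2}\Gamma_2$, which is the first inequality in the claim.

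The remaining step is to verify strict positivity of this lower bound, and the hypothesis $\hat{\rho}>\left(\ln[\sqrt{r}\Gamma_2]\right)^{1/2}$ is designed for exactly this purpose. I would square both sides to get $\hat{\rho}^2>\ln[\sqrt{r}\Gamma_2]$, exponentiate to obtain $e^{\hat{\rho}^2}>\sqrt{r}\Gamma_2$, and rearrange to $e^{-\hat{\rho}^2}\sqrt{r}\Gamma_2<1$, whence $1-e^{-\hat{\rho}^2}\sqrt{r}\Gamma_2>0$. Chaining this with the first inequality completes the claimed bound $\abs{\RSsig(x,x)}\geq 1-e^{-\hat{\rho}^2}\sqrt{r}\Gamma_2>0$.

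There is essentially no obstacle here; the only point worth confirming is that the hypothesis is well-posed, i.e., that $\ln[\sqrt{r}\Gamma_2]$ is real so the square-root threshold for $\hat{\rho}$ makes sense. This is guaranteed because $r\geq 1$ and, by \eqref{eq:GammaS}, $\Gamma_2\geq 1$ (taking any $y\in S$ gives $\norm{K_{SS}^{-1}K_{Sy}}_2=1$), so $\sqrt{r}\Gamma_2\geq 1$ and the logarithm is nonnegative. Hence the set of admissible $\hat{\rho}$ is nonempty and the corollary follows immediately from Theorem \ref{thm:lowerbound}.
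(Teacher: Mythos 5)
Your proposal is correct and matches the paper's proof, which likewise obtains the corollary by applying Theorem \ref{thm:lowerbound} with $c=0$ (and $y=x$); the positivity check from the hypothesis on $\hat{\rho}$ is exactly the intended reading. Your extra remark that $\sqrt{r}\Gamma_2\geq 1$ (so the threshold on $\hat{\rho}$ is well-posed) is a small but accurate addition not spelled out in the paper.
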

\begin{proof}
    This is a direct result of Theorem \ref{thm:lowerbound} with $c=0$.
\end{proof}

\subsection{Large Bandwidth Case}
\label{sub:Case II}

\cdf{In this section, we focus on the case $\rho\to\infty$.}
Our analysis shows that, different from the \emph{small} bandwidth case, 
$\abs{\RSsig(x,y)}$ mainly depends on $\dist{x,S}$ and $\dist{y,S}$, as opposed to $\norm{x-y}$.  
Since the kernel $\ksig(x,y)$ associated with large $\rho$ is smoother than that in the \emph{small} bandwidth case,  $\abs{\RSsig(x,y)}$ will generally be closer to zero. As a result, the analysis in this section only aims to identify where the maxima of $\abs{\RSsig(x,y)}$ are likely to occur, and \cdf{``large'' is used in a relative sense, to refer to larger values of $\abs{\RSsig(x,y)}$ over the entire domain}.

\begin{theorem}
\label{thm:error2}
    Let $\ksig(x,y)=\exp(-\frac{\norm{x-y}^2}{2\rho^2})$ be the Gaussian kernel.
Define $\RSsig(x,y)$ as in Theorem \ref{thm:error1}.
For any non-empty subset $S\subseteq \mathbb{R}^d$ with $r$ points, the following estimate holds
\begin{equation}
\label{eq:kxyerror}
\abs{\RSsig(x,y)} \leq
\min\left[ (1+\sqrt{r}\norm{K_{SS}^{-1}K_{Sy}})\frac{\dist{x,S}}{\rho\sqrt{e}},\quad (1+\sqrt{r}\norm{K_{SS}^{-1}K_{Sx}})\frac{\dist{y,S}}{\rho\sqrt{e}} \right].
\end{equation}
\end{theorem}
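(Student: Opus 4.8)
The plan is to fix $y$, regard $\RSsig(\cdot,y)$ as a scalar function of the first variable alone, and combine the vanishing property of Theorem \ref{thm:vanish} with a gradient (Lipschitz) estimate to produce the first term in the minimum; the second term will then drop out by symmetry. Concretely, I would first choose a nearest neighbor $s^*\in S$ realizing $\dist{x,S}=\norm{x-s^*}$. By Theorem \ref{thm:vanish} we have $\RSsig(s^*,y)=0$, so that $\RSsig(x,y)=\RSsig(x,y)-\RSsig(s^*,y)$, and the fundamental theorem of calculus along the segment from $s^*$ to $x$ bounds $\abs{\RSsig(x,y)}$ by $L_R\,\dist{x,S}$, where $L_R:=\sup_t\norm{\nabla_t\RSsig(t,y)}$ is a Lipschitz constant of $t\mapsto\RSsig(t,y)$.

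The next step is to estimate $L_R$ explicitly. Writing $w:=K_{SS}^{-1}K_{Sy}\in\mathbb{R}^r$, the residual decomposes as $\RSsig(x,y)=\ksig(x,y)-\sum_{i}w_i\,\ksig(x,s_i)$, hence $\nabla_x\RSsig(x,y)=\nabla_x\ksig(x,y)-\sum_i w_i\,\nabla_x\ksig(x,s_i)$. Lemma \ref{prop:Lip} bounds the norm of each single-kernel gradient by $1/(\sigma\sqrt{e})$. Applying the triangle inequality to the data-dependent sum and then Cauchy--Schwarz in the form $\norm{w}_1\leq\sqrt{r}\,\norm{w}_2$ gives $\norm{\nabla_x\RSsig(x,y)}\leq \bigl(1+\sqrt{r}\,\norm{K_{SS}^{-1}K_{Sy}}\bigr)/(\sigma\sqrt{e})$. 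Substituting this into the Lipschitz bound yields the first entry of the minimum, namely $\abs{\RSsig(x,y)}\leq\bigl(1+\sqrt{r}\,\norm{K_{SS}^{-1}K_{Sy}}\bigr)\dist{x,S}/(\sigma\sqrt{e})$.

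Finally I would invoke symmetry. Since $\ksig$ is symmetric and $K_{SS}^{-1}$ is symmetric, one checks $K_{xS}K_{SS}^{-1}K_{Sy}=K_{yS}K_{SS}^{-1}K_{Sx}$ (the scalar equals its own transpose), so the residual is symmetric, $\RSsig(x,y)=\RSsig(y,x)$. Repeating the entire argument with the roles of $x$ and $y$ interchanged produces the companion bound $\abs{\RSsig(x,y)}\leq\bigl(1+\sqrt{r}\,\norm{K_{SS}^{-1}K_{Sx}}\bigr)\dist{y,S}/(\sigma\sqrt{e})$. Both bounds hold simultaneously, so taking the minimum gives \eqref{eq:kxyerror}.

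The main obstacle, though ultimately routine, is the gradient bound for the data-dependent term $K_{xS}K_{SS}^{-1}K_{Sy}$: one must track how the $r$ individual kernel-gradient contributions accumulate. The $\sqrt{r}$ factor in the statement is precisely the cost of converting the $\ell_1$ sum $\norm{w}_1$ of the coefficient weights into the $\ell_2$ quantity $\norm{K_{SS}^{-1}K_{Sy}}$ appearing in the theorem; keeping the constant $1/(\sigma\sqrt{e})$ sharp requires using Lemma \ref{prop:Lip} rather than a cruder gradient estimate.
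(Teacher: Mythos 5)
Your proposal is correct and follows essentially the same route as the paper's proof: anchor at a nearest observation point where $\RSsig$ vanishes (Theorem \ref{thm:vanish}), bound the increment by a Lipschitz/gradient estimate of $t\mapsto\RSsig(t,y)$ using Lemma \ref{prop:Lip} together with a H\"older/Cauchy--Schwarz step that produces the $\sqrt{r}\,\norm{K_{SS}^{-1}K_{Sy}}$ factor, and obtain the companion bound by symmetry in $x$ and $y$. No gaps.
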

\begin{proof}
%
%


    We first show that $\abs{\RSsig(x,y)}$ is no larger than the first entry in the minimum in \eqref{eq:kxyerror}.
    Let $s_x$ denote a point in $S$ such that 
    $\norm{x-s_x}=\dist{x,S}$.
    Since $\RSsig(s,y)=\RSsig(x,s)=0$ for any $s\in S$,
    we deduce from the Lipschitz continuity of $\RSsig(x,y)$ in $x$ that 
\begin{equation}
\label{eq:case2err1}
    \abs{\RSsig(x,y)} = \abs{\RSsig(x,y)-\RSsig(s_x,y)}
    \leq \sup_{t\in\mathbb{R}^d} \norm{\nabla_t \RSsig(t,y)} \norm{x-s_x} = \sup_{t\in\mathbb{R}^d} \norm{\nabla_t \RSsig(t,y)} \dist{x,S}.
\end{equation}
    Recall the definition $\RSsig(x,y) = \ksig(x,y) - K_{xS}K_{SS}^{-1}K_{Sy}$.
    To estimate $\sup\limits_{t\in\mathbb{R}^d} \norm{\nabla_t \RSsig(t,y)}$, by using the gradient estimate in \eqref{eq:Lip} and H{\"o}lder's inequality, we have
    \begin{align*}
        \sup_{t\in\mathbb{R}^d} \norm{\nabla_t \RSsig(t,y)} &\leq 
        \sup_{t\in\mathbb{R}^d} \norm{\nabla_t \ksig(t,y)} + \sup_{t\in\mathbb{R}^d} \norm{\nabla_t K_{tS}}\norm{K_{SS}^{-1}K_{Sy}}\\
        &\leq \frac{1}{\rho\sqrt{e}} + \frac{\sqrt{r}}{\rho\sqrt{e}}\norm{K_{SS}^{-1}K_{Sy}}.
    \end{align*}
    Therefore, we see that 
    $$ \abs{\RSsig(x,y)}\leq (1+\sqrt{r}\norm{K_{SS}^{-1}K_{Sy}})\frac{\dist{x,S}}{\rho\sqrt{e}}.$$
    Similarly, we can show that
    $$\abs{\RSsig(x,y)}\leq (1+\sqrt{r}\norm{K_{SS}^{-1}K_{Sx}})\frac{\dist{y,S}}{\rho\sqrt{e}}$$
    by viewing $\RSsig(x,y)$ as a function of $y$ and using the fact that $$\norm{K_{xS}K_{SS}^{-1}} = \norm{(K_{xS}K_{SS}^{-1})^T} = 
    \norm{K_{SS}^{-T}K_{xS}^T} = \norm{K_{SS}^{-1}K_{Sx}}.$$
    Taking the minimum of the two upper bounds yields \eqref{eq:kxyerror}.
\end{proof}

It should be noted that, though there is no restriction on $\rho$ in Theorem \ref{thm:error2}, the estimate \eqref{eq:kxyerror} becomes more meaningful when $\rho$ is relatively large compared to $\dist{x,S}$ and $\dist{y,S}$. 
This can be seen from the limit case: \cdf{as $\rho\to 0$, the bound blows up but $\RSsig(x,y)$ remains well-bounded. Hence the bound is not consistent with $\RSsig(x,y)$.}
On the other hand, as $\rho\to\infty$, the bound approaches zero, consistent with $\RSsig(x,y)$.
This indicates that the estimate is more suitable for the \emph{large} bandwidth case.
Numerically, as shown later in the plots in Section \ref{sub:plotbounds}, the estimate helps to capture the behavior of $\abs{\RSsig(x,y)}$ in case of large $\rho$ but completely misses when $\rho$ is small.
Additionally, unlike the ``small bandwidth case" discussed in Section \ref{sub:Case I}, the distance $\dist{x,y}$ does not appear in the estimate of $\RSsig(x,y)$ or in the assumptions.
The estimate implies that the quantity $\abs{\RSsig(x,y)}$ is mainly affected by the distance of $x,y$ to $S$ in the large bandwidth case.

Now let us demonstrate Theorem \ref{thm:error2} by reconsidering the example from Section \ref{sub:Case I} with $\rho=0.4$ 
\cdf{(which represents the \emph{large} bandwidth regime)}.
Figure \ref{fig:case2} shows the heat map of $\abs{\RSsig(x,y)}$ when $\rho=0.4$.
First of all, we can see a clear  ``tensor product'' structure (with 4-by-4 blocks) and 
different from Figure \ref{fig:case1}, there is no special structure near the diagonal, which indicates that $\dist{x,y}$ has little to no influence on $\abs{\RSsig(x,y)}$.
The bright (yellow or green) regions in the 4-by-4 grid correspond to larger values of $\abs{\RSsig(x,y)}$ and imply that:
\vspace{2mm}
\begin{center}
\emph{For large bandwidth $\rho$, $\abs{\RSsig(x,y)}$ will be larger when $\dist{x,S}/{\rho}$ and $\dist{y,S}/{\rho}$ are larger.}
\end{center}
\vspace{2mm}
Such a geometric characterization is indeed consistent with the theoretical result in Theorem \ref{thm:error2}.
In fact, the upper bound estimate of $|\RSsig(x,y)|$ in Theorem \ref{thm:error2} increases with $\dist{x,S}$ and $\dist{y,S}$.
This indicates that $|\RSsig(x,y)|$ is likely to increase as  $\dist{x,S}$ and $\dist{y,S}$ increase.
Meanwhile, the upper bound in Theorem \ref{thm:error2} vanishes when $x\in S$ or $y\in S$,
which is consistent with the interpolation property in Theorem \ref{thm:vanish} that $\RSsig(s,y)=\RSsig(x,s)=0$ whenever $s\in S$.
In Figure \ref{fig:case2}, this property corresponds to the axis-parallel dark red grid lines, located at $x=0.02, 0.26, \dots$, and $y=0.02, 0.26, \dots$.
In addition to the ``tensor product'' structure, another noticeable observation is that the two green regions, regions with the most dominant values, are near the corners of the plot.
The two regions are where $x$ and $y$ are near the boundary of the domain $[0,1]$, i.e. $x$ and $y$ are close to $0$ or $1$.
We call this ``the boundary effect'' and provide a tentative explanation below.

\begin{figure}[htbp] 
\centering 
\includegraphics[scale=.35]{./fig/uf-sigma04.png}
\caption{$|\RSsig(x,y)|$ over $[0,1]\times [0,1]$ with $\rho=0.4$.}
\label{fig:case2}
\end{figure}

\paragraph{The ``boundary effect''}
In addition to the distance 
$\dist{x,S}:=\inf\limits_{s\in S}\norm{x-s}$,
it is possible to use other metrics.
For example, inspired by the 2-norm, we can use 
\begin{equation}
\label{eq:2metrics}
\sum_{i=1}^r \norm{x-s_i}^2\quad\text{or a weighted version}\;\, \sum_{i=1}^r \alpha_i\norm{x-s_i},\quad\text{where}\;\, S=\{s_i\}_{i=1}^r,
\end{equation}
as a metric to evaluate how far $x$ is to $S$.
Here $\alpha_i\geq 0$ and $\sum\limits_{i=1}^r \alpha_i=1$.

The metrics in \eqref{eq:2metrics} exhibit a ``cumulative'' effect, unlike \(\dist{x,S}\), which only considers the distance to the closest point in \(S\). To illustrate, consider two points \(x\) and \(x'\) with \(\dist{x,S} = \dist{x',S}\). If most points in \(S\) are far from \(x\) but only a few points in \(S\) are far from \(x'\), \(x\) will be farther from \(S\) compared to \(x'\) in the metrics defined in \eqref{eq:2metrics}. In Figure \ref{fig:case2}, points near the lower left corner and upper right corner have large distances to $S$ according to these metrics.

The weighted metric in \eqref{eq:2metrics} can be used to derive a new estimate which generalizes \eqref{eq:case2err1} in the following way: 
\begin{equation}
\label{eq:case2errWeights}
    \begin{aligned}
    \abs{\RSsig(x,y)} = \abs{\RSsig(x,y) - \sum_{i=1}^r \alpha_i \RSsig(s_i,y)}
    &= \abs{\sum_{i=1}^r \alpha_i \left(\RSsig(x,y) - \RSsig(s_i,y)\right)} \\
    &\leq \sum_{i=1}^r \alpha_i \norm{x - s_i} \sup_{t \in Q_i} \norm{\nabla_t \RSsig(t,y)},
    \end{aligned}    
\end{equation}
where $Q_i$ denotes an open cover that contains the line from \(x\) to \(s_i\). 

If $s_m \in S$ is a point in $S$ closest to $x$, by setting $\alpha_m = 1$ and all other weights $\alpha_i = 0$ ($i\neq m$), only one term remains in the sum in \eqref{eq:case2errWeights}, resulting in a bound that matches \eqref{eq:case2err1} if we choose $Q_m = \mathbb{R}^d$.

The above new estimate of $\abs{\RSsig(x,y)}$ offers a different perspective from \eqref{eq:case2err1} by considering the influence of all points in $S$. This estimate is particularly relevant in the \emph{large} bandwidth case as $\rho\to\infty$. In this scenario, the kernel $\ksig(x,y)$ and $\RSsig(x,y)$ both become more ``flat''. As a result, with \(\RSsig\) exhibiting small variation, its gradient will be small, making the Lipschitz estimate of \(\left(\RSsig(x,y) - \RSsig(s_i,y)\right)\) more accurate. 


\subsection{Illustration of the bounds in different cases}
\label{sub:plotbounds}
In this section, we show that the geometric upper bound estimates in \eqref{eq:RxyCase1far} and \eqref{eq:kxyerror} 
and lower bound estimate in \eqref{eq:RxyCase1near} can accurately capture the pattern of $\vert \RSsig(x,y)\vert$. Moreover, we demonstrate why it is necessary to divide the discussion into two different cases and how much distinctive the patterns from the two cases are.
For example, we clarify why the estimate derived from \eqref{eq:RxyCase1far} in Theorem \ref{thm:error1}  is valid in the context of ``small bandwidth and large $\dist{x,y}$" but becomes invalid in other situations (as detailed in Theorem \ref{thm:lowerbound} and Theorem \ref{thm:error2}). We also illustrate why the estimate derived from \eqref{eq:kxyerror}  in Theorem \ref{thm:error2}  is not suitable for cases involving small bandwidth.

We consider the same problem setup as in Section \ref{sub:Case I} for Figure \ref{fig:case1small}.
That is, $\Omega=[0,1]$ and $S=\{0.02,0.26,0.5,0.74,0.98\}$.
To visualize the pattern of $\abs{\RSsig(x,y)}$, 
we fix $y_*=0.15$ away from $S$ and plot $\abs{\RSsig(x,y_*)}$ for $x$ under three different scenarios, which reflect the assumptions in Theorem \ref{thm:error1}, Theorem \ref{thm:lowerbound} and Theorem \ref{thm:error2}, respectively:\vspace{1mm}
\cdf{
\begin{itemize}
    \item Condition 1 (Theorem \ref{thm:error1}): $\rho=0.05$ and $x\in (0.3,1)$ (``small $\rho$, large $\dist{x,y_*}$'');\vspace{1mm}
    \item Condition 2 (Theorem \ref{thm:lowerbound}): $\rho=0.05$ and $x\in(0,0.3)$ (``small $\rho$, small $\dist{x,y_*}$'');\vspace{1mm}
    \item Condition 3 (Theorem \ref{thm:error2}): $\rho=0.4$ and $x\in\Omega$ (``large $\rho$'').\vspace{1mm}
\end{itemize}
}
Note that in the case of small $\rho$, $\dist{y_*,S}$ is considered noticeably large and thus fulfills the condition $\dist{y,S}\geq \sqrt{2}\hat{\omega}\rho$ with a decent $\hat{\omega}$ (away from $0$) in Theorem \ref{thm:error1} and Theorem \ref{thm:lowerbound}.

For each scenario above, we plot in Figure \ref{fig:bounds} the true $\abs{\RSsig(x,y_*)}$, and the following estimates. 
As mentioned above, the goal is to show that $\RSsig$ behaves quite differently in different scenarios, and how the estimates indicate the different behaviors. Here, the estimates $\eta_1$ and $\eta_2$ for the small bandwidth case are obtained by taking ${\omega}=\frac{\dist{y_*,x}}{\sqrt{2}\rho}$ in \eqref{eq:RxyCase1far} and $\hat{\omega}=\frac{\dist{y_*,S}}{\sqrt{2}\rho}$ in \eqref{eq:RxyCase1near}. The estimate $\eta_3$ for the large bandwidth case is obtained by simply taking the distance metric $\dist{x,S}$ from the upper bound in Theorem \ref{thm:error2} in order to visualize the change in $x$. The factor in front of $\dist{x,S}$ in the bound is multiplicative and is independent of $x$.\vspace{1mm}
\begin{enumerate}
    \item Case I Estimate 1: 
    $\eta_1(x) = \ksig(x,y_*) + \sqrt{r}e^{-\hat{\omega}^2}\norm{K_{SS}^{-1}K_{Sy_*}}$;\vspace{1mm}
    \item Case I Estimate 2: 
    $\eta_2(x) = \ksig(x,y_*) - \sqrt{r}e^{-\hat{\omega}^2}\norm{K_{SS}^{-1}K_{Sy_*}}$;\vspace{1mm}
    \item Case II Estimate: $\eta_3(x) = \dist{x,S}$ from Theorem \ref{thm:error2}.\vspace{1mm}
\end{enumerate}
Since we are comparing the pattern of each estimate to $|\RSsig(x,y_*)|$, we re-scale each quantity such that the maximum of the estimate is equal to that of $|\RSsig(x,y_*)|$ over $x$ in each condition. For example, for the third estimate, we plot 
$$\frac{\eta_3(x)}{\max\limits_{x\in\Omega} \eta_3(x)}\times \max\limits_{x\in\Omega} |\RSsig(x,y_*)|.$$
For the first estimate, we plot 
$$\frac{\eta_1(x)}{\max\limits_{\dist{x,y_*}>3\rho} \eta_1(x)} \times \max_{\dist{x,y_*}>3\rho} |\RSsig(x,y_*)|.$$
For $\rho=0.05$ in the small bandwidth case, $\{x\in [0,1]: \dist{x,y_*}>3\rho\}=(0.3,1]$ in Condition 1
and $\{x\in [0,1]: \dist{x,y_*}<3\rho\}=[0,0.3)$ in Condition 2.
We emphasize that $\eta_2(x)$ (developed for the small bandwidth case) can be meaningless (i.e. $\eta_2(x)<0$) if $\rho$ is large, which already implies why $\eta_2$ is not suitable for Condition 3.
Nonetheless, we include the plot of $|\eta_2(x)|$ for Condition 3 in Figure \ref{fig:bounds}(right) for completeness.

The patterns of $|\RSsig(x,y_*)|$ and re-scaled estimates are shown in Figure \ref{fig:bounds}, where the three plots correspond to the three scenarios from Condition 1 to Condition 3, respectively.
It is easily seen that for Condition 1 (left plot), the estimate $\eta_1(x)$ captures the behavior of $\abs{\RSsig(x,y_*)}$ nicely, while $\eta_3(x)$, developed for the \emph{large} bandwidth case, completely misses the correct pattern.
For Condition 2 (middle plot), the behavior of $\eta_2$ is similar to $|\RSsig(x,y_*)|$, while again, $\eta_3$ does not capture the behavior correctly.
For the \emph{large} bandwidth case in Condition 3 (right plot), only $\eta_3$ reflects the behavior of $|\RSsig(x,y_*)|$, and $\eta_1$, $\eta_2$, which are derived for the \emph{small} bandwidth case, are not suitable here.
We see that even though Theorem \ref{thm:error2} does not impose any condition on $\rho$, it is suitable for the case of \emph{large} bandwidth only.
\cdf{It should be emphasized that the three quantities $\eta_1$, $\eta_2$, $\eta_3$ above are used to demonstrate the effectiveness of the estimates in different contexts. They are not necessarily computationally efficient or useful in practice. Practical indicators that are efficient to compute will be presented in Section \ref{sub:post}.}

\begin{figure}
    \centering
    \includegraphics[scale=0.33]{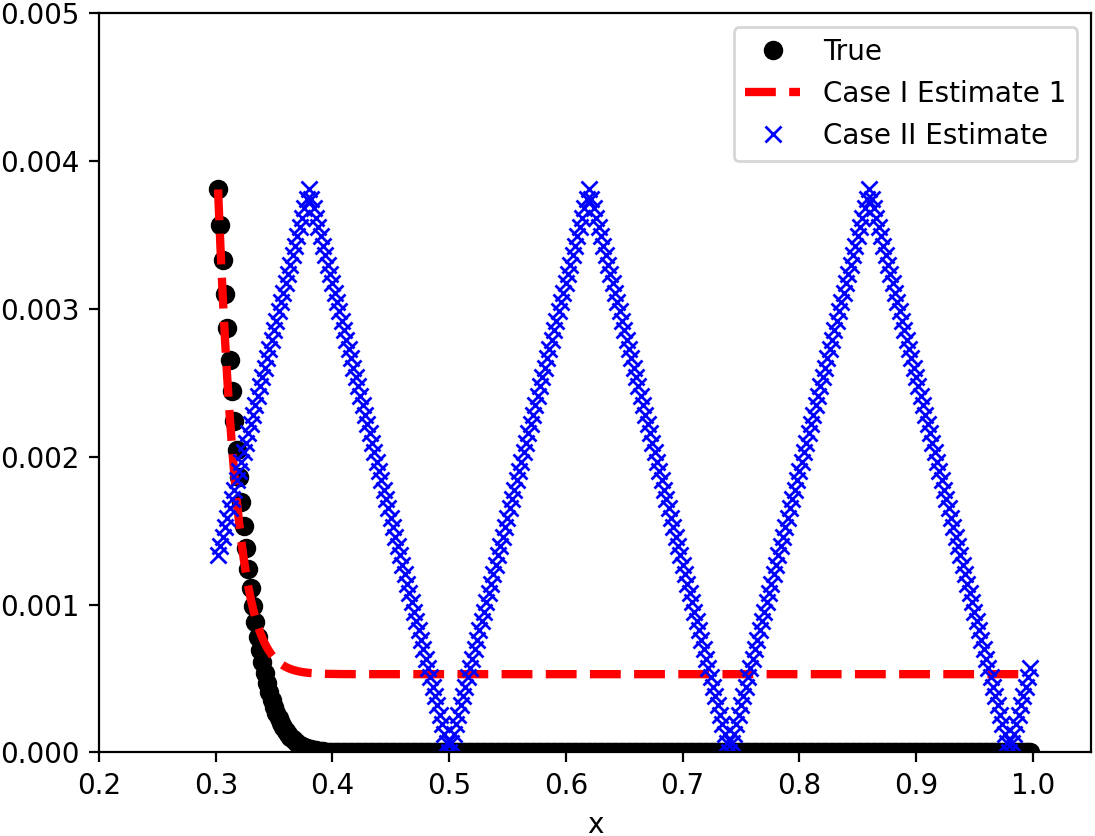}
    \includegraphics[scale=0.33]{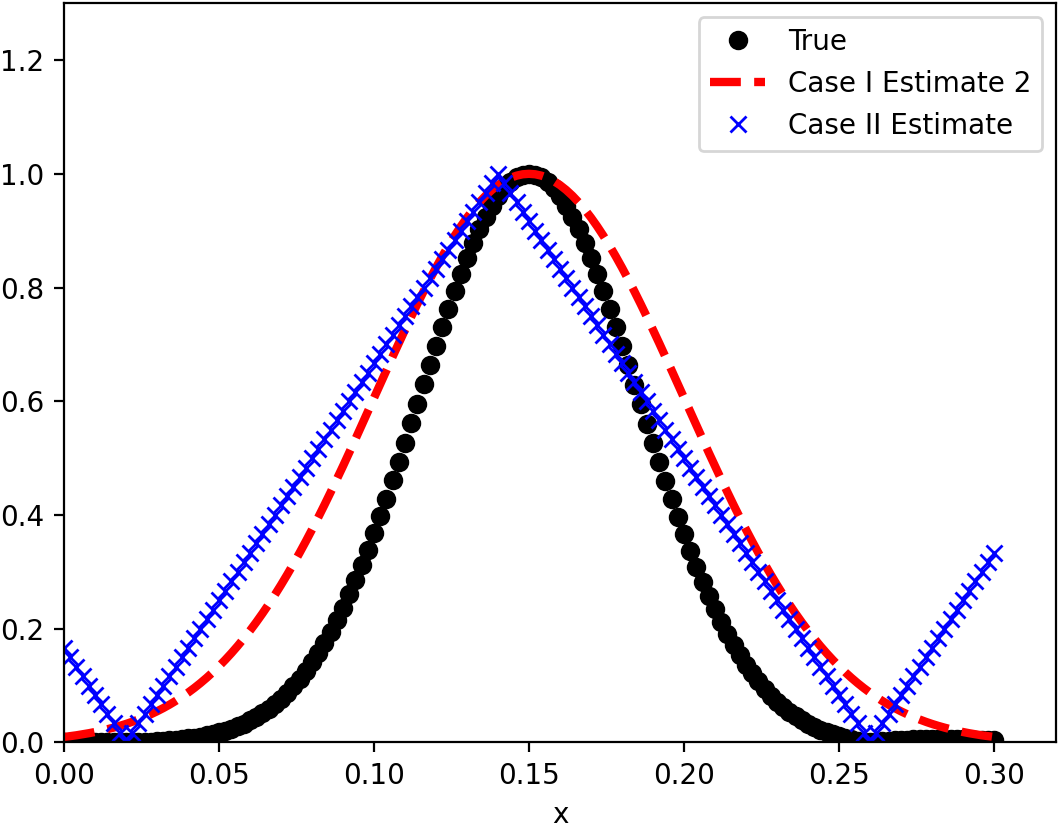}
    \includegraphics[scale=0.33]{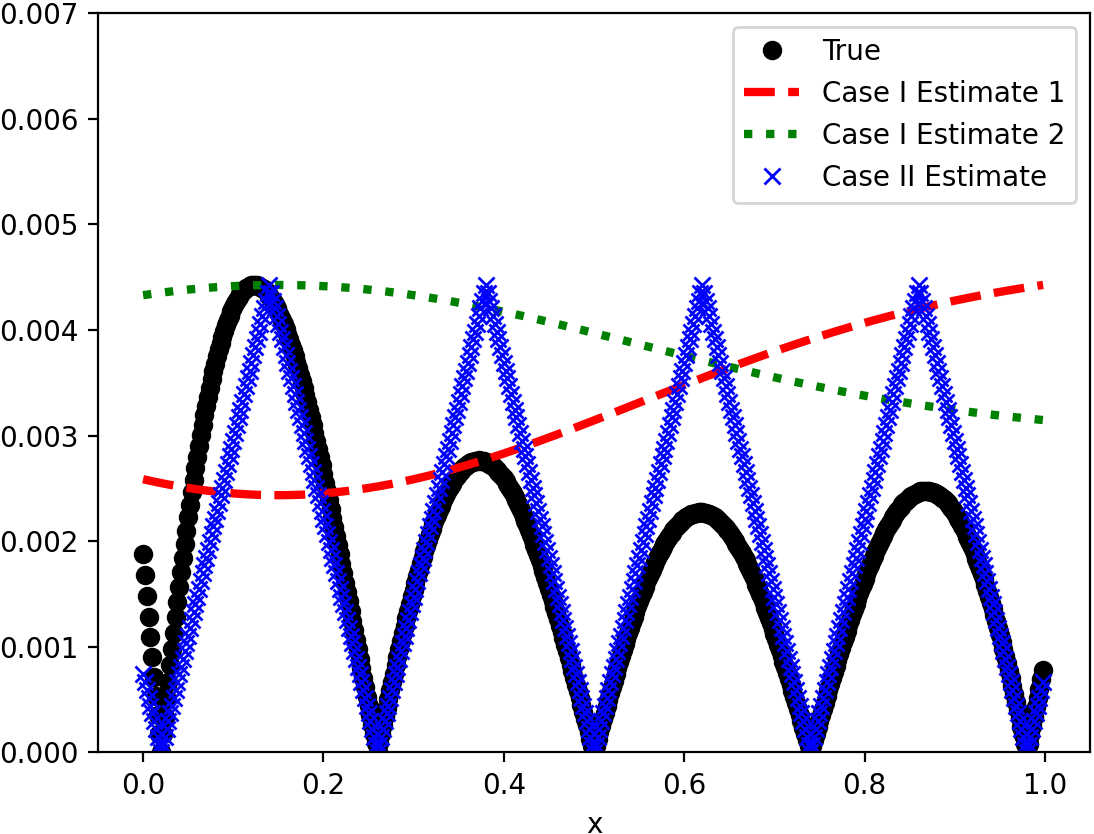}
    \caption{Illustration of $|\RSsig(x,y_*)|$(black dots) and re-scaled estimates in three scenarios (reflecting the conditions in Theorem \ref{thm:error1}, Theorem \ref{thm:lowerbound}, Theorem \ref{thm:error2}). Left: small $\rho$ and large $\dist{x,y_*}$; Middle: small $\rho$ and small $\dist{x,y_*}$; Right: large $\rho$.}
    \label{fig:bounds}
\end{figure}

\subsection{Noisy observation case: $\tau>0$}
In case of noisy observations, i.e. $\tau>0$ in the prior distribution \eqref{eq:GPprior},
the posterior distribution given the observations $(S,y)$ becomes
$$f_*|X_*,S,y \;\sim\;  \mathcal{N}\left(K_{X_* S}(K_{SS}+\tau^2 I)^{-1}y,\; \RSsig(X_*,X_*) \right),$$
where the posterior covariance is
$$\RSsig(u,v):=\ksig(u,v)-K_{uS}(K_{SS}+\tau^2 I)^{-1}K_{Sv}.$$
To illustrate the impact of $\tau$ on $\RSsig$, we show in Figure \ref{fig:tau} the plots of $\abs{\RSsig\cdcd}$ in several cases with $\tau=0,0.01$ and different $\rho$ values.
It can be seen from Figure \ref{fig:tau} that the impact of $\tau$ is not noticeable except when $\rho$ is sufficiently large ($\rho=0.6$ in the right-most column). This is because when $\rho$ is sufficiently large, the prior covariance $\ksig(x,y)$ is so smooth that the matrix $K_{XX}$ has rapidly decaying singular values, thus quite small numerical rank. 
Figure \ref{fig:tauSVD} illustrates the singular value decay of $K_{XX}$ for two cases: $\rho=0.6$ and $\rho=0.1$,
where $X$ contains 500 equispaced points in $[0,1]$.
It is easy to see that $\rho=0.6$ yields a much faster singular value decay than $\rho=0.1$.
Therefore, in the case of large bandwidth, if $\tau>0$ is not close to zero, the matrices $K_{SS}$ and $K_{SS}+\tau^2 I$ involved in $\RSsig$ have quite different spectral structures:
$K_{SS}$ is nearly singular with most singular values close to zero while $K_{SS}+\tau^2 I$ is much better conditioned with all singular values above $\tau^2$. 
This makes a substantial difference in the matrix inverse $K_{SS}^{-1}$ (noise-free case) or $(K_{SS}+\tau^2 I)^{-1}$ (noisy case) in the definition of $\RSsig$.

On the other hand, when $\rho$ is small, small $\tau$ has little influence on the structure of $\ksig(x,y)$. This is due to the fact that the inverse of $K_{SS}$ is much less sensitive to small perturbations such as $K_{SS} + \tau^2 I$ compared to when $\rho$ is large. Specifically, for a small $\tau > 0$, $K_{SS}^{-1}$ and $(K_{SS} + \tau^2 I)^{-1}$ will differ significantly if $\rho$ is large, but will be close if $\rho$ is small. 
Consequently, $\RSsig$ will not change much from $\tau=0$ to $\tau>0$ in the case of small $\rho$, while it will change a lot if $\rho$ is quite large.
This explains why the two plots in each column in Figure \ref{fig:tau} for $\rho \leq 0.4$ show almost identical patterns and the column for $\rho=0.6$ shows different patterns for $\tau=0$ and $\tau=0.01$. Furthermore, when $\rho$ is large, as seen with $\rho = 0.6$ in Figure \ref{fig:tau}, the magnitude of $\RSsig(x,y)$ is globally close to zero, around the order of $10^{-4}$. Thus, the case most affected by $\tau$ is not particularly interesting. However, a high noise level, such as $\tau = 1$, will substantially influence $\RSsig$. Since the consideration of $\tau$ adds another layer of complexity in addition to the discussion on $\rho$, $S$, $x,y$, it will be studied in future work.

\begin{figure}
    \centering
    \includegraphics[scale=0.5]{./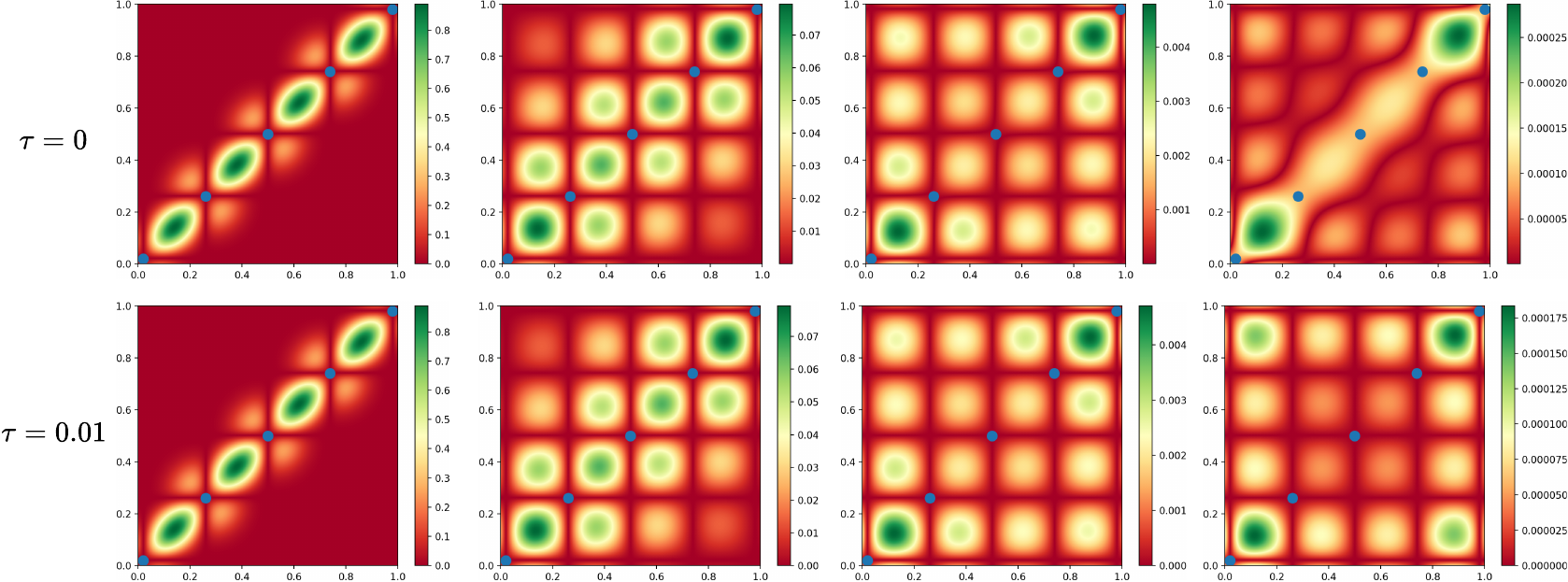}
    \caption{$|\RSsig\cdcd|$ for $\tau=0$(top) and $\tau=0.01$(bottom) with different $\rho$ values (left to right): $\rho=0.1, 0.25, 0.4, 0.6$}
    \label{fig:tau}
\end{figure}

\begin{figure}
    \centering
    \includegraphics[scale=0.4]{./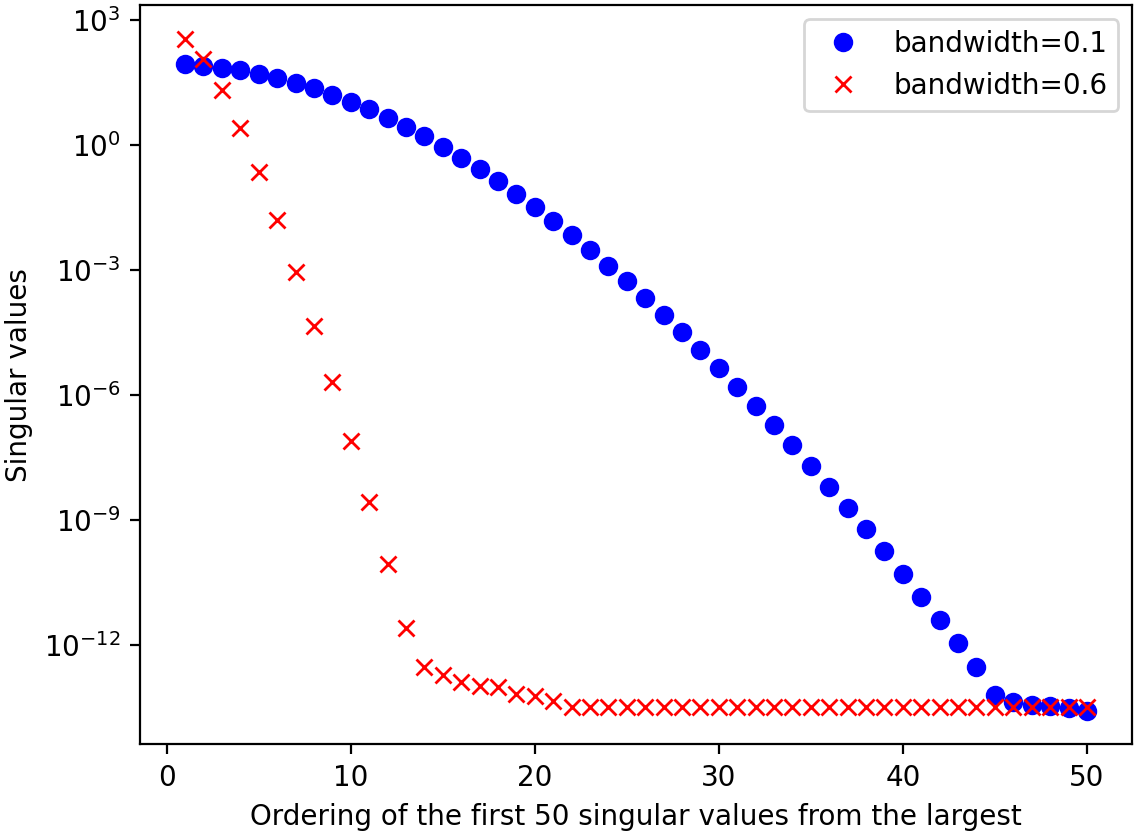}
    \caption{The largest 50 singular values of $K_{XX}$: $\rho=0.1$ (blue disks) and $0.6$ (red crosses). $X$ contains $500$ equispaced points in $[0,1]$.}
    \label{fig:tauSVD}
\end{figure}

Computationally, the added noise acts as a benign regularization, making numerous matrix operations easier to perform compared to the noise-free case. We can see this by looking at the two regimes discussed above: small $\rho$ and large $\rho$.
When $\rho$ is small, $K_{SS}$ is symmetric positive definite and away from being ill-conditioned.
The added noise $\tau^2$ makes the linear system associated with $(K_{SS}+\tau^2 I)$ 
easier to solve by iterative methods such as CG than in the noise-free case since the matrix $K_{SS}+\tau^2 I$ is even better conditioned with a spectrum farther away from zero than $K_{SS}$ in the noise-free case.
When $\rho$ is larger, $K_{SS}$ tends to have smaller rank and can be approximated well by a low-rank factorization. In this case, a good approximation to  
$(K_{SS}+\tau^2 I)^{-1}$ can be efficiently computed  based on the Sherman-Morrison-Woodbury formula. Overall, a nonzero noise parameter $\tau>0$ ``regularizes'' the problem and often leads to easier computational tasks than the noise-free case with $\tau=0$.

\subsection{Geometric Posterior Covariance Indicators}
\label{sub:post}
\cdf{The effectiveness of the estimates has been illustrated in Section \ref{sub:plotbounds}, but these estimates may not be efficient to compute at a large number of locations, as explained at the end of Section \ref{sub:plotbounds}. 
This section aims to leverage the theoretical estimates to develop practical indicators for $\abs{\RSsig(x,y)}$} to help discover the distribution of large values, i.e. finding locations with more dominant values.
For applications in Gaussian processes, we also propose posterior variance indicators, i.e. for $\RSsig(x,x)$. When designing indicators, the ease of computation is of critical importance in engineering practice (cf. \cite{ZZ1987,ZZ1992,verf2013book}).
Hence the computational efficiency of the indicators is often the top priority.
We will show that the proposed indicators are easy to compute, without the need to solve linear systems.
In terms of complexity, assuming that $S$ contains $r$ observations, the evaluation cost of these indicators at one point can be as low as $O(r)$, as compared to $O(r^3)$ using the direct calculation of $\abs{\RSsig(x,y)}$. 

The analysis in Section \ref{sub:Case I} and Section \ref{sub:Case II} shows that the distance to the observation data $S$ (relative to $\rho$) plays an important role in inferring the magnitude $\abs{\RSsig(x,y)}$.
Inspired by the $l_p$ norm, we define the metrics below to measure the distance from an arbitrary point $x$ to $S$ with respect to $\rho$:
\begin{equation}
\begin{aligned}
    h_{\infty}(x) &:= \rho^{-1}\dist{x,S},\\
    h_2(x) &:= \rho^{-1} \left( \sum_{s_i\in S} \norm{x-s_i}^2  \right)^{1/2}.
\end{aligned}
\end{equation}
Based on the distance metrics above,
we design \emph{relative indicators} $g(x,y)$ to capture the distribution of $\abs{\RSsig(x,y)}$ over $\Omega\times\Omega$
in the sense that { $\frac{g(x,y)}{\max\limits_{x,y\in\Omega} g(x,y)}$ is similar to $\frac{\abs{\RSsig(x,y)}}{\max\limits_{x,y\in\Omega}\abs{\RSsig(x,y)}}$.}
Then an \emph{absolute indicator} can be defined as 
\begin{equation*}
\max_{x,y\in\Omega}\abs{\RSsig(x,y)} 
\times \frac{g(x,y)}{\max\limits_{x,y\in\Omega} g(x,y)}.
\end{equation*}
If one is interested in the location of larger values of $\abs{\RSsig}$, then the relative indicator is sufficient.

For the small bandwidth case,
we propose a relative indicator of $\abs{\RSsig(x,y)}$ as
\begin{equation}
\label{eq:gsmall}
    g(x,y) := \sqrt{h_{\infty}(x)h_{\infty}(y)}\ksig(x,y).
\end{equation}

For the large bandwidth case,
we define the relative indicator as
\begin{equation}
\label{eq:glarge}
    g(x,y) := h_{\infty}(x)h_{\infty}(y)h_2(x)h_2(y).
\end{equation}
Note that the indicator in \eqref{eq:glarge} does \emph{not} involve the kernel $\ksig(x,y)$. 
This makes sense since for large $\rho$, $\ksig(x,y)$ varies slowly over the domain and the pattern of $\RSsig(x,y)$ is quite different from $\ksig(x,y)$ according to the experiments in Section \ref{sub:Case II}. It is easy to see that the computational cost of $g(x,y)$ grows \emph{linearly} in the number of observations in $S$.

In Figures \ref{fig:est-unif} and \ref{fig:est-non},
we plot the true absolute posterior covariance function $\abs{\RSsig(x,y)}$ and the estimated function 
\begin{equation}
\max_{x,y\in\Omega}\abs{\RSsig(x,y)} 
\times \frac{g(x,y)}{\max_{x,y\in\Omega} g(x,y)}
\quad (x,y)\in\Omega\times\Omega,
\end{equation}
where $\Omega=[0,1]$ and $S$ contains 5 points.
Two cases of $S$ are shown: uniform and non-uniform, in Figure \ref{fig:est-unif} and Figure \ref{fig:est-non}, respectively.
For $\rho=0.4$,
we use the indicator $g(x,y)$ in \eqref{eq:glarge}.
For smaller values $\rho<0.3$, we use the indicator $g(x,y)$ in \eqref{eq:gsmall}.
It can be seen that the indicator is able to approximately capture the pattern of the true posterior covariance, particularly in areas with large values.
The indicators will be useful when the exact function $\RSsig$ is too costly to compute due to the large-scale observation data $S$ or the numerical difficulty in dealing with $K_{SS}^{-1}$ in $\RSsig$.
In many applications, the task is often not to calculate $\RSsig(x,y)$, but to determine where $\abs{\RSsig(x,y)}$ is large over the domain $\Omega\times\Omega$.
The theory and relative indicators provide a geometric characterization of the relatively ``important'' locations, which allows straightforward calculations of these locations without directly evaluating $\RSsig(x,y)$ using the formula in Definition \ref{eq:R}.
The computational complexity for \eqref{eq:gsmall} or \eqref{eq:glarge} at each location is \emph{optimal}, i.e. $O(r)$ for $r$ observations in $S$.
Hence the indicators are useful in quickly identifying the locations with large variance or covariance magnitude in the posterior distribution.
Some applications in numerical linear algebra are presented in Sections \ref{sub:app-Approximation} and \ref{sub:app-Preconditioning}.

\begin{figure}[htbp]
\centering 
\includegraphics[scale=.32]{./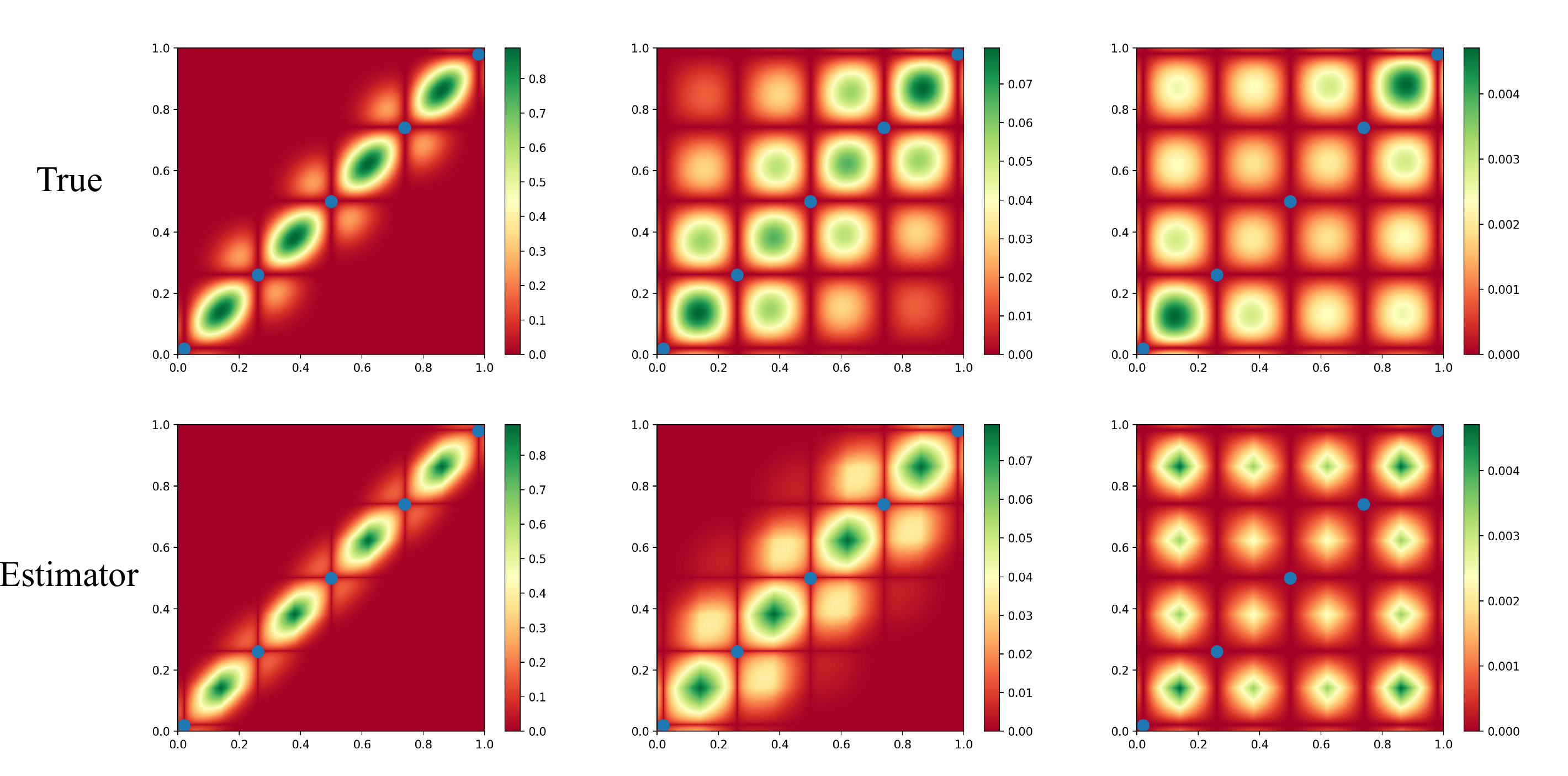}
\caption{True posterior covariance (top) vs indicator (bottom): uniform observations and different $\rho$ values (left to right): $\rho=0.1, 0.25, 0.4$.}
\label{fig:est-unif}
\end{figure}

\begin{figure}[htbp]
\centering 
\includegraphics[scale=.32]{./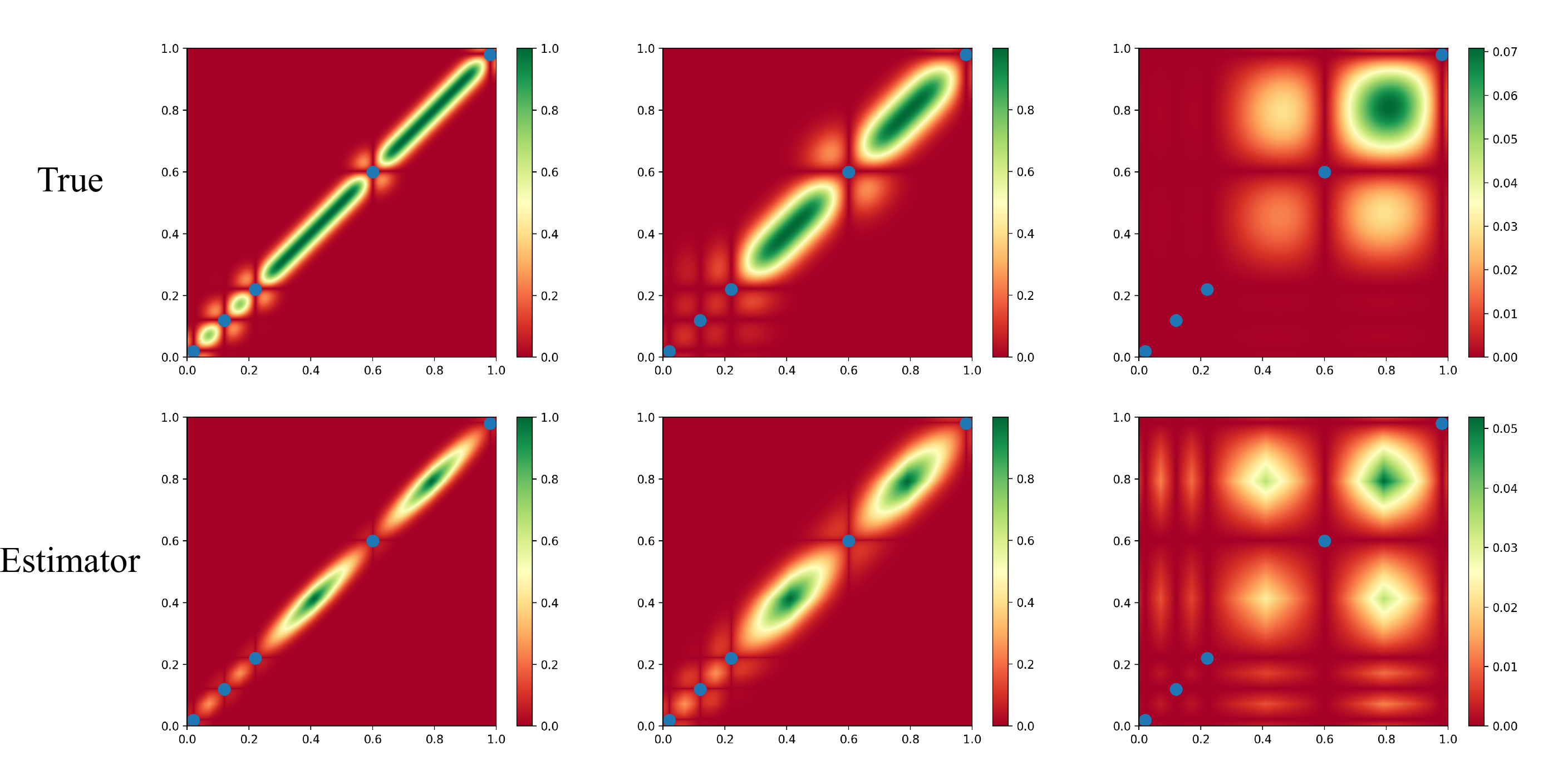}
\caption{True posterior covariance (top) vs indicator (bottom): non-uniform observations and different $\rho$ values (left to right): $\rho=0.1, 0.25, 0.4$.}
\label{fig:est-non}
\end{figure}

{
\textbf{Posterior variance indicators.}
\cdf{In Gaussian processes, the posterior variance $\Var(x)$ at $x$ (conditioned on $S$) is defined in \eqref{eq:Var}.}
To facilitate the computation of $\Var(x)$ without having to evaluate $K_{SS}^{-1}K_{Sx}$ for \emph{all} $x$, in the following, we propose the indicators below, depending on the case of small or large bandwidth.}

For the case of \emph{small} bandwidth,
inspired by Corollary \ref{cor:variance},
we construct the posterior variance indicator:
\begin{equation}
\label{eq:VarEst}
    \mathcal{V}(x) := 1-\exp\left(-\frac{\nu^2(x)}{2\rho^2}\right)\quad 
    \text{with}\quad\nu(x)=\dist{x,S}.
\end{equation}
\cdf{We explain how Corollary \ref{cor:variance} inspires the above choice of $\mathcal{V}$ as follows.
Note that Corollary \ref{cor:variance} accounts for the small bandwidth regime, and in the small bandwidth limit, the constant $\Gamma_2\to 1$ according to Proposition \ref{prop:limitCase1}. Hence to simplify the computation in this case, it makes sense to set $\Gamma_2=1$ in view of the estimate in Corollary \ref{cor:variance}. We also set $r=1$ so that $1-e^{-\hat{\omega}^2}$ is always nonnegative. 
It remains to explain the choice of $\hat{\omega}$. Here we simply choose $\hat{\omega}$ to be the value that achieves the threshold in the condition: 
$\dist{x,S}\geq\sqrt{2}\hat{\omega}\rho$. That is, $\hat{\omega}=\dist{x,S}/(\sqrt{2}\rho)$. This is the largest $\hat{\omega}$ possible that fulfills the condition and gives a more meaningful estimate than other choices (for example, $\hat{\omega}=0$). This choice is also used in deriving the estimates in Section \ref{sub:plotbounds} for the small bandwidth case.}

Note that $\mathcal{V}(x)$ can be computed easily.
The complexity for computing $\mathcal{V}(x)$ is $O(r)$ for $r$ observations in $S$ while the complexity is $O(r^3)$ for directly computing $\Var(x)$ due to the inversion $K_{SS}^{-1}$.
More generally, for the prior covariance in \eqref{eq:cov},
the optimal values of $\sigma^2$ and $\rho$ will to be computed via maximum likelihood estimation in \eqref{eq:MLE}.
In this general case (with possibly non-unit variance $\sigma^2$),
to estimate the \emph{posterior} variance,
we modify the indicator in \eqref{eq:VarEst} slightly to account for the $\sigma^2$ factor:
\begin{equation}
    \label{eq:GP-var}
    \mathcal{V}_{\sigma^2}(x) := \sigma^2\left[1-\exp\left(-\frac{\nu^2(x)}{2\rho^2}\right)\right]\quad 
    \text{with}\quad\nu(x)=\dist{x,S}.
\end{equation}
Finally, $\sqrt{\mathcal{V}_{\sigma^2}(x)}$ serves as an indicator for the posterior standard deviation.
The indicator \eqref{eq:GP-var} is exact if $x$ is an observation point, i.e. 
$\mathcal{V}_{\sigma^2}(x)=\sigma^2(1-1)=0$ if $x\in S$.
If $x$ is far from any observation point, then $\mathcal{V}_{\sigma^2}(x)$ converges to the \emph{prior} variance:
$$\mathcal{V}_{\sigma^2}(x) \to \sigma^2(1-0)=\sigma^2\quad\text{as}\;\, \dist{x,S}\to\infty.$$

For the case of \emph{large} bandwidth,
Theorem \ref{thm:error2} indicates that the variance at a point increases with the distance to the observation set.
Inspired by Theorem \ref{thm:error2}, we use the exact posterior variance at certain points (called reference points) as reference values and construct the posterior variance by comparing the location to the closest reference point.
The reference points are chosen to occupy the ``gap'' between points in $S$ and to stay away from $S$. 
For $S\subseteq \mathbb{R}$, the reference points can be chosen as midpoints between adjacent observations in $S$.
Note that these points can be computed off-line efficiently.
Let $z_x$ denote the closest reference point to $x$.
We define the posterior variance indicator as
\begin{equation}
\label{eq:GP-var2}
   \mathcal{V}_{\sigma^2}(x):= \frac{\dist{x,S}}{\dist{z_x,S}} \Var(z_x).
\end{equation}
Note that the indicator $\mathcal{V}_{\sigma^2}$ is exact for reference points as $\mathcal{V}_{\sigma^2}(z)=\Var(z)$ for any reference point $z$, and observation points as $\mathcal{V}_{\sigma^2}(s)=0$ for $s\in S$.

It should be emphasized that the indicator in \eqref{eq:VarEst} or \eqref{eq:GP-var}, derived from Corollary \ref{cor:variance} in Section \ref{sub:Case I}, applies to the case of \emph{small} bandwidth and is generally not suitable for \emph{large} bandwidth (when $\RSsig(x,y)$ is generally close to zero).
Similarly, the indicator in \eqref{eq:GP-var2} is developed for the large bandwidth case, and is not suitable for the small bandwidth case.
This is illustrated in Section \ref{sub:plotbounds}.


We present an experiment to illustrate the proposed indicators.
Consider $\Omega=[0,1]$, $f(x)=\cos(25 x^2)$.
The set of observations $S$ contains 15 randomly distributed points in $\Omega$ shown as red dots in Figure \ref{fig:GP}.
The prior covariance kernel follows \eqref{eq:cov} with variance $\sigma^2$ and bandwidth $\rho$.
The parameter values determined after training are:
$$\sigma^2 = 0.9453058162554949,\quad \rho = 0.06332725946674625.$$
We estimate the posterior standard deviation by taking the square root of the variance indicators in \eqref{eq:GP-var} and \eqref{eq:GP-var2}.
The Case I indicator \eqref{eq:GP-var} is used in regions where the distance between adjacent observations exceeds $2\rho$.
Otherwise, the Case II indicator \eqref{eq:GP-var2} is used.
Figure \ref{fig:GP} illustrates the true standard deviation (top) and the estimate (bottom), indicated by shaded regions.
It can be seen that the estimated standard deviation from the variance indicators captures the behavior of the true deviation quite well across the entire computational domain.
Figure \ref{fig:GP} also shows that the posterior variance is large in regions with few observations, while being very small in regions with many observations.

\begin{figure}
    \centering
    \includegraphics[scale=0.55]{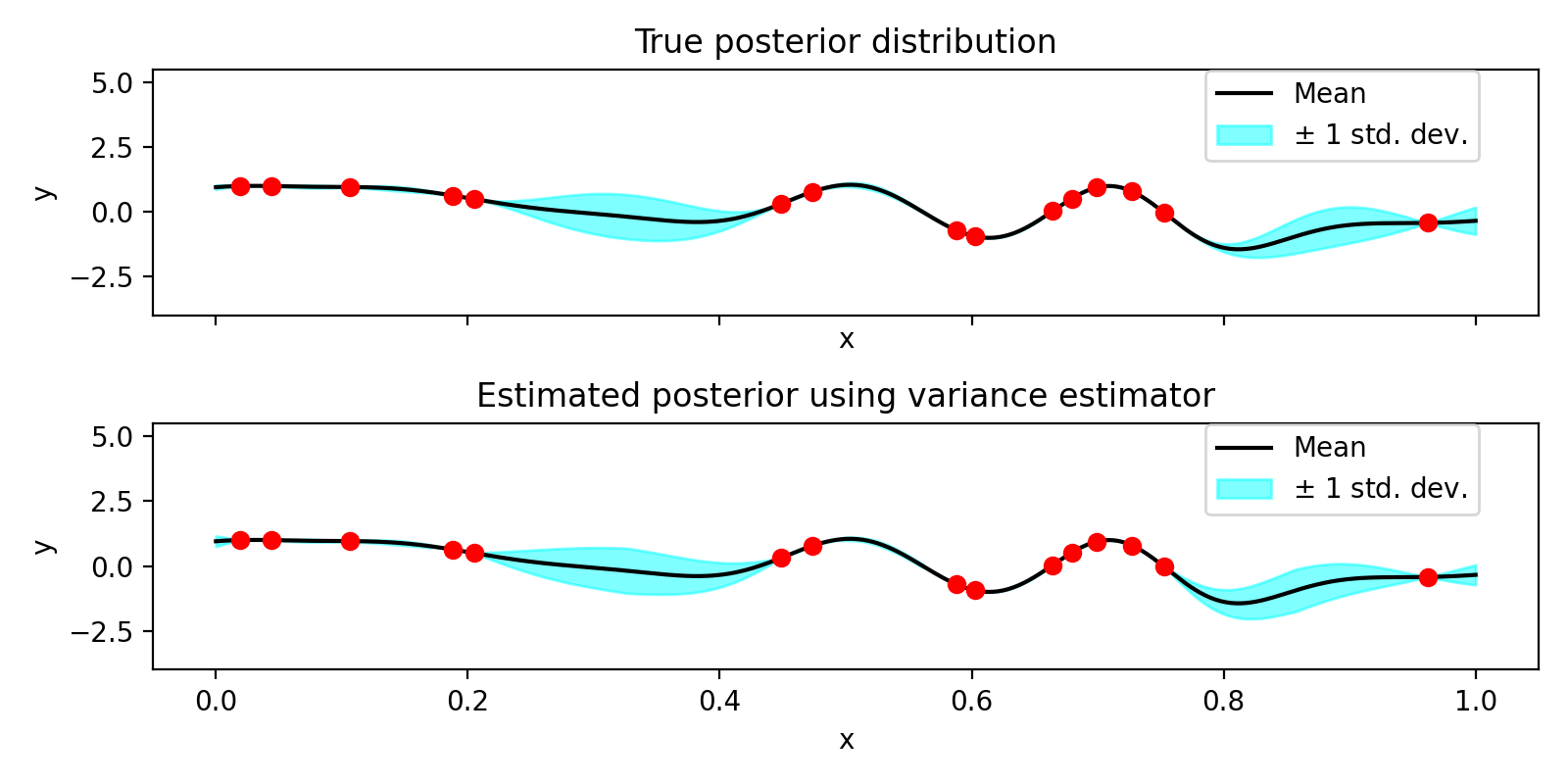}
    \caption{{Posterior regression curve from Gaussian process. Shaded uncertainty: Top: true standard deviation; Bottom: indicator $\sqrt{\mathcal{V}}$ from \eqref{eq:GP-var} and \eqref{eq:GP-var2}.}}
    \label{fig:GP}
\end{figure}

\cdf{
\textbf{Summary of practical indicators.}
We summarize the theoretical results and the proposed practical indicators in the table below.

\begin{tabular}{c|c|c|c}
\hline 

\hline
    Condition & Effect & Theory & Indicator \\
\hline 

\hline    
    large $\dist{x,y}/\rho$& small $\abs{\RSsig(x,y)}$ & Theorem 3.3 & (3.15) \\
     small $\dist{x,y}/\rho$, large $\dist{x,S}/\rho$& large $\abs{\RSsig(x,y)}$ & Theorem 3.5 & (3.15)\\
     small $\max\limits_{x\in\Omega}\dist{x,y}/\rho$ & ``tensor-product'' $\abs{\RSsig}$ & Theorem 3.7 & (3.16) \\
     large $\dist{x,S}/\rho$& large Var$(x)$ & Corollary 3.6 & (3.19) \\
     small $\dist{x,S}/\rho$& small Var$(x)$ & Theorem 3.7 & (3.20) \\
\hline

\hline
\end{tabular}
}

\section{Numerical Experiments} 
\label{sec:numerical}
In this section, we present several numerical experiments. 
The first set of experiments from Section \ref{sub:Error pattern1} to Section \ref{sub:Error pattern2D} aims to use the theory in Section \ref{sec:theory} to understand the pattern of the posterior covariance function $\RSsig\cdcd$.
Then we present applications in matrix approximation and linear system preconditioning, to show how the geometric understanding of the distribution in $\RSsig\cdcd$ can be used to achieve better accuracy or efficiency.
All experiments were conducted in MATLAB R2021a on a MacBook Pro with Apple M1 chip and 8GB of RAM.

\subsection{Pattern of posterior covariance: uniform data}
\label{sub:Error pattern1}
In this experiment, we consider the posterior covariance function $\RSsig\cdcd$ over $[0,1]\times [0,1]$ with uniformly distributed observation points below
$$S=\{0.02,0.26,0.5,0.74,0.98\}.$$
This complements the examples illustrated in Figures \ref{fig:prelim1} -- \ref{fig:prelim2} in Section \ref{sec:prelim}.
The one-dimensional setting in this experiment allows us to visualize the pattern on the plane, which can help develop a more straightforward understanding of how the function behaves in different scenarios.
To see how the bandwidth $\rho$ in the Gaussian kernel affects the distribution $\RSsig\cdcd$,
we consider three values:
$$\rho = 0.1, 0.25, 0.4,$$
representing small to large bandwidth values
compared to the data spacing in $S$, which is 0.24. 
The heat map of the posterior covariance $\abs{\RSsig(x,y)}$ for each case is plotted in Figure \ref{fig:uf-error}.
The color at each point $(x,y)$ corresponds to the value $\abs{\RSsig(x,y)}$, except for the \emph{five} blue points which are located at $(s_i,s_i)$ for each $s_i$ in $S$.

The ``banded'' pattern in Figure \ref{fig:uf-sigma01} and the ``tensor product'' pattern in Figure \ref{fig:uf-sigma04} have been explained in Section \ref{sub:Case I} and Section \ref{sub:Case II}, respectively.
Figure \ref{fig:uf-sigma025} renders a different pattern from the other two.
The bandwidth $\rho=0.25$ lies in between $0.1$ and $0.4$, and thus the pattern in Figure \ref{fig:uf-sigma025} looks like an intermediate stage between Figure \ref{fig:uf-sigma01} and Figure \ref{fig:uf-sigma04}.
The connections from Figure \ref{fig:uf-sigma025} to the two extreme cases ($\rho=0.1$ and $\rho=0.4$) are easy to see:
the ``tensor product'' structure is similar to Figure \ref{fig:uf-sigma04}; the dominant values are achieved near the diagonal where $\norm{x-y}$ is relatively small, and the magnitude of $\RSsig$ generally decays as $\norm{x-y}$ increases, analogous to Figure \ref{fig:uf-sigma01}.
The intermediate stage displays features from two limit cases but not as prominently.

\begin{figure}[htbp] 
    \centering 
\subfloat[$\rho=0.1$]{ 
\label{fig:uf-sigma01}
\includegraphics[scale=.35]{./fig/uf-sigma01.png}
}
\hspace{10pt}
\subfloat[$\rho=0.25$]{ 
\label{fig:uf-sigma025}
\includegraphics[scale=.35]{./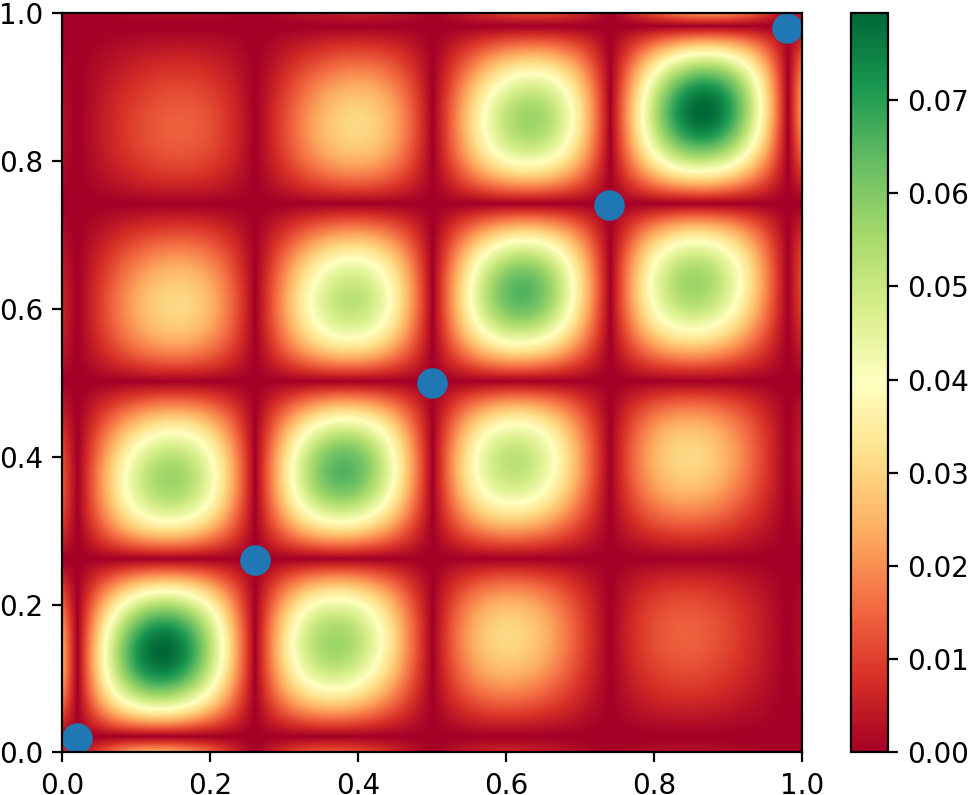}
}
\hspace{10pt}
\subfloat[$\rho=0.4$]{ 
\label{fig:uf-sigma04}
\includegraphics[scale=.35]{./fig/uf-sigma04.png}
}
\caption{$|\RSsig(x,y)|$ over $[0,1]\times [0,1]$ with \emph{uniform} $S$: different $\rho$.}
\label{fig:uf-error}
\end{figure}

\subsection{Pattern of posterior covariance: non-uniform data}
\label{sub:Error pattern2}
Following Section \ref{sub:Error pattern1}, in this experiment, we investigate $\abs{\RSsig}$ with \emph{non-uniform} observation data
$$S=\{0.02,0.12,0.22,0.6,0.98\},$$
where the spacing between the \emph{first} three points (0.1) is much smaller than that of the \emph{last} three points (0.38). 
We test the following three bandwidth values
$$\rho = 0.05,0.1,0.4.$$
The heat map of $|\RSsig(x,y)|$ for each case is shown in Figure \ref{fig:nf-error}.
Though the plots look different from the uniform case in Section \ref{sub:Error pattern1},
they can still be explained by the theory in Section \ref{sec:theory}.

{
\paragraph{On Figure \ref{fig:nf-sigma005}: $\rho=0.05$}
In this plot, the bandwidth $\rho=0.05$ is considered relatively small compared to the spacing between the first three observation points as well as the spacing between the last three observation points.
Such a scenario falls into ``small bandwidth case'' in Section \ref{sub:Case I} and the explanation of the plot is similar to Figure \ref{fig:case1small} and Figure \ref{fig:case1}.
}

\paragraph{On Figure \ref{fig:nf-sigma01}: $\rho=0.1$}
In this plot, the top right region in $[0.22,1]\times [0.22,1]$ displays a typical pattern (similar to Figure \ref{fig:case2}) for the \emph{small} bandwidth case in Section \ref{sub:Case I}, where $\rho$ is considered \emph{small} for the last three observation points. The pattern can be explained using Theorem \ref{thm:error2}. In the lower left region, we can observe a ``tensor-product'' structure containing $3\times 3=9$ blocks instead of a ``banded'' structure, which implies that the bandwidth $\rho$ is not considered small for the first three observation points. 
This ``banded'' pattern of dominant values is similar to Figure \ref{fig:nf-sigma005}.
Note that the large values of $\abs{\RSsig(x,y)}$ in the lower left part (``tensor-product'' region) are much smaller than the large values in the upper right part (``banded'' region).
This is because $K_{xS}K_{SS}^{-1}K_{Sy}$ approximates $\ksig(x,y)$ much better in this region than in the ``small bandwidth case''. 
As a result, $\abs{\RSsig(x,y)}$ is much smaller in the lower left part.


\paragraph{On Figure \ref{fig:nf-sigma04}: $\rho=0.4$}
\cdf{In this case, $\rho=0.4$ represents the \emph{large} bandwidth regime discussed in Section \ref{sub:Case II}. 
The ``tensor-product'' pattern can be observed in $[0.3,1]\times [0.3,1]$.}
The lower left part appears totally red simply because $\abs{\RSsig(x,y)}$ is almost zero, negligible compared to the values in the upper right part.
The largest values are in the green region (at around $[0.65,0.95]^2$). 
This is also consistent with the claim in Section \ref{sub:Case II} that \emph{larger distance to $S$ implies larger $\RSsig(x,y)$} with a ``cumulative'' distance metric as discussed in the end of Section \ref{sub:Case II}.
Such a boundary effect is typical for the \emph{large} bandwidth regime, similar to Figure \ref{fig:uf-sigma04}.

\begin{figure}[htbp] 
    \centering 
\subfloat[$\rho=0.05$]{ 
\label{fig:nf-sigma005}
\includegraphics[scale=.35]{./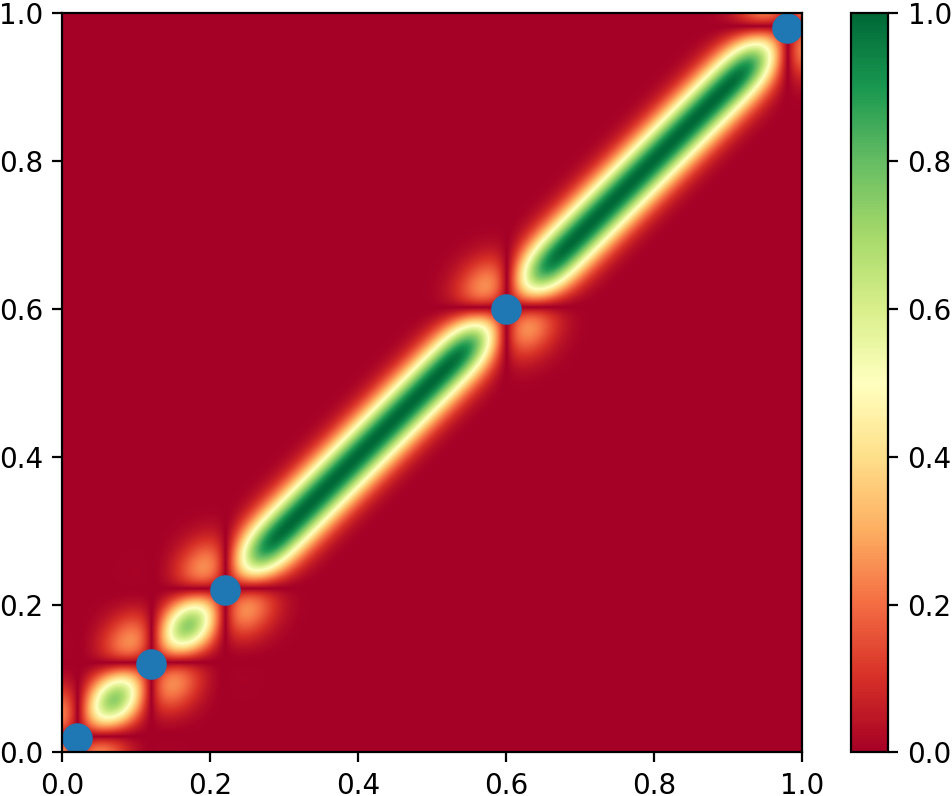}
}
\hspace{10pt}
\subfloat[$\rho=0.1$]{ 
\label{fig:nf-sigma01}
\includegraphics[scale=.35]{./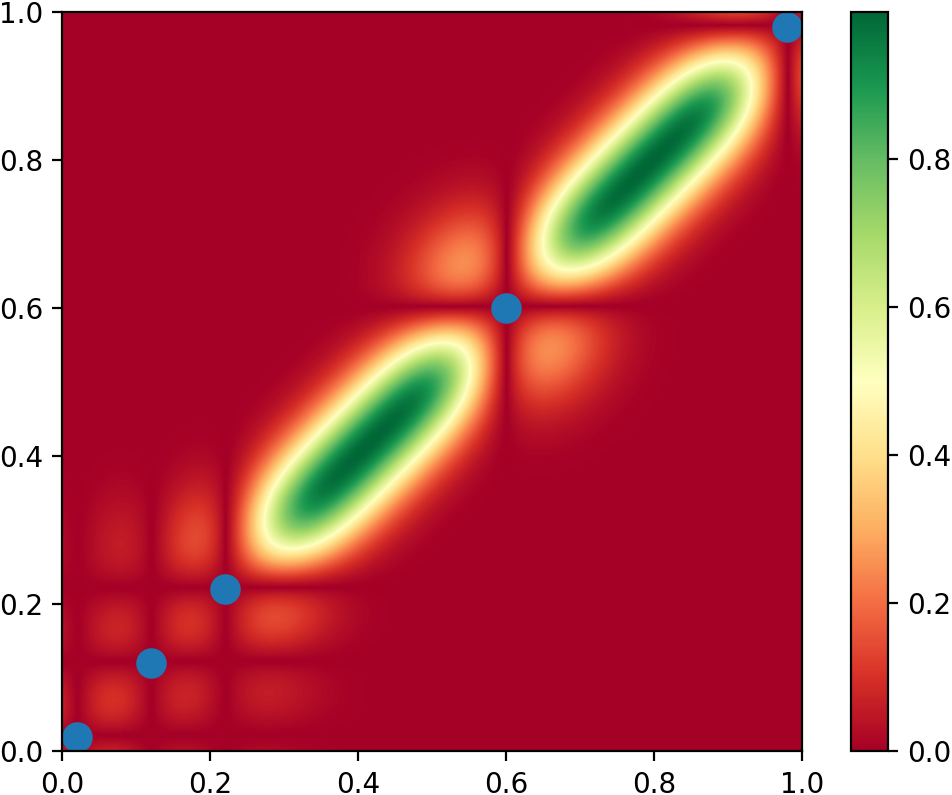}
}
\hspace{10pt}
\subfloat[$\rho=0.4$]{ 
\label{fig:nf-sigma04}
\includegraphics[scale=.35]{./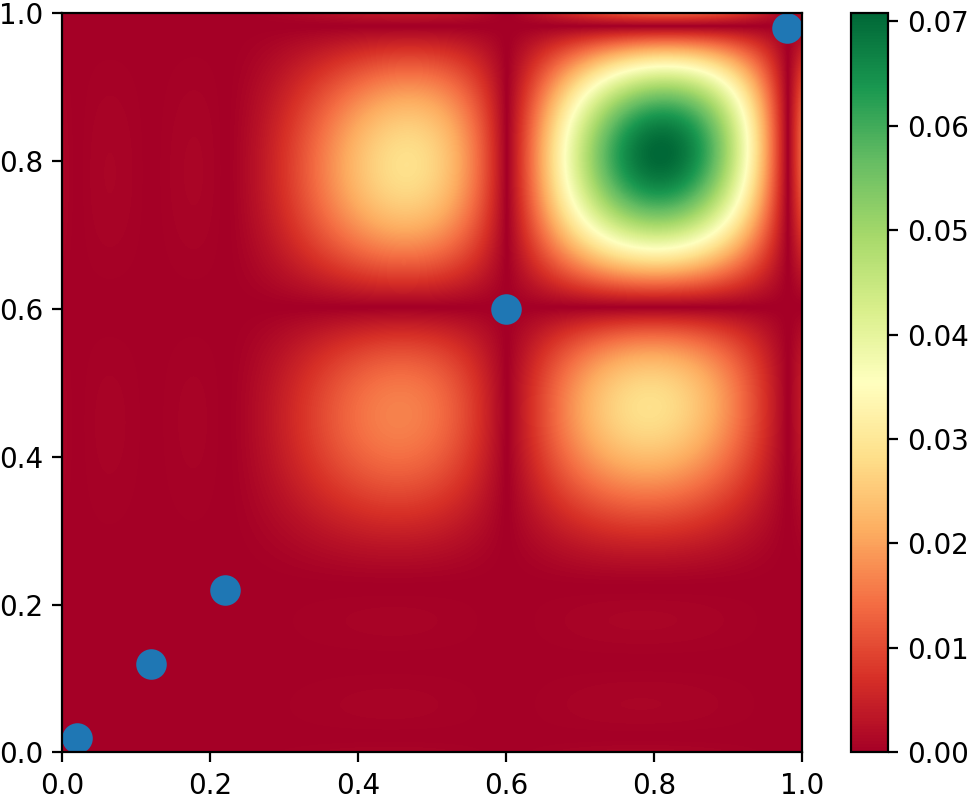}
}
\caption{Function $|\RSsig(x,y)|$ over $[0,1]\times [0,1]$ with \emph{non-uniform} $S$: different $\rho$.}
\label{fig:nf-error}
\end{figure}

\subsection{Pattern of posterior covariance of 2D data}
\label{sub:Error pattern2D}
Sections \ref{sub:Error pattern1} and \ref{sub:Error pattern2} illustrate the bivariate function $|\RSsig(x,y)|$ for one dimensional data.
In this experiment, we present the numerical study for two dimensional data.
We consider points in the disk $B_{0.4}$ centered at the origin with radius $0.4$, and investigate
$\abs{\RSsig(x,y)}$ for $x,y\in B_{0.4}.$
The set of observation points $S\subseteq B_{0.4}$ is fixed, as illustrated in Figure \ref{fig:2d-error} by blue dots.
To see the impact of the bandwidth $\rho$, we test the following values:
\begin{equation*}
    \rho = 0.03, 0.05, 0.1, 0.2, 0.3.
\end{equation*}

Since $\RSsig(x,y)$ is a function over four dimensions $\mathbb{R}^2\times \mathbb{R}^2$, it can not be visualized on the plane.
To visualize the result, we choose a point $x^*\in B_{0.4}$ and plot the univariate function 
$$g_{x^*}(y):=|\RSsig(x^*,y)|$$ over $y\in B_{0.4}$.
The goal is to (1) show how $|\RSsig(x,y)|$ depends on $x$, $y$, $S$ and $\rho$;
(2) analyze the behavior of  $|\RSsig(x,y)|$ using the theoretical results in Section \ref{sec:theory}. 


{
First, we discuss the impact of $x^*$.
Figure \ref{fig:2d-error-FarClose} shows that when $x^*$ is quite close to an observation point in $S$,
$g_{x^*}(y)=|\RSsig(x^*,y)|$ is small for all $y$ in the domain.
This is due to Theorem \ref{thm:vanish} and the continuity of $\RSsig(x,y)$, as already mentioned after Theorem \ref{thm:vanish}.
When $x^*$ is not close to $S$, then the error can be large for $y$ in certain area that will depend on $\rho$ and $S$ as detailed below.

\begin{figure}[htbp] 
    \centering 
    \includegraphics[scale=.8]{./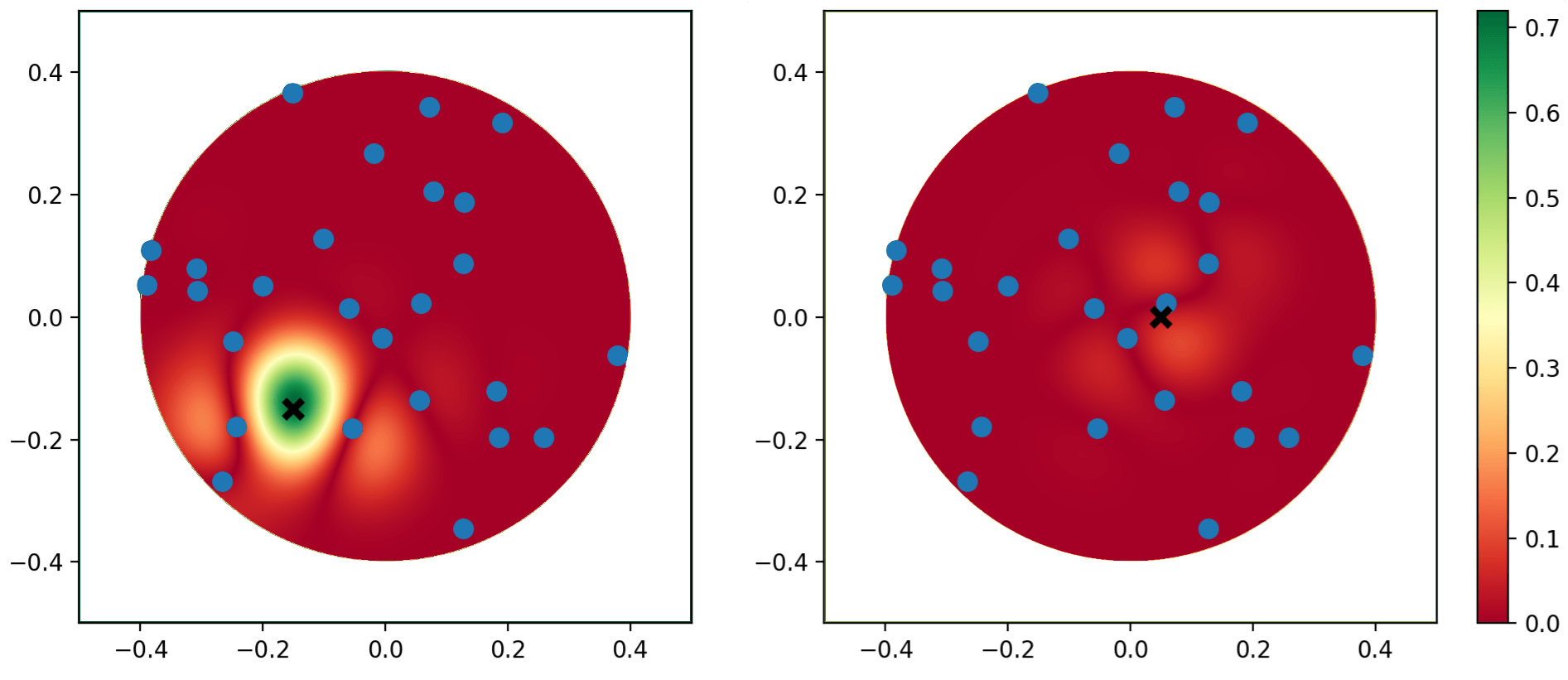}
    \caption{Function $|g_{x^*}(y)|$ over $y\in B_{0.4}$ with $\rho=0.1$ (dark cross: $x^*$; blue dots: $S$). Left: $x^*$ far from $S$; Right: $x^*$ close to $S$.}
    \label{fig:2d-error-FarClose} 
\end{figure}

Now we focus on the impact of $\rho$ relative to $S$, $x$ and $y$ in $\RSsig(x,y)$.
Similar to the one dimensional case discussed in Sections \ref{sub:Error pattern1} and \ref{sub:Error pattern2},
the pattern for $|\RSsig(x,y)|$ according to Figure \ref{fig:2d-error} can be summarized as:
(i) when $\rho$ is relatively small, larger values occur at places where $\norm{x-y}$ is small and $x,y$ are not so close to $S$;
(ii) when $\rho$ is relatively large, larger values occur at places where $x$ or $y$ is relatively far from $S$.
To see (i), note that the green regions (larger values) in Figures \ref{fig:2d-s004} to \ref{fig:2d-s01} consists of points $y$ that are close to $x=x^*$ and not so close to $S$.
To see (ii), we observe that the green regions (larger values) in Figures \ref{fig:2d-s02} to \ref{fig:2d-s03} consists of points $y$ that are relatively far from $S$.
(i) is attributed to Theorem \ref{thm:lowerbound} from the \emph{small} bandwidth case while (ii) can be explained by Theorem \ref{thm:error2} from the \emph{large} bandwidth case.

As $\rho$ increases from $0.04$ in Figure \ref{fig:2d-s004} to $0.3$ in Figure \ref{fig:2d-s03}, 
it can be easily seen that the magnitude of the covariance function $\abs{\RSsig}$ decays.
This is because larger $\rho$ makes the kernel $\ksig$ smoother, thus $\abs{\RSsig(x,y)}$, viewed as the low-rank approximation error for $\ksig(x,y)$ by $K_{xS}K_{SS}^{-1}K_{Sy}$, becomes smaller.
In the limit: $\rho\to\infty$, we have $\ksig(x,y)\to 1$, a constant, and thus can be approximated well with one observation point only.
}

\begin{figure}[htbp] 
    \centering 
\subfloat[$\rho=0.04$]{ 
\label{fig:2d-s004}
\includegraphics[scale=.31]{./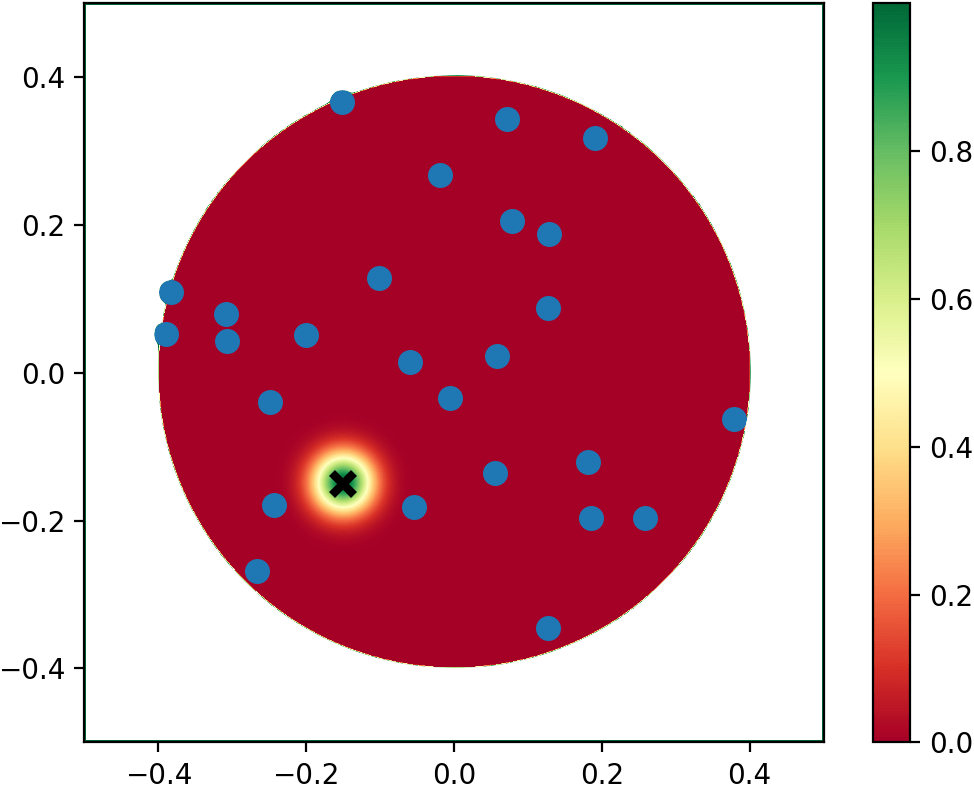}}
\subfloat[$\rho=0.1$]{ 
\label{fig:2d-s01}
\includegraphics[scale=.31]{./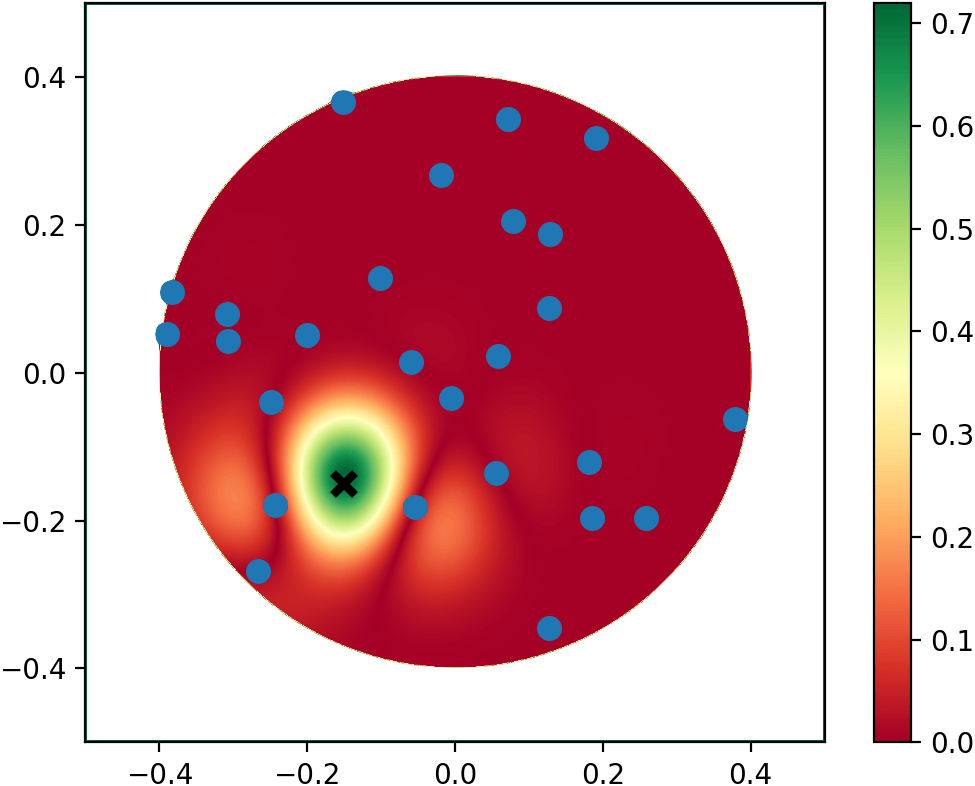}}
\subfloat[$\rho=0.2$]{ 
\label{fig:2d-s02}
\includegraphics[scale=.31]{./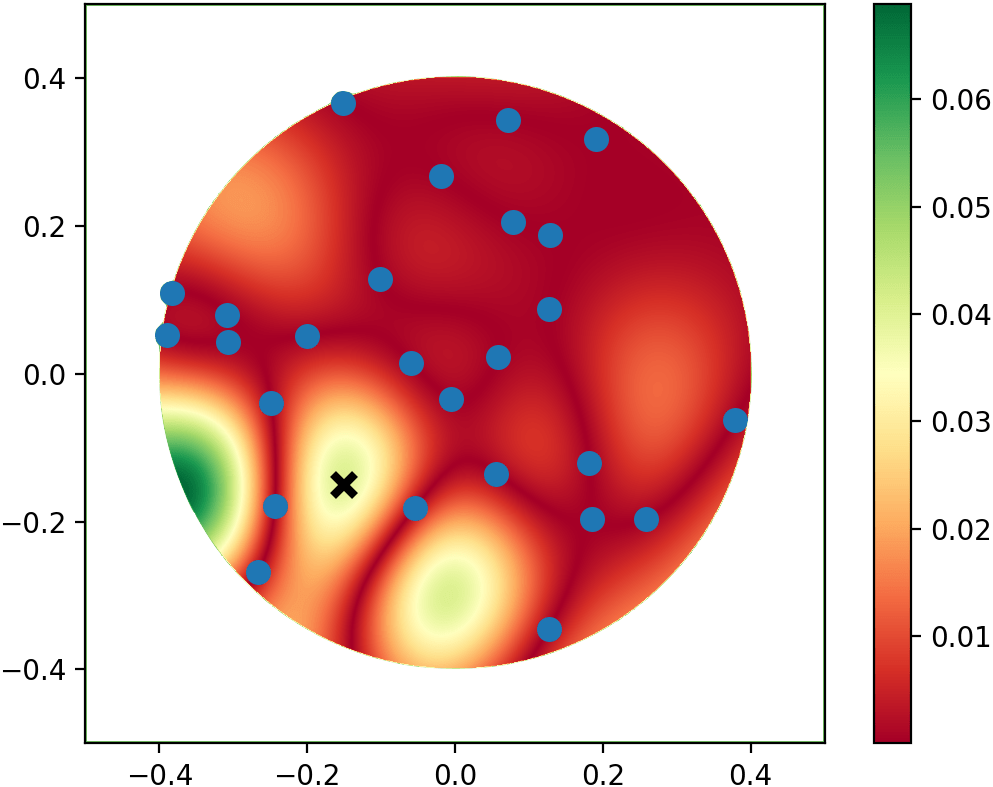}}
\subfloat[$\rho=0.3$]{ 
\label{fig:2d-s03}
\includegraphics[scale=.31]{./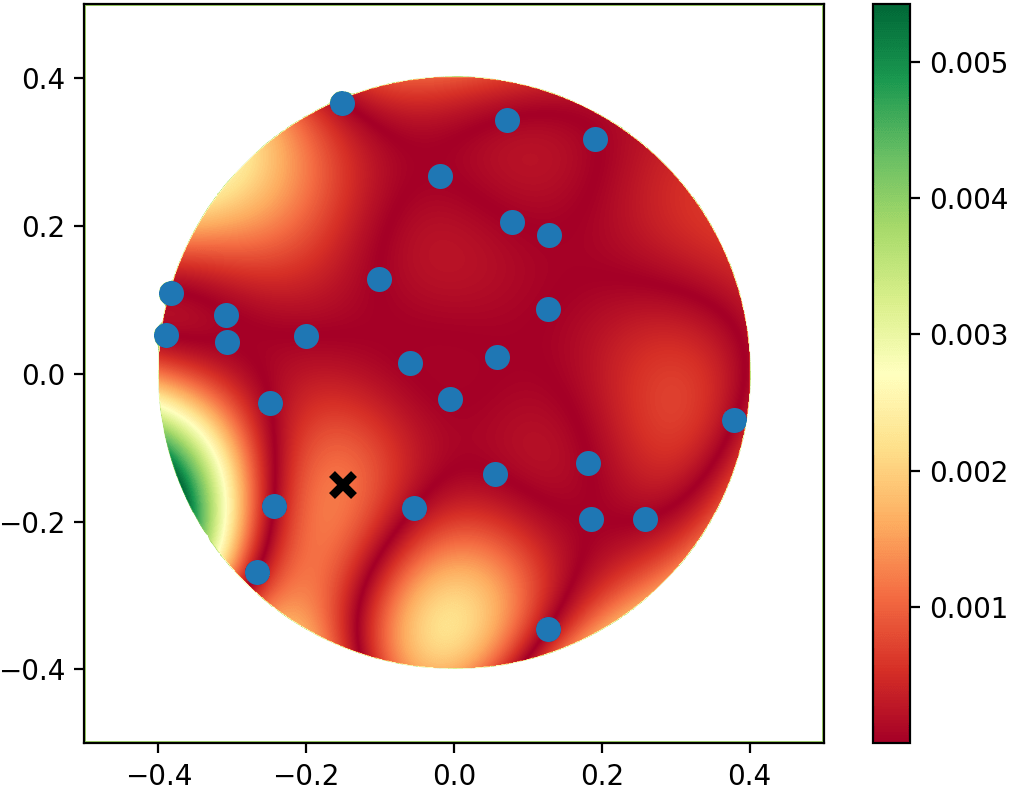}}
\caption{Function $|g_{x^*}(y)|$ over $y\in B_{0.4}$ (dark cross: $x^*$; blue dots: $S$): different $\rho$.}
\label{fig:2d-error}
\end{figure}

\cdf{
\subsection{Comparison to the nearest neighbor approach}
To accelerate the calculation of the posterior variance or covariance, a number of approximations rely on using nearest neighbors, including the local approximate GP \cite{laGP2015} and the Vecchia approximations \cite{vecchia1988,stein2004vecchia,vecchia2021general}.
For approximating the posterior variance $\text{Var}(x)$, 
the $k$ nearest neighbor approach replaces the kernel matrix $K_{SS}$ by $K_{\mathcal{N}_x,\mathcal{N}_x}$, where $\mathcal{N}_x$ denotes the $k$ nearest neighbors of $x$ from $S$, and also replaces other appearances of $S$ by $\mathcal{N}_x$.
For approximating $\RSsig(x,y)=\ksig(x,y)-K_{xS}K_{SS}^{-1}K_{Sy}$, the nearest neighbor approach will replace $K_{xS}$ by $K_{x\mathcal{N}_x}$, $K_{Sy}$ by $K_{\mathcal{N}_y y}$, and $K_{SS}$ by $K_{\mathcal{N}_y \mathcal{N}_x}$.
(Here $\mathcal{N}_x$ and $\mathcal{N}_y$ have the same size in order to obtain a square matrix.)
Note that the nearest neighbor approach does \emph{not} take into account of the different structures of $\abs{\RSsig(x,y)}$ caused by the bandwidth parameter $\rho$. Nor does it offer pointwise estimates of $\RSsig(x,y)$ under different scenarios.
As demonstrated by the experiment below, simply replacing $S$ with a potentially smaller set of neighbors can give incorrect results.
We present an experiment below to compare the $k$ nearest neighbor-based approximation and the proposed indicators in Section \ref{sub:post} for $\RSsig(x,y)$ and $\Var(x)$.

Consider the setup in Section \ref{sub:Error pattern1}, where $S$ consists of $5$ equispaced points.
The bandwidth is set to $\rho=0.4$, representing the \emph{large} bandwidth regime.
For posterior variance $\Var(x)$, we plot the true value, nearest neighbor estimation, and proposed estimation in Figure \ref{fig:knnVar}.
For posterior covariance, we plot in Figure \ref{fig:knnCov} the three curves for $\RSsig(x,0.4)$: true, nearest neighbor, proposed.

It is easy to see from Figures \ref{fig:knnVar} and \ref{fig:knnCov} that the nearest neighbor approach is generally \emph{unable} to capture the behavior of the posterior variance and covariance.
In general, it is accurate for $\Var(x)$ only when $k$ is sufficiently large (see Figure \ref{fig:knnVar}), and is invalid for $\RSsig(x,0.4)$ regardless of the value of $k$ - the number of neighbors. For estimating $\Var(x)$ in Figure \ref{fig:knnVar}, the nearest neighbor approach achieves good results when $k=4$ neighbors are used (left plot), which consists of $80\%$ of points from $S$.
For $k\leq 3$ (middle and right plots), i.e. using at most $60\%$ of points in $S$ as neighbors,
the nearest neighbor approach produces an incorrect estimation of $\Var(x)$ in terms of both shape and magnitude.

For the much more challenging posterior \emph{covariance} estimation in Figure \ref{fig:knnCov}, the nearest neighbor approach fails regardless of the choices of $k$.
The estimated shape (line with dots) has no resemblance to the true curve (solid line).
Consider the case with $k=4$ neighbors ($80\%$ of points from $S$) in Figure \ref{fig:knnCov} (Left).
We see that the true posterior $\RSsig(x,0.4)$ is larger at the two peaks around $x=0.1, 0.9$, and the slightly lower peaks at around $x=0.38,0.63$ have very similar magnitudes. 
This is \emph{not} captured by the nearest neighbor approximation as the curve displays an entirely different pattern: the four local maximum values decreases substantially as $x$ increases.
The result is even worse as $k$ becomes smaller.
With $k=3$ neighbors ($60\%$ of points in $S$) in Figure \ref{fig:knnCov} (Middle), the nearest neighbor curve displays a strange ``cusp'' in the two peaks near $x=0.4, 0.8$, which is not found in the case of $k=4$ or $k=2$.
This highlights the issue of the sensitivity of the result to the choice of $k$.
In practice, the suitable value of $k$ is \emph{not} straightforward to determine.

According to Figures \ref{fig:knnVar}, \ref{fig:knnCov},
the proposed estimates (dashed line) correctly capture the behavior of posterior variance and posterior covariance, compared to the nearest neighbor approach.
Overall, the results here reflect again the important role of the bandwidth parameter, which is not systematically discussed in the existing literature. 
Existing methods, such as the nearest-neighbor approach, can become ineffective when the bandwidth is in a certain range.
The issues can be seen in Figures \ref{fig:knnVar}, \ref{fig:knnCov}.
In contrast, the proposed analysis covers different cases as the bandwidth varies and leads to a more accurate characterization of the posterior distributions.
}

\begin{figure}[htbp] 
    \centering 
    \includegraphics[scale=.38]{./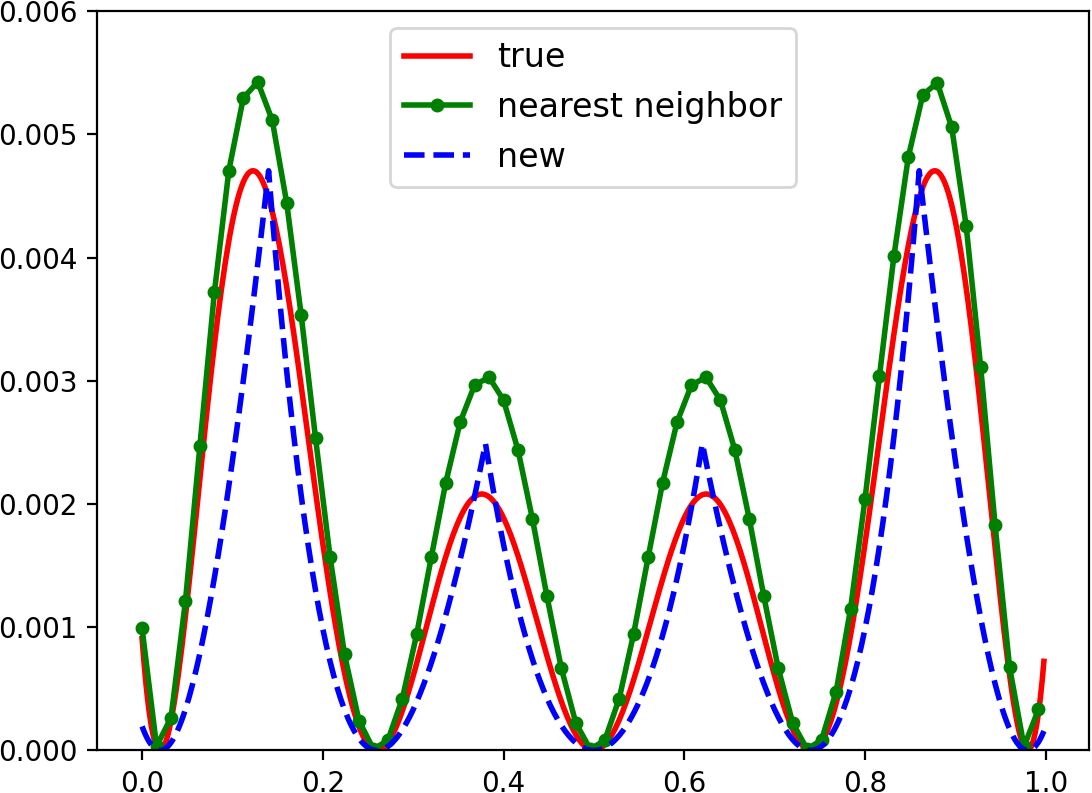} 
    \includegraphics[scale=.38]{./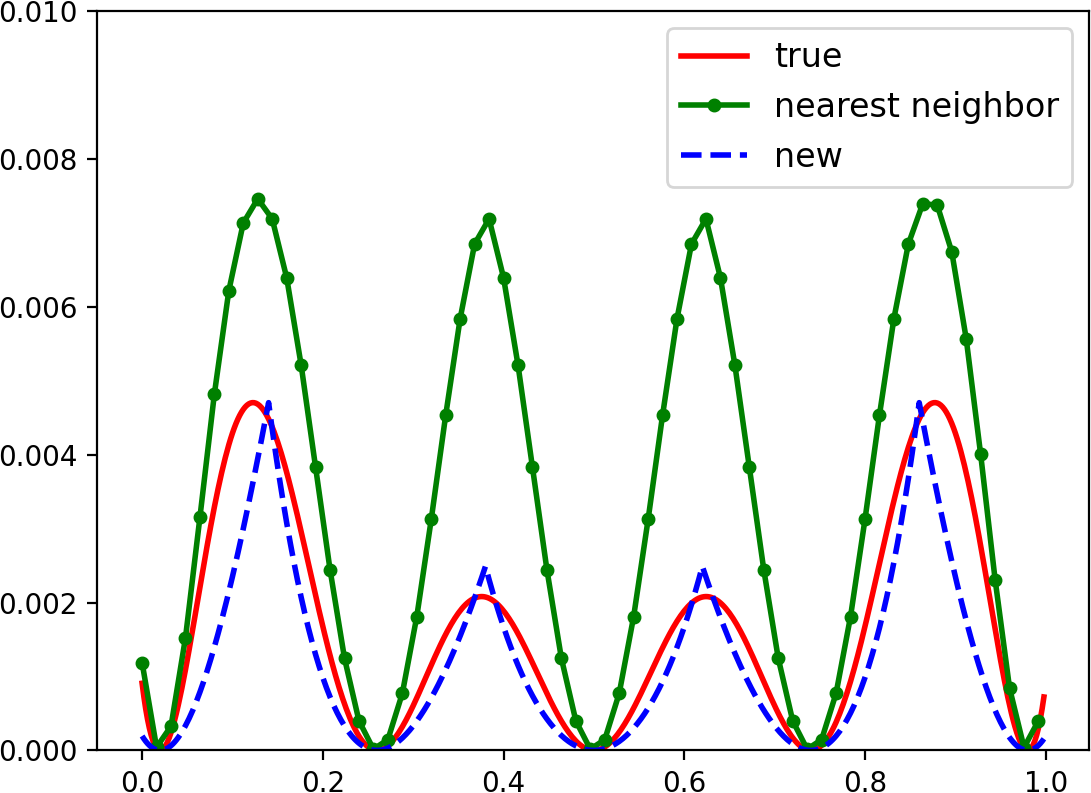} 
    \includegraphics[scale=.38]{./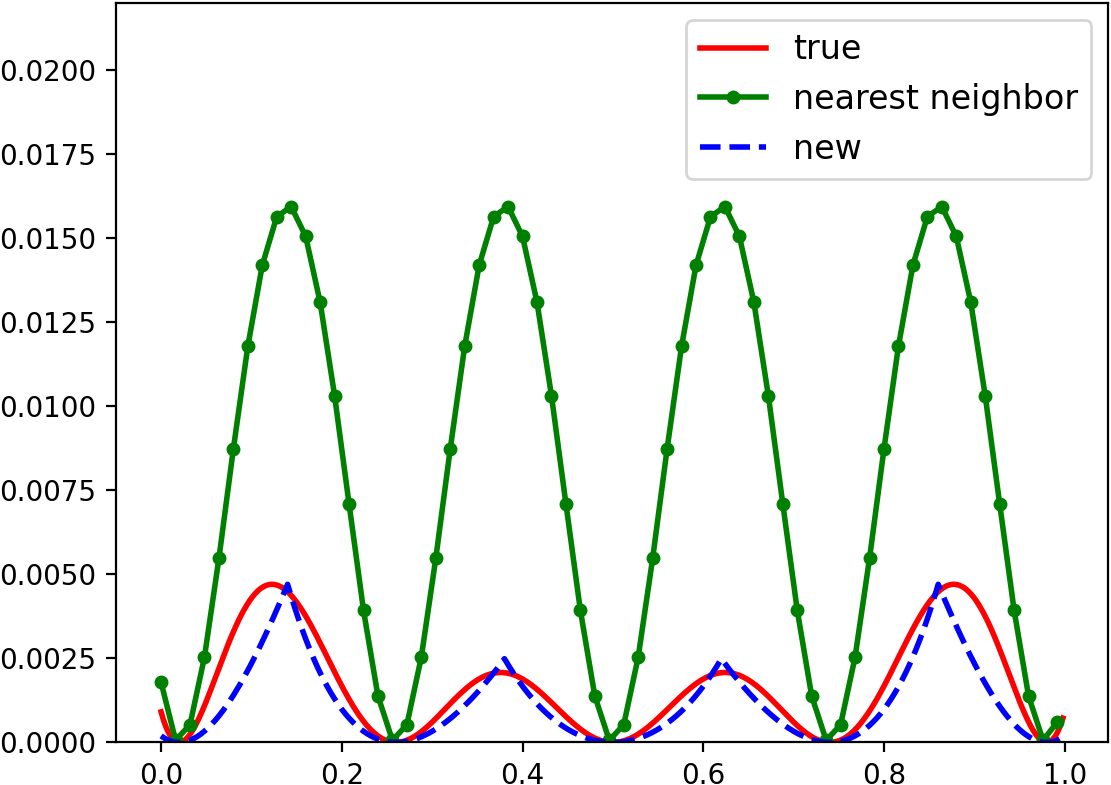} 
    \caption{\cdf{True variance $\text{Var}(x)$ (solid), $k$ nearest neighbor (dot), proposed (dashed). Left to right: $k=4,3,2$ neighbors from $S$ with $5$ points. The curves for true variance and proposed won't change with $k$, but the nearest neighbor approximations will display potentially different scales for different $k$ (thus quite inaccurate)}}
    \label{fig:knnVar}
\end{figure}

\begin{figure}[htbp] 
    \centering 
    \includegraphics[scale=.38]{./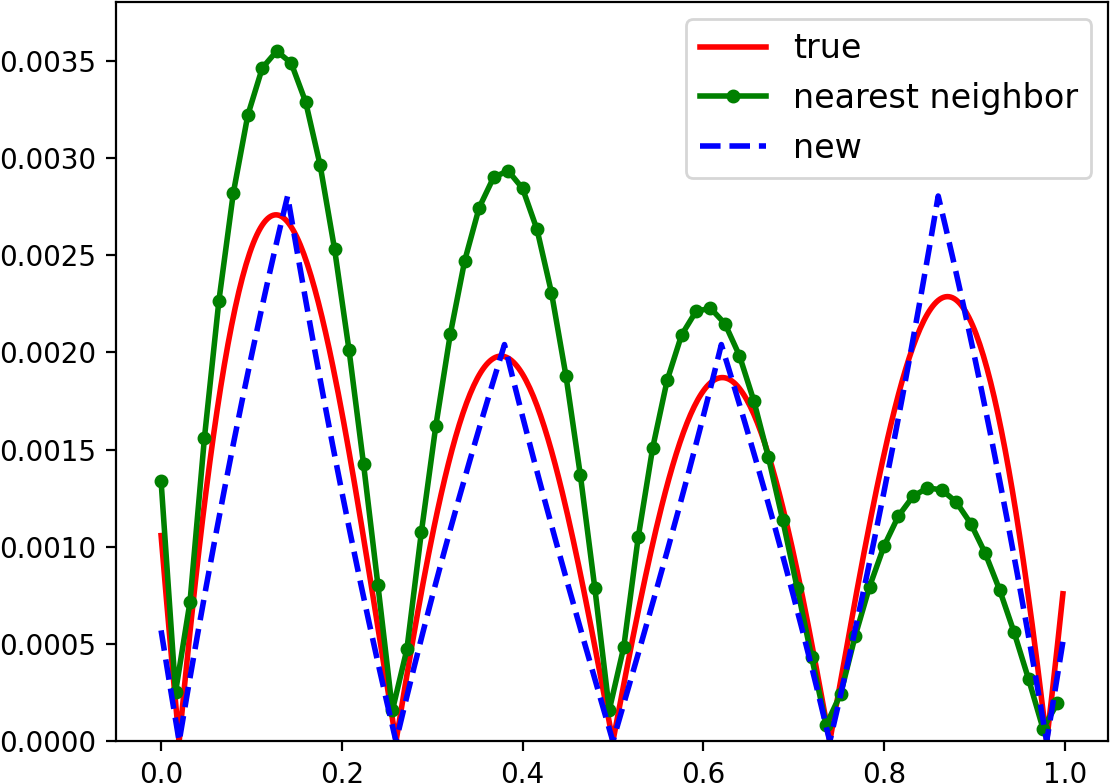} 
    \includegraphics[scale=.38]{./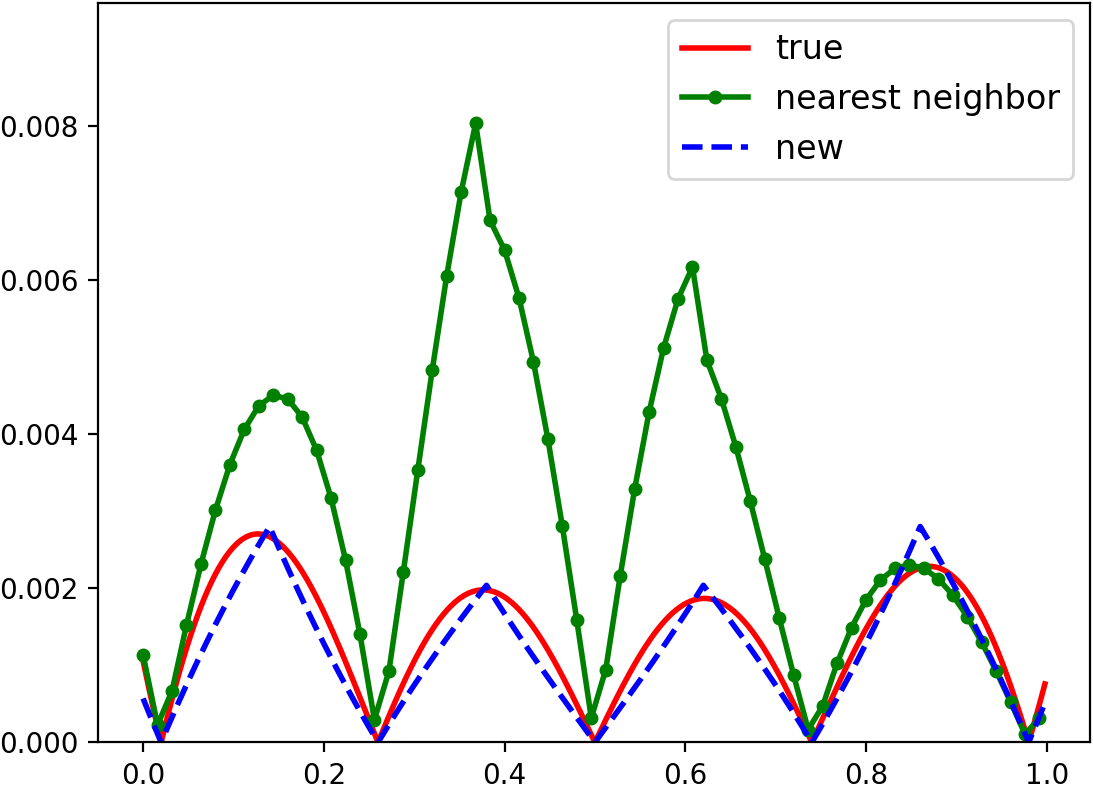} 
    \includegraphics[scale=.38]{./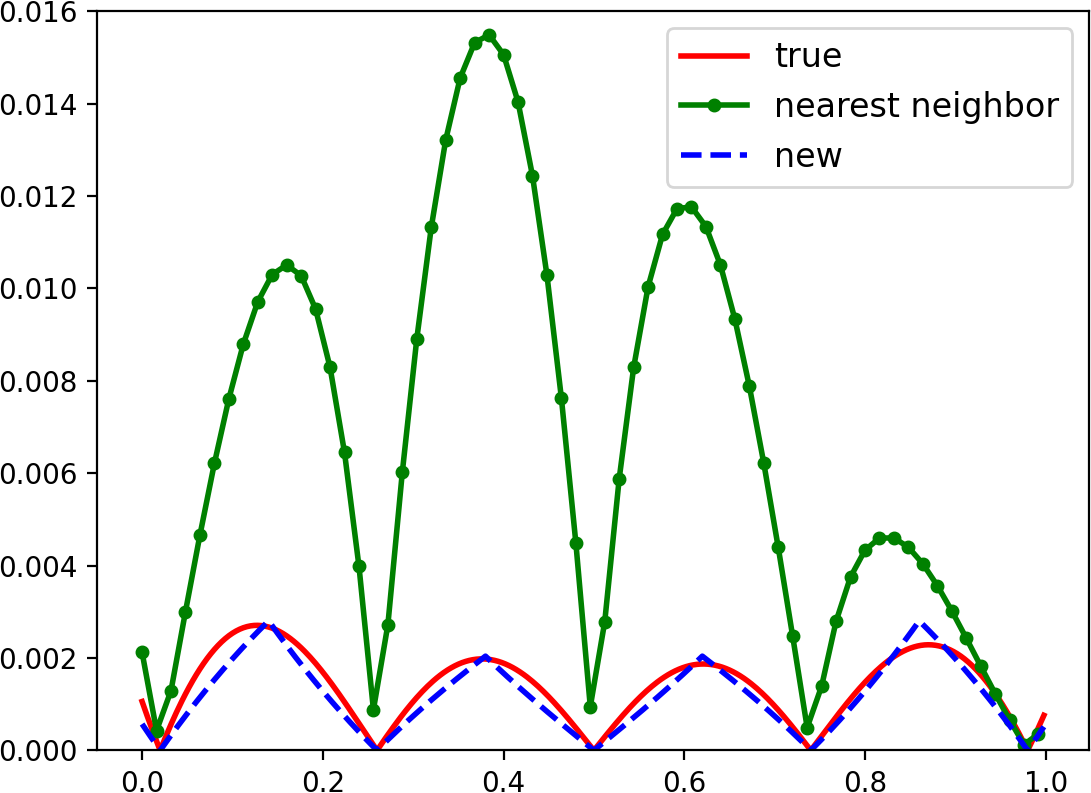} 
    \caption{\cdf{True covariance $\RSsig(x,0.4)$ (solid), $k$ nearest neighbor (dot), proposed (dashed). Left to right: $k=4,3,2$ neighbors from $S$ with $5$ points. The curves for the true and the proposed won't change with $k$, but the nearest neighbor approximations will display potentially different scales for different $k$ (thus quite inaccurate). The nearest neighbor approach fails to capture the true behavior for all choices of $k$ (unless using all points in $S$ as neighbors)}}
    \label{fig:knnCov}
\end{figure}

\subsection{Low-rank plus sparse (LRSP) matrix approximations}
\label{sub:app-Approximation}
In this experiment, we consider economical approximations to the dense kernel matrix $K_{XX}$ associated with a finite set $X$.
Low-rank approximations are widely used as an economical surrogate for $K_{XX}$.
However, affected by the bandwidth parameter $\rho$, $K_{XX}$ may not have fast decaying singular values when $\rho$ is not large and thus low-rank approximation can be inaccurate.
In this case, employing a sparse correction may help to achieve better accuracy than using a low-rank approximation with increased rank.

We compare the low-rank approximation to $K_{XX}$ and the low-rank plus sparse approximation (LRSP).
The low-rank approximation is in the form: 
$K_{XX}\approx K_{XS}K_{SS}^{-1}K_{SX}$.
Here $S\subseteq X$ can be viewed as the observation data in Gaussian processes. 
The residual matrix $K_{XX}-K_{XS}K_{SS}^{-1}K_{SX}$ can be viewed as the evaluation of the posterior covariance function $\RSsig(x,y)$ over a finite set of points $X\times X$.
Hence the theory in Section \ref{sec:theory} can be used to detect large entries in the residual matrix for the construction of the sparse correction matrix.
Starting with a baseline low-rank approximation, 
we investigate the effect of low-rank correction and sparse correction for increasing the approximation power (or reducing the approximation error) as follows.
The baseline low-rank approximation is chosen as the \nys approximation \cite{nys2001}:
$$\hat{K}_{r_0}=K_{XS}K_{SS}^{-1}K_{SX},$$ 
where $S$ is a subset of $X$ with $r_0=100$ randomly chosen points.
The low-rank correction increases the size of $S$ to contain $r>r_0$ points, leading to a more accurate approximation.
The sparse correction utilizes the LRSP approximation
$$K_{XX}\approx \hat{K}_{r_0}+R_{\text{sp}},$$
where the sparse correction $R_{\text{sp}}$ is chosen to be a submatrix of $\RSsig(X,X)=K_{XX}-K_{XS}K_{SS}^{-1}K_{SX}$.

In the experiment,
we choose $X$ to be 1000 random samples from the standard normal distribution in three dimensions.
The set $S$ contains uniform random samples from $X$.
We remark that the goal is to show how the sparse correction with a well-chosen nonzero pattern can improve the accuracy of the low-rank approximation .
The choice of $S$ for the low-rank approximation is not the focus.
For discussions on how to choose the landmark points $S$ for \nys low-rank approximations, we refer to \cite{nys2008kmeans,nys2005,nys2012,nys2017,nys2010kmeans,anchornet,ddblock}.
We choose the bandwidth to be $\rho=0.5$.
To carry out a fair comparison between the ``rank-$r$'' approximation and the ``rank-$r_0$ plus sparse'' approximation,
we compare the error under the same storage.
To this end, we define the \emph{cost-equivalent rank} for the LRSP approach as the number $k$ that satisfies the following equation:
\begin{equation}
\label{eq:rankmap}
    k^2+Nk = r_0^2 + Nr_0 + \texttt{nnz},
\end{equation}
where \texttt{nnz} denotes the \emph{number of nonzeros} in matrix $R_{\text{sp}}$.
The right-hand side of \eqref{eq:rankmap} is the number of matrix entries required (which represents the storage cost) for the LRSP format.
The left-hand side of \eqref{eq:rankmap} is the cost of an ``imaginary'' low-rank format such that the storage is the same as the LRSP format, where $k$ represents the size of the imaginary observation set $S$.
By using the \emph{cost-equivalent rank}, we are able to illustrate the error curve for the LRSP approximation in the same ``error-rank'' plot as the low-rank approximation.

The result is shown in Figure \ref{fig:randn-s05}.
The rank for the low-rank approximation increases from 100 to 660. 
For LRSP approximation, with fixed rank $r_0=100$,  sparse corrections with increasing density are used to match the rank increase in the low-rank approach.
The nonzero pattern in the sparse correction is chosen as follows.
For $\rho=0.5$, the threshold for choosing the pairs of nearby points (based on $\dist{x,y}$) goes from $\rho$ to $10\rho$. 
Namely, if $\dist{x,y}$ is within the threshold, then the corresponding entry will be nonzero in the sparse correction matrix.
A larger threshold yields more nonzero entries in the sparse correction.
Note that here we ignore $\dist{x,S}$ or $\dist{y,S}$ and the choice can include $x$ or $y$ close to $S$. However, this is not a concern since in the small bandwidth case, the number of points close to $S$ is almost negligible compared to the number of points away from $S$, as can be seen in Figure \ref{fig:case1smallsigma} where the green band (large values of $\abs{\RSsig(x,y)}$) almost covers the entire diagonal, especially when $\rho$ is small.

The error matrix $E$ is evaluated in the max norm $\norm{E}_{\max}=\max_{i,j} \abs{E_{ij}}$ as well as the approximate 2-norm: $\norm{Ev}/\norm{v}$ where $v$ is a random vector whose entries are drawn independently from the standard normal distribution.
It can be seen from Figure \ref{fig:randn-s05} that LRSP offers much better accuracy than the low-rank approximation under the same storage.
As the approximation rank increases, the low-rank approximation achieves no improvement for the max-norm approximation accuracy and little improvement for the 2-norm approximation accuracy.


\begin{figure}[htbp] 
    \centering 
\subfloat[Error matrix in max norm]{ 
\label{fig:randn-max-s05}
\includegraphics[scale=.39]{./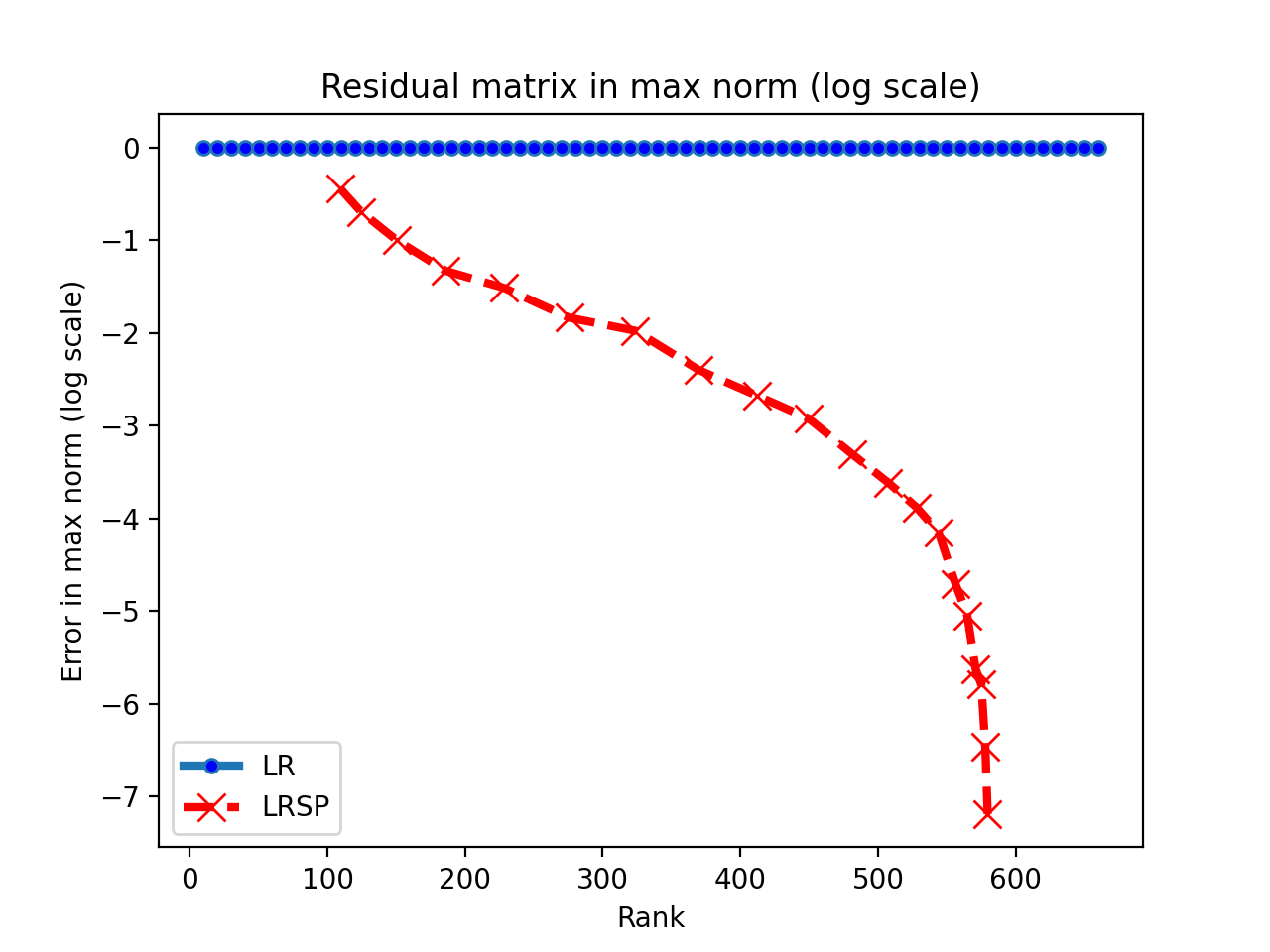}}
\hspace{10pt}
\subfloat[Error matrix in operator 2-norm]{
\label{fig:randn-L2-s05}
\includegraphics[scale=.39]{./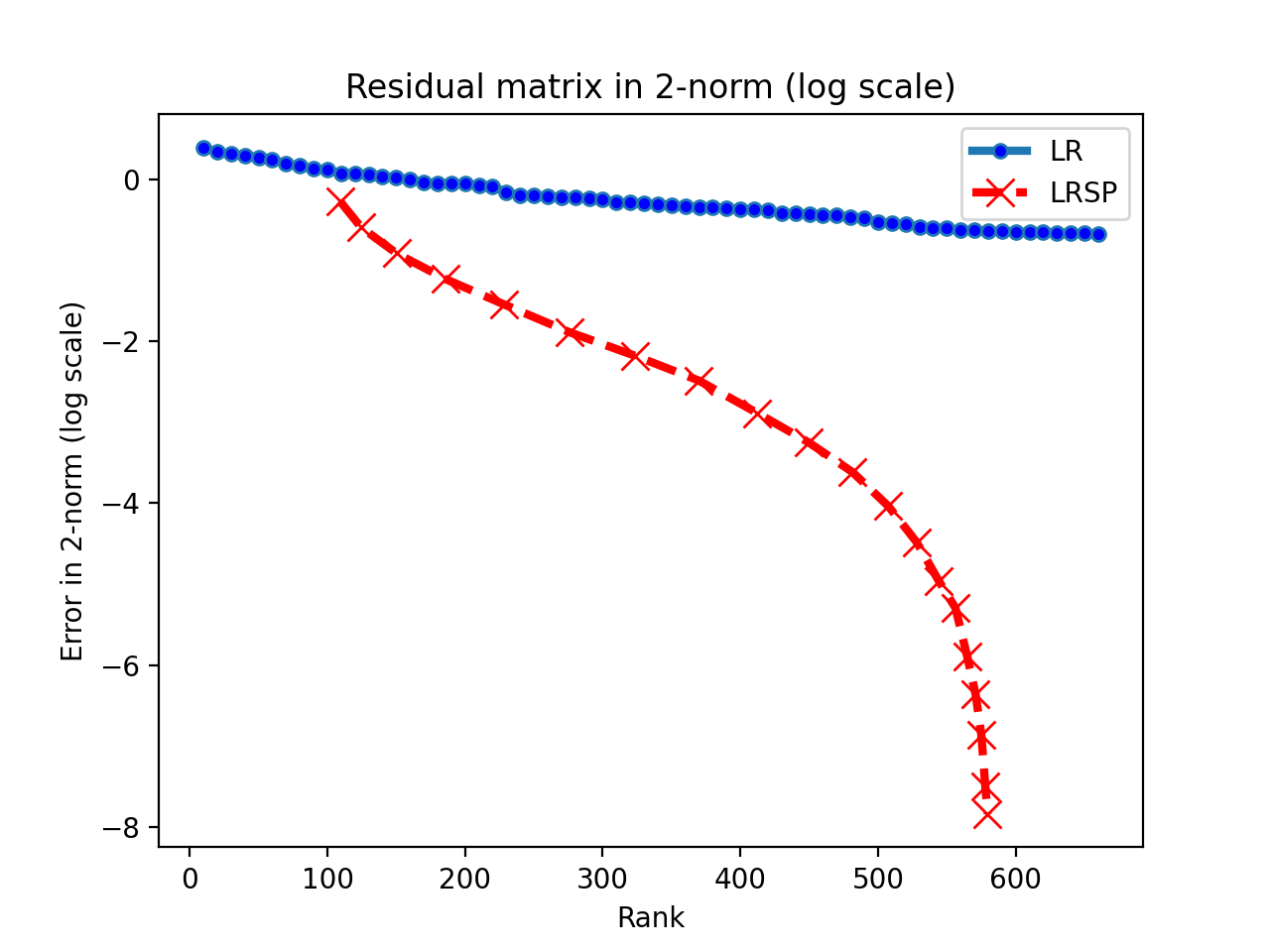}}
\caption{LR and LRSP approximations for $K_{XX}$ with $\rho=0.5$ and $X\subseteq \mathbb{R}^3$.}
\label{fig:randn-s05}
\end{figure}


\subsection{Preconditioning} 
\label{sub:app-Preconditioning}

In this experiment, we consider using the new result to guide the selection of the sparsity pattern used in the approximate inverse type preconditioners (\cite{fsai1993,fsai2000}) for solving Gaussian linear systems $Kz=b$,
which arise in kernel ridge regression \cite{10027498}, radial basis function interpolation, Gaussian processes \cite{10.1145/3589335.3651456}, etc.
For the Gaussian kernel matrix, the most recent development of FSAI-type preconditioners is the adaptive factorized \nys (AFN) preconditioner \cite{AFN}. AFN reorders the coefficient matrix into a $2\times 2$ block form corresponding to $S$ and $X\backslash S$, and applies FSAI preconditioner to approximate the Schur complement inverse. This FSAI approximation step coincides with the kernel matrix associated with the kernel $\RSsig(x,y)$. 

More specifically, let $S$ denote a small subset of $X$ selected for the skeleton low-rank approximation $K_{XX}\approx K_{XS}K_{SS}^{-1}K_{SX}$, and define $T=X\backslash S$.
Without loss of generality, assume the matrix $K_{XX}$ is partitioned as
$K_{XX} = \begin{bmatrix}    K_{SS} & K_{ST}\\
    K_{TS} & K_{TT}
\end{bmatrix}.$
Denote $R_{TT}$ as the 
Schur complement $K_{TT}-K_{TS}K_{SS}^{-1}K_{ST}$.
In AFN \cite{AFN}, FSAI \cite{fsai1993,fsai2000} is then used to approximate $R_{TT}^{-1}$:
\begin{equation}
    R_{TT}^{-1}\approx G^T G,
\end{equation}
where $G$ denotes the sparse lower triangular Cholesky factor for the approximate inverse of $R_{TT}$. Finally, the AFN preconditioner $M$ takes the following form:
\begin{equation}
    M =      
    \begin{bmatrix}
        L & 0 \\
        K_{TS}L^{-\top} & G^{-1}
    \end{bmatrix}
    \begin{bmatrix}
        L^{\top} & L^{-1}K_{ST} \\
        0 & G^{-\top}
    \end{bmatrix},
\end{equation}
where $L$ is the Cholesky factor of $K_{SS}$.
{
For FSAI-type preconditioners, specifying the nonzero pattern of the triangular factor $G$ when preconditioning $R_{TT}$ usually relies on the sparsity pattern for $R_{TT}$.
In Gaussian processes, $R_{TT}$ is unknown a priori and computing the entire matrix $R_{TT}$ can be quite costly for large scale data.
Since $R_{TT}=\RSsig(T,T)$, without computing $R_{TT}$ exactly, the analysis in Section \ref{sec:theory} presents a characterization of the dominant entries in $R_{TT}$, which can be used to efficiently specify the nonzero pattern of the sparse factor $G$, similar to Section \ref{sub:app-Approximation}.
We perform experiments below to illustrate the effectiveness.
}

To evaluate the performance of preconditioners, we compare the following methods:
\begin{itemize}
    \item Method 1: Solve $Kz=b$ using unpreconditioned CG.
    \item Method 2: Solve $Kz=b$ using preconditioned CG where the preconditioner is the AFN preconditioner \cite{AFN} using a \emph{random} nonzero pattern for the triangular factor $G$. The number of nonzeros per row does not exceed 10\% of the column size.
    \item Method 3 (proposed):
Solve $Kz=b$ using preconditioned CG where the preconditioner is the AFN preconditioner \cite{AFN} using a nonzero pattern for $G$ corresponding to the geometric condition $\norm{x_i-x_j}\leq \delta$ (with $i\leq j$ for the lower triangular structure), inspired by Theorem \ref{thm:lowerbound} in Section \ref{sub:Case II} for characterizing large entries in $\abs{\RSsig(x,y)}$.
\end{itemize}

The experiment setup is below.
$X$ is either synthetic data (samples from the standard normal distribution) or real world data in $\mathbb{R}^d$.
The real world data we use is the California Housing data set subsampled to 5000 points randomly.
The real world data set is standardized such that it has zero mean and unit variance.
The choice of bandwidth follows \cite{caputo2002}.
It is chosen as the value at the 2nd percentile of all the pairwise distances $\dist{x_i,x_j}$ ($i\neq j$) ordered increasingly.
The rank for the skeleton approximation is chosen as $r=0.2N$.
For the proposed method, we choose $\delta=2\rho$.
For CG, the maximum number of iterations is set to be 1000 and the stopping tolerance for the (absolute) residual norm is set to be  $10^{-5}$.
The results are shown in Tables \ref{tab:precond1} -- \ref{tab:precond2}.
It can be seen that Method 3 achieves the best accuracy with only a small number of iterations.
FSAI with a random nonzero pattern (Method 2) for the triangular factor $G$ yields poor performance.
We remark that, if the bandwidth $\rho$ is relatively large and the kernel matrix is more numerically low-rank, it is more efficient to use the low-rank preconditioners based on the Sherman-Morrison-Woodbury formula.

\begin{table}
\caption{Preconditioning test for synthetic data \texttt{randn(1000,3)}.}
\label{tab:precond1}
\begin{center}
\begin{tabular}{c|c|c|c}
\hline
Method & \# iterations & $\frac{\norm{z-\hat{z}}}{\norm{z}}$ & $\norm{K\hat{z}-b}$\\
\hline 
Method 1: CG & 1000 & 1.16E-1 & 1.77E-5 \\
Method 2: random-precond-CG (FSAI nnz: $9.51\%$) & 1000 & 2.09E+02 & 1.00E-2 \\
Method 3: geometric-precond-CG (FSAI nnz: $6.95\%$) & 22 & 3.35E-2 & 9.22E-6\\
\hline 
\end{tabular}
\end{center}
\end{table}

\begin{table}
\caption{Preconditioning test for subsampled California Housing data ($N=5000,d=8$).}
\label{tab:precond2}
\begin{center}
\begin{tabular}{c|c|c|c}
\hline
Method & \# iterations & $\frac{\norm{z-\hat{z}}}{\norm{z}}$ & $\norm{K\hat{z}-b}$\\
\hline 
Method 1: CG & 1000 & 1.04E-1 & 8.80E-5 \\
Method 2: random-precond-CG (FSAI nnz: $9.50\%$) & 1000 & 1.24E+1 & 4.36E-2 \\
Method 3: geometric-precond-CG (FSAI nnz: $7.00\%$) & 8 & 3.52E-3 & 5.45E-6\\
\hline 
\end{tabular}
\end{center}
\end{table}

\subsection{Non-Gaussian covariance}
\label{sub:nonGaussian}
\cdf{The discussion so far is focused on the Gaussian covariance in \eqref{eq:cov} but can be translated to other covariance functions.
In this section, we present preliminary empirical results for the exponential covariance $\ksig(x,y)=\exp(-\frac{\norm{x-y}}{\rho})$.
The goal is to show that the idea makes sense for the exponential covariance as well, even though the indicator is \emph{not} specifically designed for the exponential covariance.

We first consider the setup in Figure \ref{fig:GP} but for the exponential covariance.
The indicator is directly from Section \ref{sub:post} by simply replacing the Gaussian covariance by the exponential covariance.
For Gaussian process regression, the learned hyperparameters are $\rho=0.0973812538296844$ and $\sigma^2=0.4632273115010307$.
The result is presented in Figure \ref{fig:GP-exp}.
We see that though the result is not as accurate as the Gaussian case (since the indicator is \emph{not} adapted to the exponential covariance), it still characterizes the uncertainty reasonably to some extent.

We now present two simple examples to compare the posterior covariance indicator to the true posterior covariance, where uniform and non-uniform observations are both tested, similar to Section \ref{sub:post}.
The plots are shown in Figure \ref{fig:est-exp}.
We see that the indicator roughly characterizes the behavior of the true posterior covariance.
There is one noticeable difference from the Gaussian case.
It can be seen that the green region (where the value of $\abs{\RSsig(x,y)}$ is relatively large) is ``sharper'' than the Gaussian case.
This is expected since the exponential covariance is less smooth than the Gaussian covariance.
Refined analysis to address this issue will be studied in future work.
}

\begin{figure}
    \centering
    \includegraphics[scale=0.55]{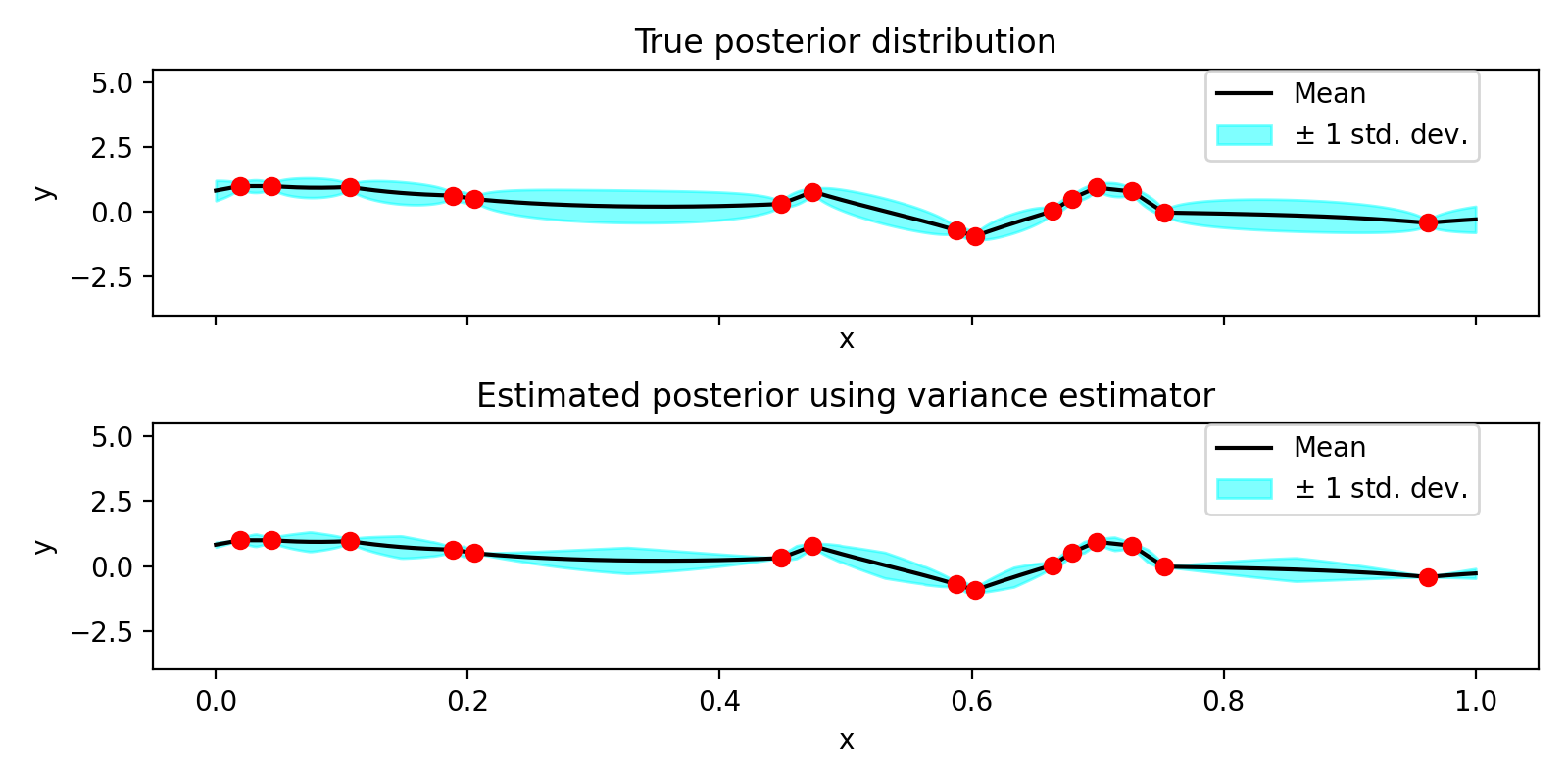}
    \caption{{Posterior regression curve from Gaussian process with exponential covariance. Shaded uncertainty: Top: true standard deviation; Bottom: indicator $\sqrt{\mathcal{V}}$ from Section \ref{sub:post}}}
    \label{fig:GP-exp}
\end{figure}

\begin{figure}
    \centering
    \includegraphics[scale=1.2]{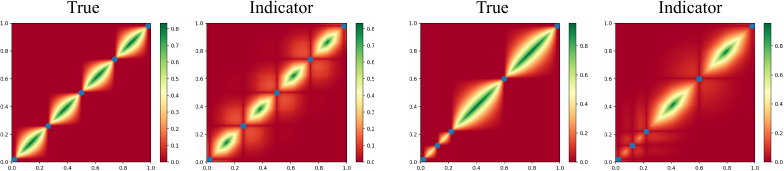}
    \caption{Posterior covariance vs the indicator for exponential prior covariance $\ksig(x,y)=\exp(-\frac{\lVert x-y\rVert}{\rho})$.}
    \label{fig:est-exp}
\end{figure}

\cdf{
\subsection{Noisy observations}
In this experiment, we consider noisy observations with various noise levels 
$$\tau=0.01,0.05,0.1,0.5,1.$$
Namely, the observed values are $f(x)+\epsilon$, with $\epsilon\sim\mathcal{N}(0,\tau^2)$.
The observation set $S$ in this experiment is the same as in Figure \ref{fig:GP}.
We use the same posterior variance indicator as in Figure \ref{fig:GP}.
The results are shown in Figures \ref{fig:GPnoise001} to \ref{fig:GPnoise1} with increasing noise level. 
It can be seen that, in every case, the estimated uncertainty is similar to the true uncertainty.
Hence the proposed indicators are able to capture the behavior of the true uncertainty for noisy input, even though the noise level is not the major focus of the paper.
}

\begin{figure}[htbp] 
    \centering 
    \includegraphics[scale=.7]{./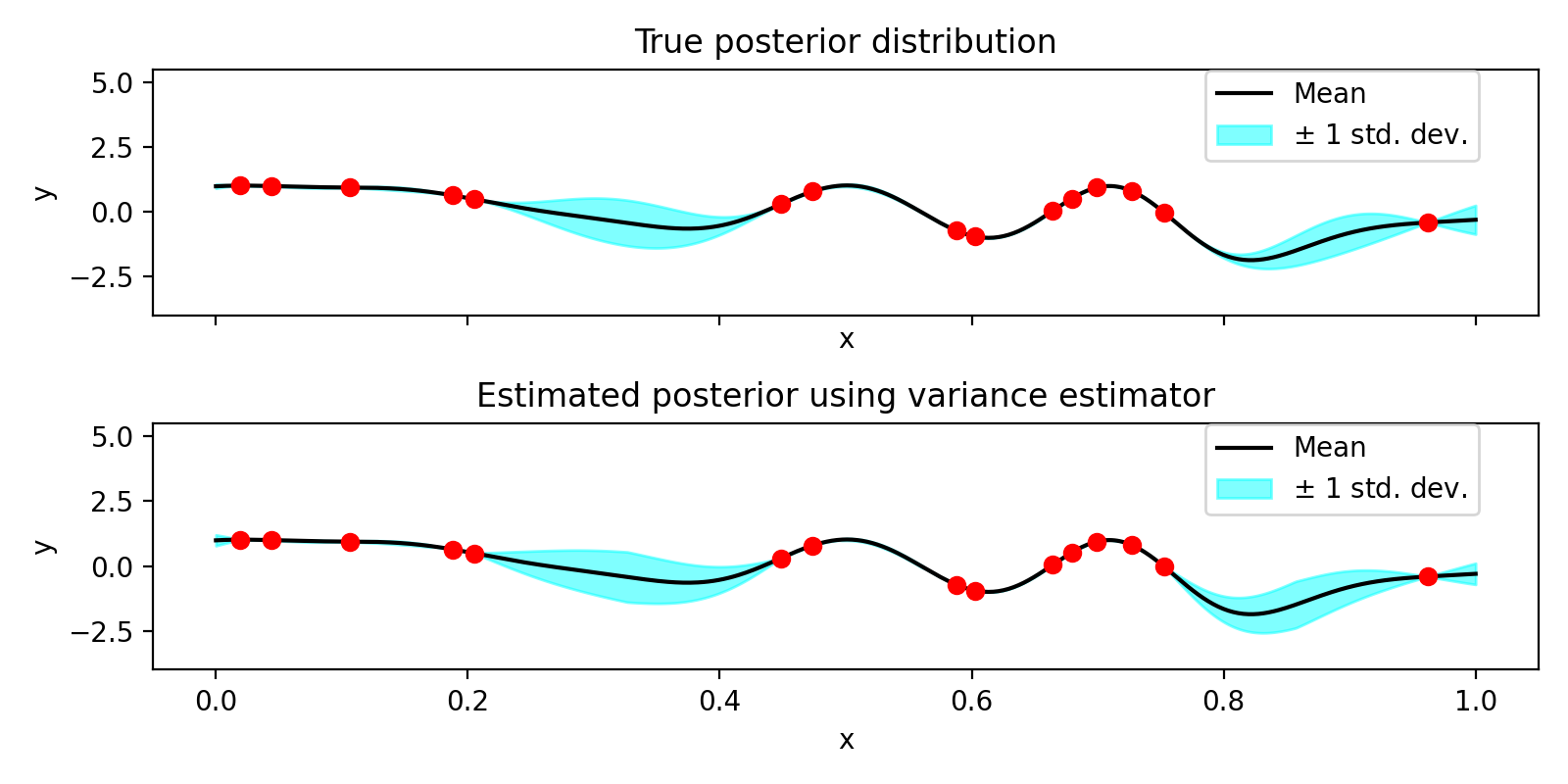}
    \caption{GP regression plots with noisy data  (noise level $\tau=0.01$)}
    \label{fig:GPnoise001}
\end{figure}
\begin{figure}[htbp] 
    \centering 
    \includegraphics[scale=.7]{./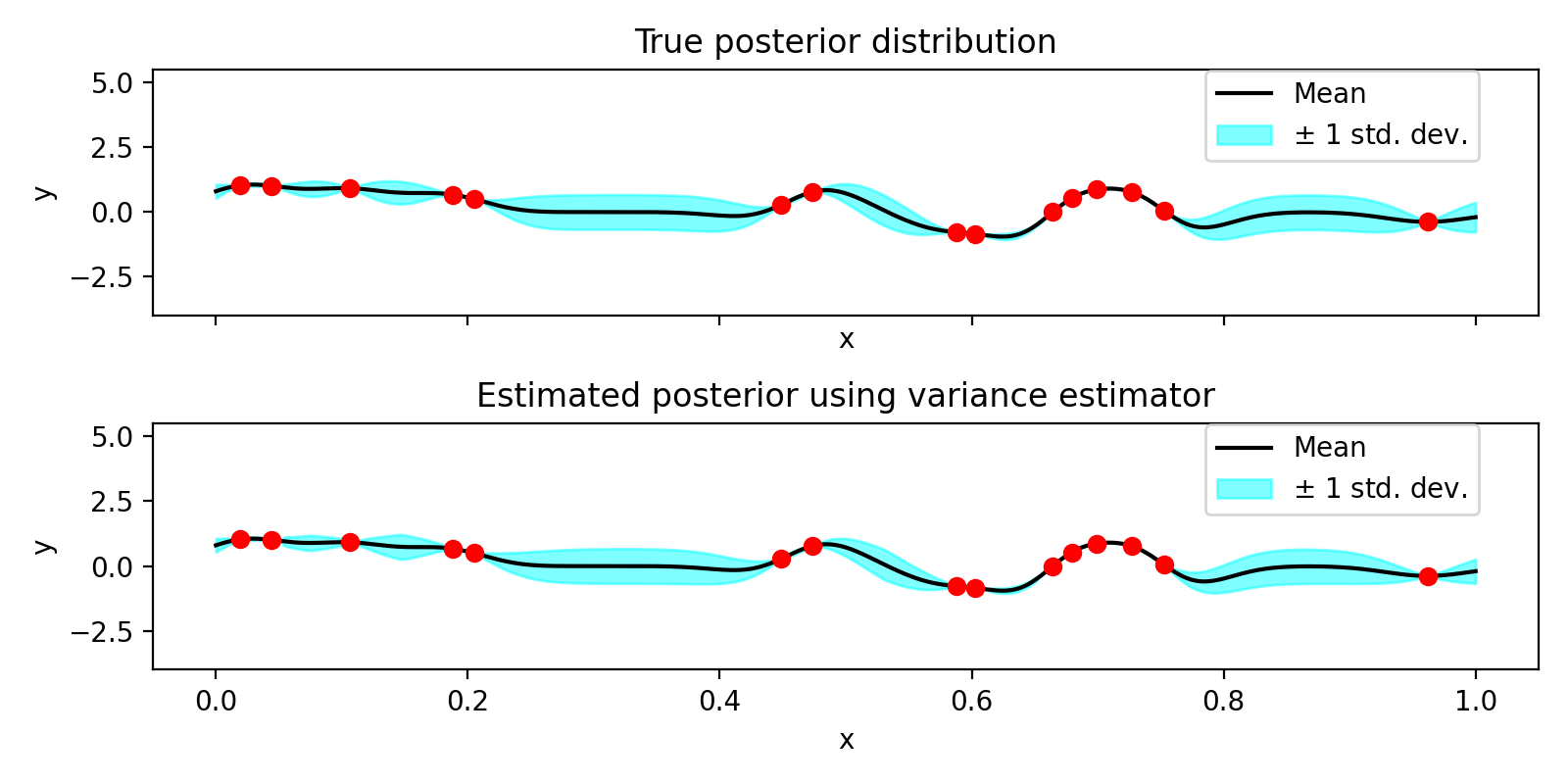}
    \caption{GP regression plots with noisy data  (noise level $\tau=0.05$)}
    \label{fig:GPnoise005}
\end{figure}
\begin{figure}[htbp] 
    \centering 
    \includegraphics[scale=.7]{./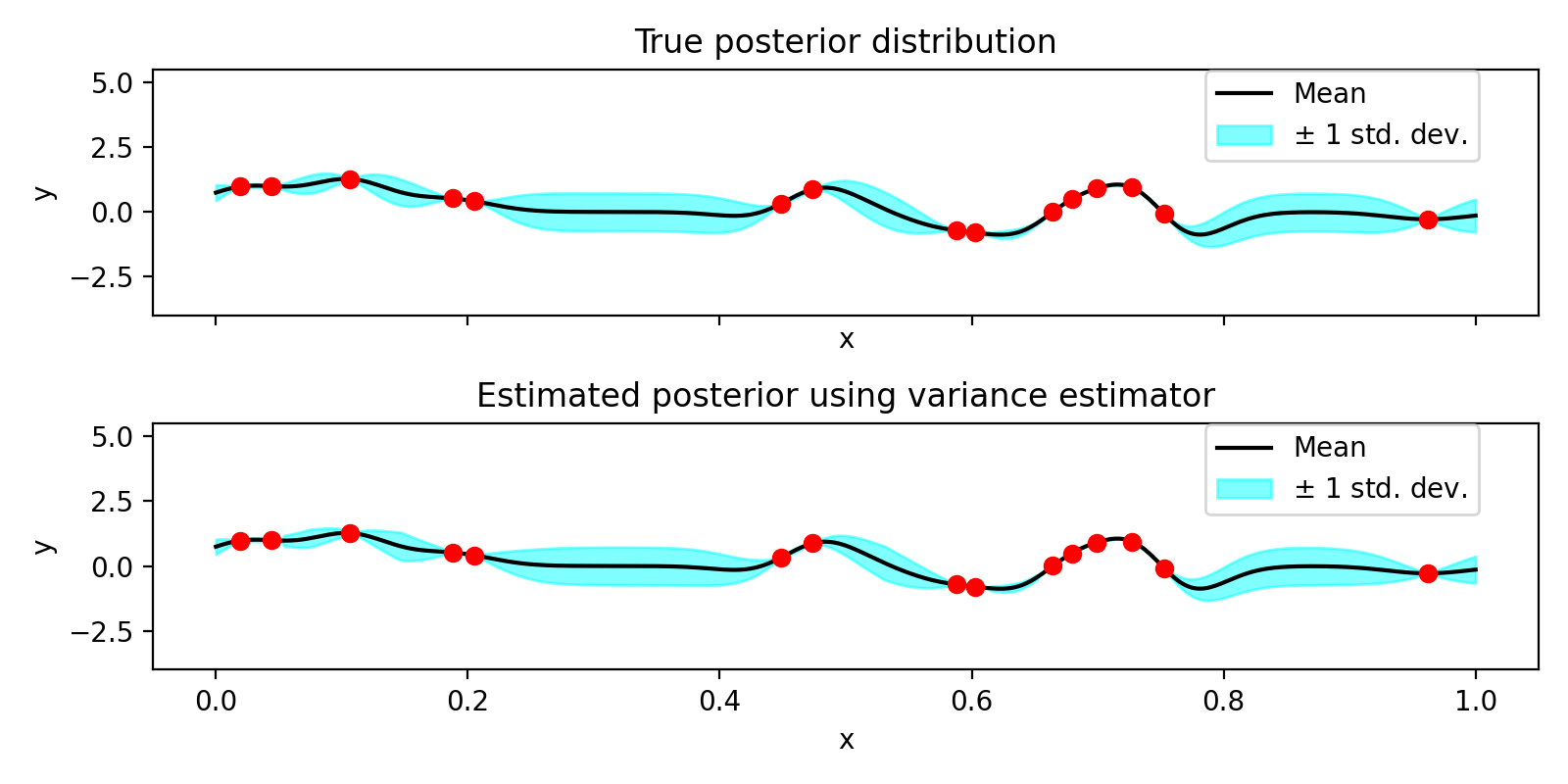}
    \caption{GP regression plots with noisy data (noise level $\tau=0.1$)}
    \label{fig:GPnoise01}
\end{figure}
\begin{figure}[htbp] 
    \centering 
    \includegraphics[scale=.7]{./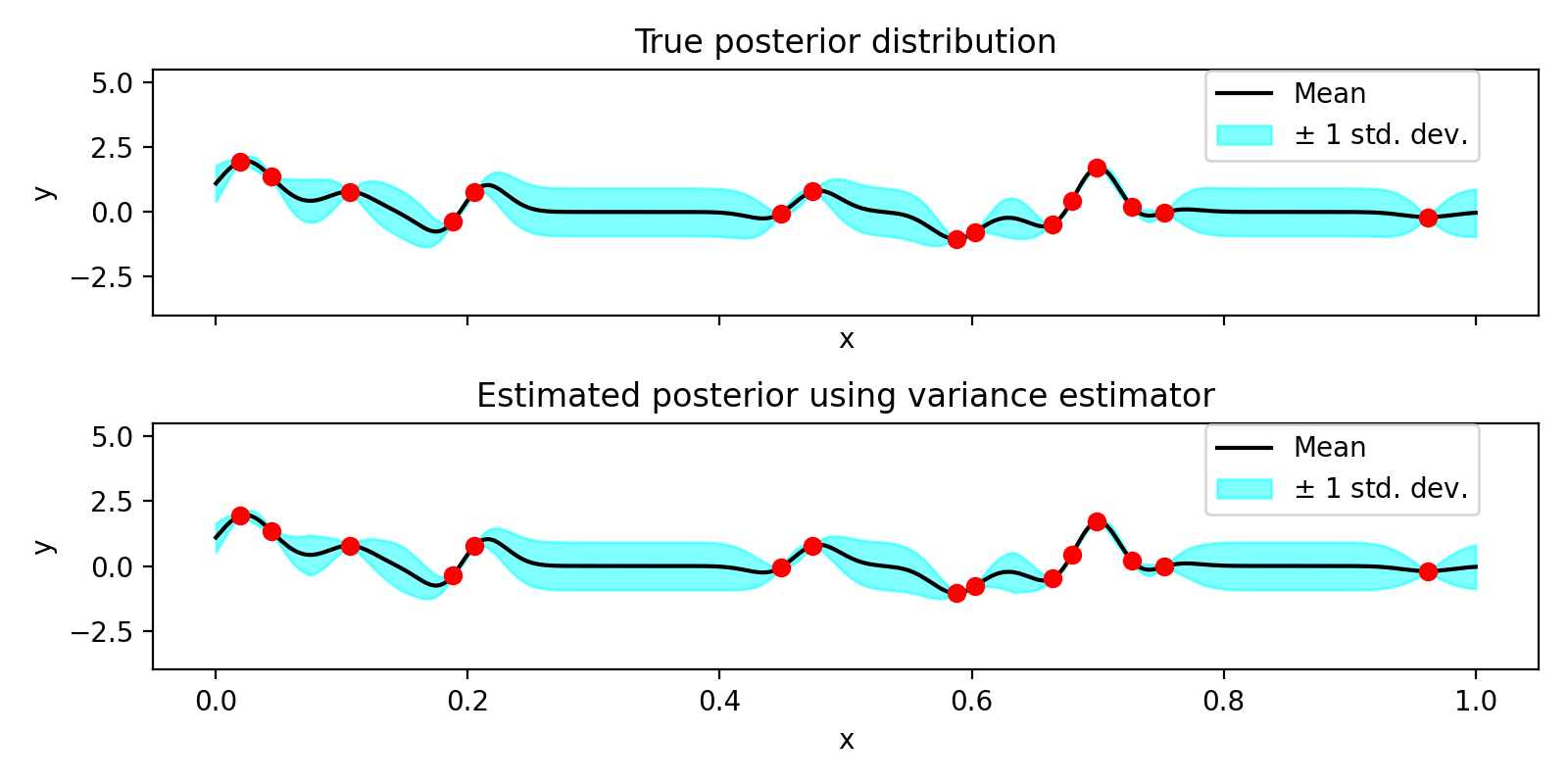}
    \caption{GP regression plots with noisy data  (noise level $\tau=0.5$)}
    \label{fig:GPnoise05}
\end{figure}
\begin{figure}[htbp] 
    \centering 
    \includegraphics[scale=.7]{./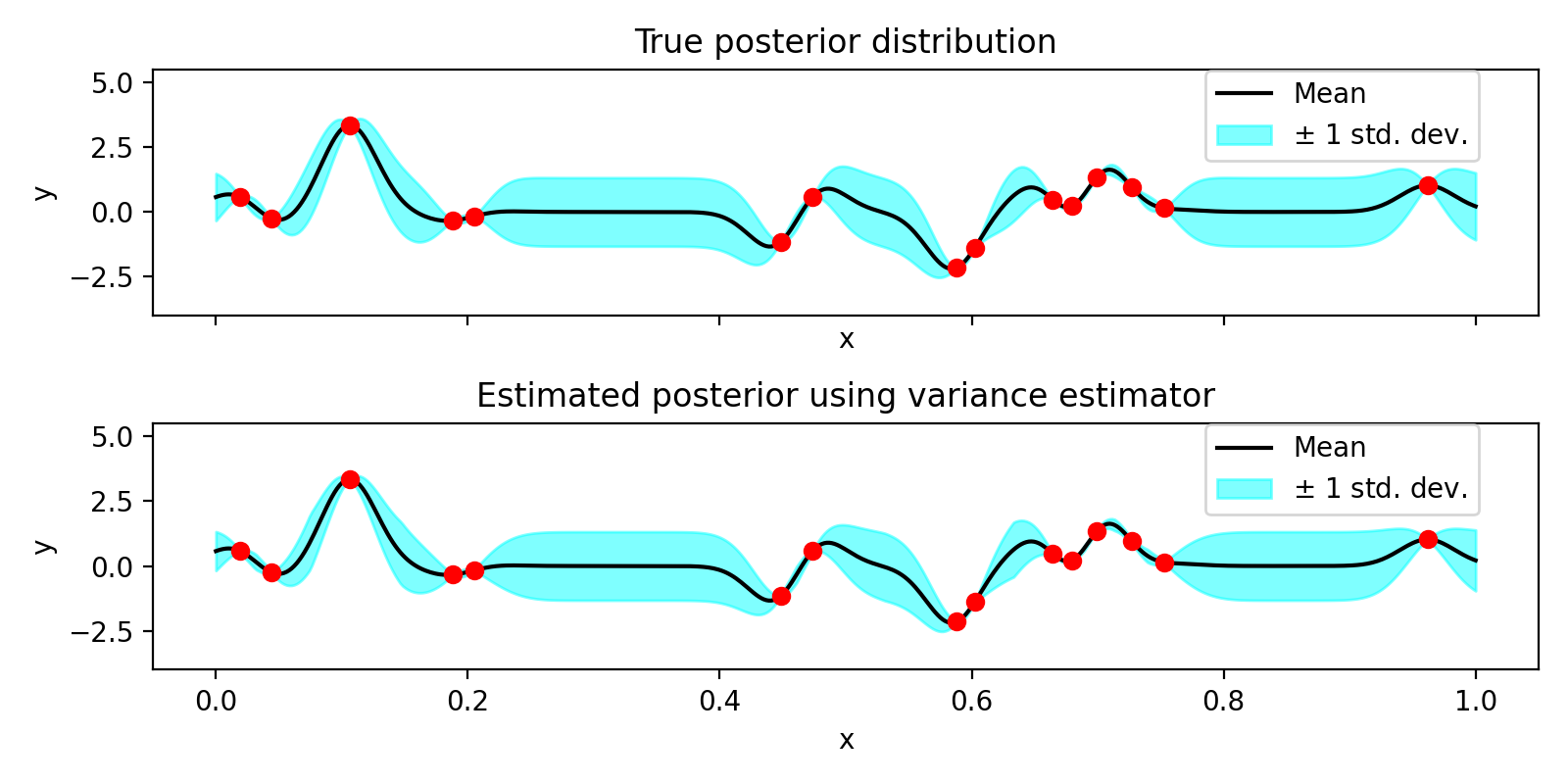}
    \caption{GP regression plots with noisy data  (noise level $\tau=1$)}
    \label{fig:GPnoise1}
\end{figure}

\section{Conclusion} 
\label{sec:conclusion}
We presented a detailed study on the posterior covariance function with an emphasis on the impact of the bandwidth and the observation data.
The result illustrates how the magnitude of the posterior covariance function at $(x,y)\in\Omega\times\Omega$ depends on the bandwidth parameter $\rho$ in the Gaussian kernel in \eqref{eq:cov}, the distance between the two points $\dist{x,y}$, and the closeness of $x,y$ to the set of observation points.
The theoretical results are accompanied by numerical demonstrations.
Inspired by the theoretical understanding and the a posteriori error estimation in the adaptive finite element method, practical indicators are presented to estimate the absolute posterior covariance function efficiently \emph{without} computing the matrix inverse in the definition in \eqref{eq:R}.
Applications to kernel matrix approximation, uncertainty quantification, and preconditioning are also discussed in numerical experiments.
Based on the current findings, we plan to study other covariance kernels and the case with noisy observations ($\tau>0$) in the future.
Another future direction is to study the use of the low-rank plus sparse representations and the preconditioning techniques to make the existing uncertainty quantification algorithms more scalable.

\bibliography{main}

\begin{thebibliography}{10}

\bibitem{ainsworth2011confusion}
{\sc M.~Ainsworth and T.~Vejchodsk{\'y}}, {\em Fully computable robust a
  posteriori error bounds for singularly perturbed reaction--diffusion
  problems}, Numerische Mathematik, 119 (2011), pp.~219--243.

\bibitem{10027498}
{\sc Y.~Alnaser, J.~Langer, and M.~Stoll}, {\em Accelerating kernel ridge
  regression with conjugate gradient method for large-scale data using fpga
  high-level synthesis}, in 2022 IEEE/ACM International Workshop on
  Heterogeneous High-performance Reconfigurable Computing (H2RC), 2022,
  pp.~28--36.

\bibitem{bernardi2000}
{\sc C.~Bernardi and R.~Verf{\"u}rth}, {\em Adaptive finite element methods for
  elliptic equations with non-smooth coefficients}, Numerische Mathematik, 85
  (2000), pp.~579--608.

\bibitem{bishopbook}
{\sc C.~M. Bishop}, {\em Pattern Recognition and Machine Learning}, Springer,
  2006.

\bibitem{bayes2010CG}
{\sc E.~Brochu, T.~Brochu, and N.~De~Freitas}, {\em A bayesian interactive
  optimization approach to procedural animation design}, in Proceedings of the
  2010 ACM SIGGRAPH/Eurographics Symposium on Computer Animation, 2010,
  pp.~103--112.

\bibitem{RBFbook}
{\sc M.~D. Buhmann}, {\em Radial basis functions: theory and implementations},
  vol.~12, Cambridge university press, 2003.

\bibitem{localL2}
{\sc D.~Cai and Z.~Cai}, {\em A hybrid a posteriori error estimator for
  conforming finite element approximations}, Computer Methods in Applied
  Mechanics and Engineering, 339 (2018), pp.~320 -- 340.

\bibitem{cai2020equi}
{\sc D.~Cai, Z.~Cai, and S.~Zhang}, {\em Robust equilibrated a posteriori error
  estimator for higher order finite element approximations to diffusion
  problems}, Numerische Mathematik, 144 (2020), pp.~1--21.

\bibitem{ddblock}
{\sc D.~Cai, E.~Chow, and Y.~Xi}, {\em Data-driven linear complexity low-rank
  approximation of general kernel matrices: A geometric approach}, Numerical
  Linear Algebra with Applications, 30 (2023), p.~e2519.

\bibitem{anchornet}
{\sc D.~Cai, J.~Nagy, and Y.~Xi}, {\em Fast deterministic approximation of
  symmetric indefinite kernel matrices with high dimensional datasets}, SIAM
  Journal on Matrix Analysis and Applications, 43 (2022), pp.~1003--1028.

\bibitem{caputo2002}
{\sc B.~Caputo, K.~Sim, F.~Furesjo, and A.~Smola}, {\em {Appearance-based
  object recognition using SVMs: which kernel should I use?}}, in Proc of NIPS
  workshop on Statistical methods for computational experiments in visual
  processing and computer vision, Whistler, vol.~2002, 2002.

\bibitem{bayes1995review}
{\sc K.~Chaloner and I.~Verdinelli}, {\em Bayesian experimental design: A
  review}, Statistical science,  (1995), pp.~273--304.

\bibitem{fsai2000}
{\sc E.~Chow}, {\em A priori sparsity patterns for parallel sparse approximate
  inverse preconditioners}, SIAM Journal on Scientific Computing, 21 (2000),
  pp.~1804--1822.

\bibitem{nys2005}
{\sc P.~Drineas and M.~W. Mahoney}, {\em {On the Nystr{\"o}m method for
  approximating a Gram matrix for improved kernel-based learning}}, journal of
  machine learning research, 6 (2005), pp.~2153--2175.

\bibitem{rbf2015fornberg}
{\sc B.~Fornberg and N.~Flyer}, {\em Solving pdes with radial basis functions},
  Acta Numerica, 24 (2015), pp.~215--258.

\bibitem{rbf2013fornberg}
{\sc B.~Fornberg, E.~Lehto, and C.~Powell}, {\em Stable calculation of
  gaussian-based rbf-fd stencils}, Computers \& Mathematics with Applications,
  65 (2013), pp.~627--637.

\bibitem{bayesbook2023}
{\sc R.~Garnett}, {\em {Bayesian Optimization}}, Cambridge University Press,
  2023.

\bibitem{pseudo1997}
{\sc S.~Goreinov, E.~Tyrtyshnikov, and N.~Zamarashkin}, {\em A theory of
  pseudoskeleton approximations}, Linear Algebra Appl., 261 (1997), pp.~1--21.

\bibitem{laGP2015}
{\sc R.~B. Gramacy and D.~W. Apley}, {\em Local gaussian process approximation
  for large computer experiments}, Journal of Computational and Graphical
  Statistics, 24 (2015), pp.~561--578.

\bibitem{stein1993bayes}
{\sc M.~S. Handcock and M.~L. Stein}, {\em A bayesian analysis of kriging},
  Technometrics, 35 (1993), pp.~403--410.

\bibitem{huang2023highp}
{\sc H.~Huang, T.~Xu, Y.~Xi, and E.~Chow}, {\em {HiGP: A high-performance
  Python package for Gaussian Process}}.
\newblock \url{https://github.com/huanghua1994/HiGP}, 2025.
\newblock Accessed: 1/28/2025.

\bibitem{kansa1992}
{\sc E.~Kansa and R.~Carlson}, {\em Improved accuracy of multiquadric
  interpolation using variable shape parameters}, Computers \& Mathematics with
  Applications, 24 (1992), pp.~99--120.

\bibitem{vecchia2021general}
{\sc M.~Katzfuss and J.~Guinness}, {\em {A General Framework for Vecchia
  Approximations of Gaussian Processes}}, Statistical Science, 36 (2021),
  pp.~124 -- 141.

\bibitem{fsai1993}
{\sc L.~Y. Kolotilina and A.~Y. Yeremin}, {\em {Factorized sparse approximate
  inverse preconditionings I. Theory}}, SIAM Journal on Matrix Analysis and
  Applications, 14 (1993), pp.~45--58.

\bibitem{krause2008}
{\sc A.~Krause, A.~Singh, and C.~Guestrin}, {\em Near-optimal sensor placements
  in gaussian processes: Theory, efficient algorithms and empirical studies.},
  Journal of Machine Learning Research, 9 (2008).

\bibitem{nys2012}
{\sc S.~Kumar, M.~Mohri, and A.~Talwalkar}, {\em {Sampling methods for the
  Nystr{\"o}m method}}, Journal of Machine Learning Research, 13 (2012),
  pp.~981--1006.

\bibitem{ladeveze1983}
{\sc P.~Ladeveze and D.~Leguillon}, {\em Error estimate procedure in the finite
  element method and applications}, SIAM Journal on Numerical Analysis, 20
  (1983), pp.~485--509.

\bibitem{10.1145/3589335.3651456}
{\sc J.~Lu, S.~Zhao, W.~Ma, H.~Shao, X.~Hu, Y.~Xi, and C.~Yang}, {\em
  Uncertainty-aware pre-trained foundation models for patient risk prediction
  via gaussian process}, in Companion Proceedings of the ACM on Web Conference
  2024, WWW '24, New York, NY, USA, 2024, Association for Computing Machinery,
  p.~1162–1165.

\bibitem{matern1960thesis}
{\sc B.~Mat{\'e}rn}, {\em {Spatial variation: Stachastic models and their
  application to some problems in forest surveys and other sampling
  investigations}}, PhD thesis, Stockholm University, 1960.

\bibitem{nys2017}
{\sc C.~Musco and C.~Musco}, {\em {Recursive sampling for the Nystr{\"o}m
  method}}, in Advances in Neural Information Processing Systems, 2017,
  pp.~3833--3845.

\bibitem{stein2012book}
{\sc M.~L. Stein}, {\em Interpolation of spatial data: some theory for
  kriging}, Springer Science \& Business Media, 2012.

\bibitem{stein2004vecchia}
{\sc M.~L. Stein, Z.~Chi, and L.~J. Welty}, {\em Approximating likelihoods for
  large spatial data sets}, Journal of the Royal Statistical Society Series B:
  Statistical Methodology, 66 (2004), pp.~275--296.

\bibitem{Tyrtyshnikov1996}
{\sc E.~Tyrtyshnikov}, {\em Mosaic-skeleton approximations}, CALCOLO, 33
  (1996), pp.~47--57.

\bibitem{vecchia1988}
{\sc A.~V. Vecchia}, {\em Estimation and model identification for continuous
  spatial processes}, Journal of the Royal Statistical Society Series B:
  Statistical Methodology, 50 (1988), pp.~297--312.

\bibitem{verf1994}
{\sc R.~Verf{\"u}rth}, {\em A posteriori error estimation and adaptive
  mesh-refinement techniques}, Journal of Computational and Applied
  Mathematics, 50 (1994), pp.~67 -- 83.

\bibitem{verf2013book}
{\sc R.~Verf{\"u}rth}, {\em A Posteriori Error Estimation Techniques for Finite
  Element Methods}, Numerical Mathematics and Scientific Computation, OUP
  Oxford, 2013.

\bibitem{additive}
{\sc T.~Wagner, T.~Xu, F.~Nestler, Y.~Xi, and M.~Stoll}, {\em Preconditioned
  additive gaussian processes with fourier acceleration}, preprint,  (2025).

\bibitem{gp2006book}
{\sc C.~K. Williams and C.~E. Rasmussen}, {\em Gaussian processes for machine
  learning}, vol.~2, MIT press Cambridge, MA, 2006.

\bibitem{nys2001}
{\sc C.~K. Williams and M.~Seeger}, {\em {Using the Nystr{\"o}m method to speed
  up kernel machines}}, in Advances in Neural Information Processing Systems,
  2001, pp.~682--688.

\bibitem{XU2022102956}
{\sc T.~Xu, V.~Kalantzis, R.~Li, Y.~Xi, G.~Dillon, and Y.~Saad}, {\em
  pargemslr: A parallel multilevel schur complement low-rank preconditioning
  and solution package for general sparse matrices}, Parallel Computing, 113
  (2022), p.~102956.

\bibitem{nys2010kmeans}
{\sc K.~Zhang and J.~T. Kwok}, {\em {Clustered Nystr{\"o}m method for large
  scale manifold learning and dimension reduction}}, IEEE Transactions on
  Neural Networks, 21 (2010), pp.~1576--1587.

\bibitem{nys2008kmeans}
{\sc K.~Zhang, I.~W. Tsang, and J.~T. Kwok}, {\em {Improved Nystr{\"o}m
  low-rank approximation and error analysis}}, in Proceedings of the 25th
  international conference on Machine learning, ACM, 2008, pp.~1232--1239.

\bibitem{AFN}
{\sc S.~Zhao, T.~Xu, H.~Huang, E.~Chow, and Y.~Xi}, {\em An adaptive factorized
  nystr{\"o}m preconditioner for regularized kernel matrices}, SIAM Journal on
  Scientific Computing, 46 (2024), pp.~A2351--A2376.

\bibitem{ZZ1987}
{\sc O.~Zienkiewicz and J.~Zhu}, {\em A simple error estimator and adaptive
  procedure for practical engineerng analysis}, International Journal for
  Numerical Methods in Engineering, 24 (1987), pp.~337--357.

\bibitem{ZZ1992}
{\sc O.~Zienkiewicz and J.~Zhu}, {\em The superconvergent patch recovery and a
  posteriori error estimates. part 1: The recovery technique}, International
  Journal for Numerical Methods in Engineering, 33 (1992), pp.~1331--1364.

\end{thebibliography}
\bibliographystyle{siamplain}
\end{document}